%% file: main.tex
\documentclass{article}
\usepackage[margin=1in]{geometry}
\input{packages}

\usepackage[round,authoryear]{natbib}
\title{A Banach-Space Theory of Markovian Halpern Iteration for Non-Expansive Maps}
\author{Ege~C.~Kaya, Arda~Fazla, M.~Berk~Sahin, Abolfazl~Hashemi\thanks{The authors are with the Elmore Family School of Electrical and Computer Engineering, Purdue University, West Lafayette, IN 47907, USA.}}
\date{}

\begin{document}
\maketitle

\begin{abstract}
We study stochastic approximation of fixed points of a non-expansive operator when the oracle samples originate from a continuing Markovian trajectory. A direct block-minibatch implementation of Halpern iteration attains an expected last-iterate residual of order $O(\log N/N)$, but accrues a substantive complexity of $\tilde O(\epsilon^{-5})$ Markovian samples. We therefore introduce a variance-reduced Markovian PAGE-Halpern method whose refresh and same-state difference blocks are analyzed through the Poisson equation. In Hilbert spaces, the cocoercivity of $I-T$ results in an $O(\epsilon^{-3})$ sample complexity. Our main result extends this construction to a general finite-dimensional Banach space. A displacement-level Halpern bound replaces the Hilbert-space potential and yields $\tilde O(\epsilon^{-3})$ sample complexity in the original non-expansiveness norm. We also establish a high-probability guarantee with the same leading accuracy dependence by measuring the estimator in an auxiliary smooth norm. Non-smooth sup and block-sup geometries are covered through norm smoothing.
\end{abstract}

\section{Introduction}

Many stochastic approximation (SA) algorithms can be expressed as fixed-point iterations for an operator that can be only observed through noisy samples \citep{robbins1951stochastic, kushner2003stochastic, borkar2008stochastic}. Contractive operators, in particular, already admit a well-developed finite-iteration theory because every exact update is guaranteed to decrease the distance to the unique fixed point \citep{srikant2019finite, qu2020finite, chen2020finite}. Non-expansive operators, however, present different challenges. Namely, the fixed point is not necessarily unique, there is generally no strict contraction drift, and a sampling error can persist without being geometrically attenuated.

Halpern iteration \citep{halpern1967fixed} provides a natural deterministic method in this regime. Given an anchor $x_0$ and a non-expansive operator $T$, the canonical Halpern recursion is given by
\begin{equation}
x_n = \frac{1}{n+1} x_0 + \frac{n}{n+1}T x_{n-1}.
\end{equation}
The method has an $O(1/N)$ last-iterate fixed-point residual in Hilbert spaces, and the initial choice of the anchor can be used to select a specific fixed point when the solution set is not a singleton \citep{wittmann1992approximation, lieder2020convergence}. These properties make it attractive for non-expansive stochastic fixed-point problems, including average-reward reinforcement learning (RL) problems that are inherently non-expansive but not contractive in their natural geometry \citep{tsitsiklis1999average, abounadi2001learning}.

The present paper considers the setting where the samples are generated by one continuing Markov chain. The oracle error need not have zero conditional mean at the start of a Markovian block, and successive blocks remain dependent through the state at their common boundary. We are therefore in need of an analysis that preserves the online trajectory and accounts explicitly for both the martingale component of the Markovian error and its conditional bias at each block boundary.

The additional challenge is that ordinary minibatching estimates every operator value independently. In an inexact Halpern bound, this forces the $n$th estimation error to be of order $n^{-2}$, or equivalently, its mean-square size to be of order $n^{-4}$. Once the Markovian bias is controlled, the variance of a $k_n$-sample block average decreases only at rate $1/k_n$. Thus, the resulting blocks must have in the order of $n^4$ samples, and this results in an overall complexity of $\tilde O(\epsilon^{-5})$. To improve this rate, variance reduction methods such as PAGE \citep{li2021page} can be employed when a single Markov state can be used to evaluate the oracle at two query points. The estimator can then track the change in the operator rather than reconstructing its value at every iteration.

In Hilbert spaces, this idea can be coupled with the cocoercivity of $F = I - T$ \citep{cai2022stochastic}. The same argument, however, is unavailable in the sup and block-sup norms that arise frequently in RL \citep{abounadi2001learning, bellemare2023distributional}. Our approach, in this case, is to move the recursive estimator back from $F$ to $T$ and control the increments of the Halpern displacement directly. This gives a Banach-space argument in the norm in which the operator is known to be non-expansive, extending the theory to geometries where an inner-product structure is not available.

\textbf{Our contributions are as follows.}
\begin{enumerate}
\item We prove perturbation-level and displacement-level bounds for inexact Halpern iteration. The first uses the magnitude of each oracle error. The second uses increments of the scaled errors and is the reduction needed for recursive variance estimation.
\item We analyze a Markovian block baseline under affine second-moment growth and trajectory stability. It has expected last-iterate residual $O(\log N/N)$ and sample complexity $\tilde O(\epsilon^{-5})$.
\item We construct a PAGE-Halpern method from refresh and same-state difference blocks along one uninterrupted Markov trajectory. A Poisson-equation decomposition gives $O(\epsilon^{-3})$ sample complexity in Hilbert space without restarting the chain or imposing stationarity at block boundaries.
\item We extend the PAGE construction to finite-dimensional Banach spaces. The resulting displacement argument gives $\tilde O(\epsilon^{-3})$ sample complexity, including sup and block-sup non-expansiveness geometries with explicit norm-comparison constants.
\item We obtain a high-probability version of the Banach result. An auxiliary smooth norm supplies the martingale concentration needed by the recursive estimator. Standard smooth approximations of the sup norm introduce only logarithmic dimension dependence \citep{juditsky2023largedeviationsvectorvaluedmartingales, chen2020finite}.
\end{enumerate}

\section{Related work}

The classical averaged iterations of \citet{mann1953mean} and \citet{krasnoselskii1955two} form the basis of Krasnosel'ski\u{i}--Mann methods for non-expansive fixed-point problems. A modern account of their relation to monotone operators and splitting methods is given by \citet{bauschke2020correction}. Halpern iteration introduces an anchor into this recursion \citep{halpern1967fixed}. The strong convergence of this iteration to the fixed point selected by the anchor was established under standard weight conditions by \citet{wittmann1992approximation}. For the canonical weights in Hilbert space, \citet{lieder2020convergence} proved a tight $O(1/N)$ residual bound. \citet{diakonikolas2020halpern} used this rate and a residual potential to obtain near-optimal methods for monotone inclusions and variational inequalities. We use the same anchored last-iterate structure, but develop perturbation estimates that remain valid for stochastic operator evaluations.

For stochastic non-expansive maps, \citet{bravo2024stochastic} analyze Krasnosel'ski\u{i}--Mann iterations under martingale noise. \citet{bravo2026stochastic} study stochastic Halpern iteration in normed spaces under i.i.d.\ oracle access. Their explicit minibatch construction has $O(\epsilon^{-5})$ complexity, while their lower bound has order $\Omega(\epsilon^{-3})$ for a broad single-point linear-span oracle model. However, the same-sample multi-point access used by PAGE is stronger than their lower-bound oracle. The lower bound is therefore an important benchmark, but does not constitute a matched minimax statement for our information model. Recent stochastic Halpern analyses also study independent minibatches and abstract asymptotic regularity \citep{iiduka2026mini,pischke2026asymptotic}. \citet{cai2022stochastic} combine stochastic Halpern iteration with recursive variance reduction for monotone inclusions under independent sampling.

The analysis of stochastic approximation with Markovian observations has a long history. The ODE and Poisson-equation approaches are developed systematically by \citet{kushner2003stochastic},  \citet{borkar2008stochastic}, and \citet{benveniste2012adaptive}. Modern finite-time results cover linear stochastic approximation and temporal-difference learning \citep{srikant2019finite,durmus2021stability}, contractive operators in arbitrary norms \citep{chen2020finite,chen2024lyapunov,qu2020finite}, and unbounded Markovian noise under Lyapunov growth conditions \citep{haque2025stochastic}. In the non-expansive setting, \citet{blaser2026asymptotic} develop asymptotic and finite-sample guarantees for stochastic Krasnosel'ski\u{i}--Mann recursion with Markovian noise. Our blocks also preserve one continuing trajectory. Their boundaries are predictable stopping times, and the Poisson equation controls both the conditional bias and the second moment of each estimator.

Recursive variance reduction was developed for stochastic optimization through methods including SVRG, SARAH, SPIDER, and PAGE \citep{johnson2013accelerating,nguyen2017sarah,fang2018spider,li2021page}. These methods exploit same-sample differences whose second moments decrease as successive query points become close. Related constructions have been developed for monotone variational inequalities \citep{alacaoglu2022stochastic} and for stochastic Halpern iteration under independent sampling \citep{cai2022stochastic}. Variance reduction from a single Markov trajectory has also been studied for smooth nonconvex optimization and temporal-difference learning \citep{xureanalysis, sun2026learning}. Our estimator uses the same recursive principle, while its refresh and difference samples are consecutive segments of one Markov chain.

The Banach-space and high-probability parts require a separate geometric argument. Smooth Lyapunov functions based on generalized Moreau envelopes give finite-sample stochastic approximation bounds in arbitrary contraction norms \citep{chen2020finite,chen2024lyapunov}. Concentration for vector-valued martingales in smooth Banach spaces is developed by \citet{pinelis1994optimum} and \citet{juditsky2023largedeviationsvectorvaluedmartingales}. Our expected Banach result does not smooth the fixed-point residual, but instead replaces the Hilbert cocoercive potential with a displacement estimate in the original non-expansiveness norm. Smoothing enters only when the recursive estimator must be controlled with high probability, where we use the unified estimator analysis of \citet{luo2026unified}.

Average-reward policy evaluation is a natural continuing-trajectory application. Classical temporal-difference methods estimate differential values from one endless Markov trajectory \citep{tsitsiklis1999average}, while average-reward control leads to relative-value and $Q$-learning recursions whose Bellman maps are naturally studied modulo additive constants \citep{abounadi2001learning,wan2021learning}. Concurrently, \citet{lee2026learning} combine anchored value iteration with recursive Bellman-difference estimation for finite weakly communicating average-reward MDPs and obtain a high-probability $\tilde O(\epsilon^{-2})$ transition complexity for finding an $\epsilon$-optimal policy. Their construction uses return times to extract state-conditional i.i.d.\ transition samples from one trajectory and exploits the exact Bellman decomposition. Our analysis instead controls consecutive Markovian blocks through the Poisson equation and applies to general non-expansive operators in finite-dimensional Banach spaces. Distributional RL replaces scalar values by return distributions and uses supremum-over-state metrics for its Bellman operators \citep{bellemare2017distributional,rowland2018analysis,bellemare2023distributional}. Recent finite-iteration results study asynchronous categorical TD under Markovian sampling in statewise supremum geometries \citep{kaya2026finite}, while quotient-categorical representations yield non-expansive average-reward distributional operators in a coordinate Cram\'er geometry \citep{kaya2026quotient}. These settings motivate a last-iterate theory that does not replace the application norm by an inner-product norm.

\section{Preliminaries}\label{sec:prelim}

Let $(\bR^d, \norm{\cdot})$ be a finite-dimensional normed space. We use $\norm{\cdot}_2$ for the Euclidean norm and denote the norm-equivalence constant belonging to $\norm{\cdot}$ as
\begin{equation}
\mu_{\norm{\cdot}} := \sup_{z \ne 0}\frac{\norm{z}}{\norm{z}_2} < \infty.
\end{equation}
Thus, $\norm{z} \le \mu_{\norm{\cdot}} \norm{z}_2$ for all $z$. Let $(Y_t)_{t \ge 0}$ be a Markov chain on a finite state space $\cY$, with transition matrix $P$ and stationary distribution $\pi$. Given a measurable map $H : \R^d \times \cY \to \R^d$, define the mean operator
\begin{equation}
Tx := \sum_{y \in \cY} \pi(y) H(x, y).
\end{equation}
We seek an approximate fixed point of $T$, under the following assumptions.

\begin{assumption}[Non-expansive mean operator with fixed points]\label{ass:nonexpansive}
The operator $T$ is non-expansive:
\begin{equation}
\norm{Tx - Tz} \le \norm{x - z}, \qquad \text{for all } x, z \in \R^d,
\end{equation}
and $\Fix(T) \ne \emptyset$.
\end{assumption}
\begin{assumption}[Finite ergodic Markov chain]\label{ass:ergodic}
$(Y_t)_{t \ge 0}$ is irreducible and aperiodic. Since $\cY$ is finite, there are constants $C_{\mathrm{mix}} \ge 1$ and $\rho \in (0, 1)$ such that
\begin{equation}
\sup_{y \in \cY} \norm{P^t(y, \cdot) - \pi}_{\mathrm{TV}} \le C_{\mathrm{mix}} \rho^t, \qquad t \ge 0.
\end{equation}
\end{assumption}
\begin{assumption}[Pointwise finite oracle variance]\label{ass:oracle-variance}
For every $x \in \R^d$
\begin{equation}
\sum_{y \in \cY} \pi(y) \norm{H(x, y) - Tx}^2_2 < \infty.
\end{equation}
\end{assumption}
Assumption~\ref{ass:oracle-variance} is the Markovian analogue of the pointwise finite-variance oracle model used for stochastic Halpern iteration in \citet[Assumption 1]{bravo2026stochastic}. Related finite-sample stochastic approximation analyses impose the same kind of second-moment control, often in a growth form: see, for instance, \citet[Assumption 2.2]{chen2020finite} and \citet[Assumption 4]{chen2024lyapunov}. The explicit complexity theorems in the following sections state the additional growth or finite-horizon bounds needed to turn this oracle model into exact numerical constants.
\begin{assumption}[Iterate stability]\label{ass:oracle-stable}
For the stochastic approximation algorithm under consideration, the generated iterates are stable around the anchor: for $q \in \{1, 2\}$, there exist constants $R_q < \infty$ such that
\begin{equation}
\sup_{n \ge 0} \E \bigl[\norm{ x_n - x_0}^q \bigr] \le R^q_q.
\end{equation}
\end{assumption}
Assumption~\ref{ass:oracle-stable} is the standard stochastic-approximation stability condition. See, for instance, the stability theory of \citet{borkar2008stochastic} and \citet{borkar2000ode}. In the Markovian nonexpansive setting, \citet{blaser2026asymptotic} identify stability through the boundedness of the iterates. The non-expansive fixed-point residual analyses of \citet[Condition (H4)]{bravo2024stochastic} use the same type of stability condition for stochastic Krasnosel'ski\u{i}--Mann iterations, while \citet[Lemma 1]{bravo2026stochastic} use the closely related boundedness of the operator images $T x_n - x_0$. Lemma~\ref{lem:stability-constants} in the next section shows that this operator-image bound is a consequence of Assumptions~\ref{ass:oracle-stable} and \ref{ass:nonexpansive}.

\subsection{Markovian-block Halpern method}

Fix an anchor $x_0 \in \R^d$, a step size sequence $(\beta_n) \subset (0, 1)$, block lengths $(k_n) \subset \N$, and optional burn-in lengths $(b_n) \subset \N \cup \{0\}$. At iteration $n$, starting from the current Markov state $Y_{\tau_n}$, run the chain for $b_n$ unused steps, and then collect the next $k_n$ states. Let
\begin{equation}
\tau_1 := 0, \qquad \tau_n := \sum_{i=1}^{n-1} (b_i + k_i) \quad \text{for } n \ge 2,
\end{equation}
so that $\tau_n$ indicates the time at which the $n$th block begins. Further, let
\begin{equation}
G_n := \frac{1}{k_n} \sum_{j=1}^{k_n} H(x_{n-1}, Y_{\tau_n + b_n + j}).
\end{equation}
The Markovian-block Halpern update is
\begin{equation}\label{eq:recursion}
x_n = (1- \beta_n) x_0 + \beta_n G_n, \qquad n \ge 1.
\end{equation}
Equivalently,
\begin{equation}\label{eq:recursion-with-error}
x_n = (1 - \beta_n) x_0 + \beta_n (T x_{n-1} + U_n), \qquad U_n:= G_n - T x_{n-1}.
\end{equation}
We can interpret $U_n$ as the block estimation error, which includes both the error due to sampling and the Markovian bias from using a dependent, nonstationary segment of the chain. The rest of the analysis separates into two questions: What residual guarantees follow from \eqref{eq:recursion-with-error} once $\E \bigl[\norm{U_n} \bigr]$ is known, and how can we control $\E \bigl[\norm{U_n} \bigr]$ for Markovian blocks?

\begin{remark}[A sufficient condition for stability]
Assumption~\ref{ass:oracle-stable} holds automatically if, for $q \in \{1, 2\}$, there exist constants $K_q < \infty$ such that
\begin{equation}
\sup_{x, y} \norm{H(x, y) - x_0}^q \le K^q_q.
\end{equation}
Indeed, 
\begin{equation}
x_n-x_0 = \beta_n(G_n - x_0)
\end{equation}
so Jensen's inequality gives
\begin{equation}
\norm{x_n-x_0}^q \le \frac{1}{k_n} \sum_{j=1}^{k_n} \norm{H(x_{n-1}, Y_{\tau_n + b_n + j})-x_0}^q \le K^q_q.
\end{equation}
Thus, Assumption~\ref{ass:oracle-stable} holds with $R_q = K_q$.
\end{remark}

\section{Inexact Halpern residual bounds}
This section analyzes the inexact Halpern residual independent of the sampling model. Namely, the perturbation sequence is left in a general form, so that it may later be specified to come from i.i.d.\ sampling, Markovian blocks, or from a variance-reduction method such as PAGE \citep{li2021page}. We present two perturbation reductions. The first controls the fixed-point residual using the perturbation levels $\E\bigl[\norm{U_n}\bigr]$. This type of control is enough for the naive Markovian-block method, where the block lengths are chosen large enough to ensure that the perturbation magnitudes are small. The second instead controls the residual by the increments of the scaled perturbations $\Delta_n = \beta_n U_n - \beta_{n-1}U_{n-1}$. This is the form used in PAGE, whose recursive estimator is designed to make successive estimation errors change slowly over time, even when the raw error level is not small enough to use the first bound directly.

\subsection{A perturbation-level residual bound}

For $1 \le i \le n$, let
\begin{equation}
B^n_i := \prod_{j= i}^n \beta_j,
\end{equation}
with the empty-product convention $B^n_i := 1$ when $0 \le n <i$, and define $\beta_0 := \sigma_0 := 0$, where $\sigma_n := \E \bigl[\norm{U_n} \bigr]$.

\begin{lemma}[Stability constants]\label{lem:stability-constants}
If Assumption~\ref{ass:oracle-stable} holds, then
\begin{equation}
\sup_{n \ge 0} \E\bigl[\norm{T x_n - x_0}\bigr] \le \kappa_1, \qquad \sup_{n \ge 0} \E \bigl[\norm{T x_n - x_0}^2\bigr] \le \kappa^2_2.
\end{equation}
More precisely, if $D_0 := \dist(x_0, \Fix(T))$, then we may take
\begin{equation}
\kappa_q := R_q + 2 D_0, \qquad q \in \{1, 2\}.
\end{equation}
\end{lemma}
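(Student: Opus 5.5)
The plan is to compare $Tx_n$ with a fixed point and use non-expansiveness of $T$ (Assumption~\ref{ass:nonexpansive}). Since $\Fix(T)$ is nonempty and closed (it is the zero set of the continuous map $I-T$), and we work in finite dimension, the distance $D_0=\dist(x_0,\Fix(T))$ is attained: there is $x^\star\in\Fix(T)$ with $\norm{x_0-x^\star}=D_0$. If one prefers to avoid attainment, take $x^\star$ with $\norm{x_0-x^\star}\le D_0+\eta$ and let $\eta\downarrow 0$ at the end.

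The key pointwise estimate uses $Tx^\star=x^\star$ and non-expansiveness:
\begin{equation}
\norm{Tx_n-x_0}\le \norm{Tx_n-Tx^\star}+\norm{x^\star-x_0}\le \norm{x_n-x^\star}+D_0\le \norm{x_n-x_0}+2D_0 .
\end{equation}
This holds almost surely for every $n\ge 0$. For $q=1$, take expectations and apply Assumption~\ref{ass:oracle-stable}, which gives $\E[\norm{Tx_n-x_0}]\le R_1+2D_0=\kappa_1$.

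For $q=2$, I would avoid squaring the pointwise bound, since the elementary inequality $(a+b)^2\le 2a^2+2b^2$ would lose a factor. Instead, view $\E[\norm{\cdot}^2]^{1/2}$ as the $L^2$ norm of the real random variable $\norm{Tx_n-x_0}$. The pointwise bound $\norm{Tx_n-x_0}\le \norm{x_n-x_0}+2D_0$ with nonnegative sides, combined with Minkowski's inequality in $L^2$, yields
\begin{equation}
\E\bigl[\norm{Tx_n-x_0}^2\bigr]^{1/2}\le \E\bigl[\norm{x_n-x_0}^2\bigr]^{1/2}+2D_0\le R_2+2D_0=\kappa_2 .
\end{equation}
Squaring this gives the claim, and taking the supremum over $n$ finishes the proof.

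There is no real obstacle here. The only points needing care are the existence of a nearest fixed point (or the $\eta$-approximation argument) and using Minkowski rather than the crude square bound, so that the constant comes out exactly as $R_2+2D_0$.
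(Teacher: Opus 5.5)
Your proposal is correct and follows the paper's proof essentially verbatim. Both arguments pick a nearest fixed point $x^\star$, which exists because $\Fix(T)$ is closed in finite dimension, and derive the pointwise bound $\norm{Tx_n-x_0}\le\norm{x_n-x_0}+2D_0$ from non-expansiveness; they then take expectations for $q=1$ and use Minkowski's inequality in $L^2$ for $q=2$, so that $\kappa_2=R_2+2D_0$ exactly.
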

\begin{proof}
Choose $x^* \in \Fix(T)$ such that $\norm{x^* - x_0} = D_0$. The choice exists because $\Fix(T)$ is closed in finite dimension. By non-expansiveness and $Tx^* = x^*$,
\begin{equation}
\norm{Tx_n - x_0} \le \norm{T x_n - T x^*} + \norm{x^* - x_0} \le \norm{x_n - x^*} + D_0 \le \norm{x_n - x_0} + 2 D_0.
\end{equation}
Taking expectations gives the $q=1$ claim with $\kappa_1 = R_1 + 2D_0$. Taking second moments in the same inequality and then applying Minkowski's inequality gives the $q=2$ claim with $\kappa_2 = R_2 + 2 D_0$.
\end{proof}

\begin{proposition}[Perturbation-level inexact Halpern residual]\label{prop:pert-inexact-halpern}
Suppose Assumptions~\ref{ass:nonexpansive} and \ref{ass:oracle-stable} hold, and let $(\beta_n)$ be nondecreasing. Let $(x_n)$ be generated by \eqref{eq:recursion-with-error}. Then, for all $n \ge 1$,
\begin{equation}
\E \bigl[\norm{x_n - T x_n} \bigr] \le \kappa_1 (1 - \beta_n) + \sum_{i =1}^n B^n_i \bigl( \kappa_1 (\beta_i - \beta_{i-1}) + \beta_i\sigma_i + \beta_{i-1}\sigma_{i-1}\bigr) + \beta_n \sigma_n.
\end{equation}
\end{proposition}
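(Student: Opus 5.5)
The plan is to split the residual into a term controlled by the anchor, a term controlled by the distance between consecutive iterates, and the current error. Then we bound the distance between consecutive iterates by a linear recursion that can be unrolled. Throughout, write $d_n := \norm{x_n - x_{n-1}}$ and use Lemma~\ref{lem:stability-constants} to bound $\E\bigl[\norm{Tx_m - x_0}\bigr] \le \kappa_1$ for every $m \ge 0$.

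\textbf{Step 1 (residual decomposition).} By \eqref{eq:recursion-with-error},
\begin{equation*}
x_n - Tx_n = (1-\beta_n)(x_0 - Tx_n) + \beta_n (Tx_{n-1} - Tx_n) + \beta_n U_n .
\end{equation*}
Apply the triangle inequality and non-expansiveness (Assumption~\ref{ass:nonexpansive}) to the middle term. This gives $\norm{x_n - Tx_n} \le (1-\beta_n)\norm{Tx_n - x_0} + \beta_n d_n + \beta_n \norm{U_n}$. Taking expectations yields
\begin{equation*}
\E\bigl[\norm{x_n-Tx_n}\bigr] \le \kappa_1(1-\beta_n) + \beta_n \E[d_n] + \beta_n\sigma_n .
\end{equation*}
It therefore remains to show that $\beta_n \E[d_n]$ is at most the sum in the statement.

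\textbf{Step 2 (recursion for $d_n$).} For $n \ge 2$, subtracting consecutive instances of \eqref{eq:recursion-with-error} and regrouping the anchor terms gives
\begin{equation*}
x_n - x_{n-1} = (\beta_n - \beta_{n-1})(Tx_{n-1} - x_0) + \beta_{n-1}(Tx_{n-1} - Tx_{n-2}) + \beta_n U_n - \beta_{n-1}U_{n-1}.
\end{equation*}
Since $(\beta_n)$ is nondecreasing, $\beta_n - \beta_{n-1} \ge 0$, so this coefficient passes through the norm without an absolute value. Non-expansiveness bounds the second norm by $\beta_{n-1} d_{n-1}$. Taking expectations gives
\begin{equation*}
\E[d_n] \le \beta_{n-1}\E[d_{n-1}] + \kappa_1(\beta_n - \beta_{n-1}) + \beta_n\sigma_n + \beta_{n-1}\sigma_{n-1}.
\end{equation*}
For $n=1$ we have $x_1 - x_0 = \beta_1(Tx_0 - x_0 + U_1)$. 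Hence $\E[d_1] \le \kappa_1\beta_1 + \beta_1\sigma_1$, which is the same inequality with the conventions $\beta_0 = \sigma_0 = 0$ and $d_0 := 0$.

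\textbf{Step 3 (unrolling).} Iterating the linear recursion from $i=1$ up to $n$ gives
\begin{equation*}
\E[d_n] \le \sum_{i=1}^n B^{n-1}_{i}\bigl(\kappa_1(\beta_i-\beta_{i-1}) + \beta_i\sigma_i + \beta_{i-1}\sigma_{i-1}\bigr),
\end{equation*}
where $B^{n-1}_n = 1$ by the empty-product convention. Multiplying by $\beta_n$ turns $B^{n-1}_i$ into $B^n_i$. Substituting into Step 1 then gives exactly the claimed bound.

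The only delicate point is the regrouping in Step 2. The anchor contributions $(1-\beta_n)x_0 - (1-\beta_{n-1})x_0$ must be combined with $\beta_n Tx_{n-1}$ so that the increment of the step size multiplies the bounded quantity $Tx_{n-1} - x_0$. This is what allows Lemma~\ref{lem:stability-constants} to be used. The monotonicity of $(\beta_n)$ is used precisely at this step, and the base case $n=1$ must be checked against the conventions $\beta_0 = \sigma_0 = 0$.
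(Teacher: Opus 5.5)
Your proposal is correct and follows essentially the paper's argument: the same residual decomposition, the same regrouped increment identity for $x_n - x_{n-1}$, and the same unrolling via $\beta_n B^{n-1}_i = B^n_i$. The differences are cosmetic. You take expectations before unrolling, whereas the paper unrolls a pathwise majorant $P_n$ and then takes expectations. You also verify the base case $n=1$ explicitly, which the paper leaves to the conventions $\beta_0=\sigma_0=0$.
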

\begin{proof}
Let $r_n := \norm{T x_n - x_0}$, $P_0 := 0$, and, for $n \ge 1$,
\begin{equation}
P_n := r_{n-1}(\beta_n - \beta_{n-1}) + \beta_{n-1}P_{n-1} + \beta_n \norm{U_n} + \beta_{n-1} \norm{U_{n-1}}.
\end{equation}
We have
\begin{equation}
x_n - x_{n-1} = (\beta_n - \beta_{n-1})(Tx_{n-1} - x_0) + \beta_{n-1}(Tx_{n-1}-Tx_{n-2}) + \beta_n U_n - \beta_{n-1}U_{n-1}.
\end{equation}
Hence, using the non-expansiveness in Assumption~\ref{ass:nonexpansive},
\begin{equation}\label{eq:pn}
\begin{aligned}
\norm{x_n - x_{n-1}} &\le (\beta_n - \beta_{n-1}) r_{n-1}+ \beta_{n-1} \norm{x_{n-1} - x_{n-2}} + \beta_n \norm{U_n} + \beta_{n-1}\norm{U_{n-1}}. \\
&= P_n.
\end{aligned}
\end{equation}
Unrolling the recursion for $P_n$, we obtain
\begin{equation}
P_n = \sum_{i=1}^n B^{n-1}_i \bigl(r_{i-1}(\beta_i - \beta_{i-1}) + \beta_i \norm{U_i} + \beta_{i -1}\norm{U_{i-1}}\bigr).
\end{equation}
Now, we have
\begin{equation}
\begin{aligned}
\norm{x_n - T x_n} &\le (1 - \beta_n) \norm{x_0 - T x_n} + \beta_n \norm{T x_{n-1} - T x_n} + \beta_n \norm{U_n} \\
&\le (1 - \beta_n) r_n + \beta_n \norm{T x_{n-1} - Tx_n} + \beta_n \norm{U_n} \\
&\le (1 - \beta_n) r_n + \beta_n \norm{x_{n-1} - x_n} + \beta_n \norm{U_n} \\
&\le (1 - \beta_n) r_n + \beta_n P_n + \beta_n \norm{U_n},
\end{aligned}
\end{equation}
using the definition of $r_n$, non-expansiveness of $T$, and \eqref{eq:pn}. Plugging in the definition of $P_n$ and using the fact that $\beta_n B^{n-1}_i = B^n_i$, and taking expectations,
\begin{equation}
\E \bigl[\norm{x_n - Tx_n} \bigr] \le (1 - \beta_n) \E[r_n] + \sum_{i =1}^n B^n_i \bigl(\E [r_{i-1}](\beta_i - \beta_{i-1}) + \beta_i \E\bigl[ \norm{U_i}\bigr] + \beta_{i-1} \E\bigl[\norm{U_{i-1}}\bigr]\bigr) + \beta_n \E \bigl[ \norm{U_n}\bigr].
\end{equation}
By Assumption~\ref{ass:oracle-stable}, $\sup_{n \ge 0} \E[r_n] = \sup_{n \ge 0} \E\bigl[ \norm{T x_n - x_0}\bigr] \le \kappa_1$. Then,
\begin{equation}
\E \bigl[\norm{x_n - Tx_n} \bigr] \le \kappa_1(1 - \beta_n) + \sum_{i =1}^n B^n_i \bigl(\kappa_1(\beta_i - \beta_{i-1}) + \beta_i \sigma_i + \beta_{i-1} \sigma_{i-1}\bigr) + \beta_n \sigma_n.
\end{equation}
\end{proof}
\begin{lemma}\label{lem:perturbation-level}
Instate the conditions of Proposition~\ref{prop:pert-inexact-halpern}. Let $\beta_n = n / (n+1)$ and $N \in \N$. Then,
\begin{equation}
\E\bigl[\norm{x_N - T x_N} \bigr] \le \frac{\kappa_1(1 + \log(N+1))}{N+1} + \frac{2}{N+1}\sum_{n=1}^N n \sigma_n. 
\end{equation}
Consequently, if $\sum_{n=1}^N n \sigma_n \le C_\sigma(1 + \log(N+1))$, then
\begin{equation}
\E\bigl[\norm{x_N - T x_N}\bigr] \le \frac{(\kappa_1 + 2 C_\sigma)(1 + \log(N+1))}{N+1}.
\end{equation}
\end{lemma}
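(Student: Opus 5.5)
We plug $\beta_n = n/(n+1)$ into Proposition~\ref{prop:pert-inexact-halpern} and evaluate each term explicitly. The key identity is the telescoping product
\begin{equation}
B^N_i = \prod_{j=i}^N \frac{j}{j+1} = \frac{i}{N+1}, \qquad 1 \le i \le N,
\end{equation}
together with $1 - \beta_N = 1/(N+1)$ and $\beta_i - \beta_{i-1} = \frac{i}{i+1} - \frac{i-1}{i} = \frac{1}{i(i+1)}$. The last formula is also valid at $i=1$ because $\beta_0 = 0$.

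\textbf{Step 1: the $\kappa_1$ terms.} The anchor term is $\kappa_1(1-\beta_N) = \kappa_1/(N+1)$. The drift sum is
\begin{equation}
\kappa_1 \sum_{i=1}^N \frac{i}{N+1}\cdot\frac{1}{i(i+1)} = \frac{\kappa_1}{N+1}\sum_{i=1}^N \frac{1}{i+1} \le \frac{\kappa_1 \log(N+1)}{N+1},
\end{equation}
using $\sum_{i=2}^{N+1} 1/i \le \log(N+1)$. Adding the two gives $\kappa_1(1+\log(N+1))/(N+1)$.

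\textbf{Step 2: the $\sigma$ terms.} Write $\beta_i \sigma_i = \frac{i}{i+1}\sigma_i$. Then
\begin{equation}
\sum_{i=1}^N B^N_i \beta_i \sigma_i = \frac{1}{N+1}\sum_{i=1}^N \frac{i^2}{i+1}\sigma_i \le \frac{1}{N+1}\sum_{i=1}^N i\sigma_i .
\end{equation}
For the lagged term, $\sigma_0 = 0$, and reindexing with $m = i-1$ gives
\begin{equation}
\sum_{i=1}^N \frac{i}{N+1}\cdot\frac{i-1}{i}\sigma_{i-1} = \frac{1}{N+1}\sum_{m=1}^{N-1} m\sigma_m .
\end{equation}
The trailing term $\beta_N\sigma_N = \frac{N}{N+1}\sigma_N$ is exactly the $m=N$ term missing from this shifted sum. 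The lagged sum plus the trailing term is therefore at most $\frac{1}{N+1}\sum_{n=1}^N n\sigma_n$. Together with the first $\sigma$-sum, the total is $\frac{2}{N+1}\sum_{n=1}^N n\sigma_n$, which is the first claimed inequality.

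\textbf{Step 3: the corollary.} Substitute $\sum_{n=1}^N n\sigma_n \le C_\sigma(1+\log(N+1))$ into the bound from Step 2 and collect terms.

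The only delicate point is the bookkeeping in Step 2. Absorbing the trailing $\beta_N\sigma_N$ into the shifted sum is what yields the constant $2$ rather than $3$. Everything else is the exact telescoping of $B^N_i$ together with a harmonic-sum estimate.
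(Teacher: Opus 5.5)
Your proposal is correct and follows the paper's proof. You substitute $\beta_n = n/(n+1)$ into Proposition~\ref{prop:pert-inexact-halpern}, use $B^N_i = i/(N+1)$ and $\beta_i-\beta_{i-1} = 1/(i(i+1))$, and bound the harmonic sum by $1+\log(N+1)$. The only cosmetic difference is that the paper collects the coefficient of each $\sigma_i$ and bounds it by $2i/(N+1)$, whereas you bound the direct sum and the shifted-plus-trailing sum separately, each by $\frac{1}{N+1}\sum_{n=1}^N n\sigma_n$; the two are equivalent.
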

\begin{proof}
Proposition~\ref{prop:pert-inexact-halpern} directly gives
\begin{equation}
\E \bigl[\norm{x_N - Tx_N} \bigr] \le \kappa_1(1 - \beta_
N) + \sum_{i =1}^N B^
N_i \bigl(\kappa_1(\beta_i - \beta_{i-1}) + \beta_i \sigma_i + \beta_{i-1} \sigma_{i-1}\bigr) + \beta_N \sigma_N.
\end{equation}
With the choice of the step size sequence, we get
\begin{equation}
\kappa_1(1 - \beta_N) = \kappa_1\left(1 - \frac{N}{N+1}\right) = \frac{\kappa_1}{N+1}.
\end{equation}
Also,
\begin{equation}
B^N_i = \prod_{j=i}^N \frac{j}{j+1} = \frac{i}{N+1}
\end{equation}
by means of a telescoping product. Finally,
\begin{equation}
\beta_i - \beta_{i-1} = \frac{i}{i+1} - \frac{i-1}{i} = \frac{1}{i(i+1)}.
\end{equation}
Therefore, the exact-Halpern part of the bound becomes
\begin{equation}
\frac{\kappa_1}{N+1}+\sum_{i=1}^N B^N_i \kappa_1(\beta_i - \beta_{i-1}) = \frac{\kappa_1}{N+1}+\frac{\kappa_1}{N+1}\sum_{i=1}^N \frac{1}{i+1} = \frac{\kappa_1}{N+1}\sum_{i=0}^N\frac{1}{i+1}.
\end{equation}
Since $\sum_{i=0}^N \frac{1}{i+1} \le 1 + \log(N+1)$, the exact-Halpern terms are bounded by
\begin{equation}
\frac{\kappa_1(1+\log(N+1))}{N+1}.
\end{equation}
We now need to handle the perturbation terms, i.e.,
\begin{equation}
\sum_{i=1}^N B^N_i (\beta_i \sigma_i + \beta_{i-1} \sigma_{i-1}) + \beta_N \sigma_N.
\end{equation}
We separate the two sums and shift the index of the second:
\begin{equation}
\begin{aligned}
\sum_{i=1}^N B^N_i (\beta_i \sigma_i + \beta_{i-1} \sigma_{i-1}) + \beta_N \sigma_N &= \sum_{i=1}^N B^N_i \beta_i \sigma_i + \sum_{i=1}^N B^N_i\beta_{i-1} \sigma_{i-1} + \beta_N \sigma_N \\
& = \sum_{i=1}^{N-1} \beta_i (B^N_i + B^N_{i+1}) \sigma_i + \beta_N (B^N_N + 1) \sigma_N.
\end{aligned}
\end{equation}
Using $B^N_i = i /(N+1)$, each $1 \le i < N$ satisfies
\begin{equation}
\beta_i(B^N_i + B^N_{i+1}) = \frac{i}{i+1} \frac{2i+1}{N+1} \le \frac{2i}{N+1},
\end{equation}
and
\begin{equation}
\beta_N(B^N_N + 1) = \frac{N}{N+1} \left( \frac{N}{N+1} + 1 \right) \le \frac{2N}{N+1}.
\end{equation}
Substituting these into Proposition~\ref{prop:pert-inexact-halpern},
\begin{equation}
\E\bigl[\norm{x_N - T x_N} \bigr] \le \frac{\kappa_1(1 + \log(N+1))}{N+1} + \frac{2}{N+1}\sum_{n=1}^N n \sigma_n.
\end{equation}
\end{proof}

\subsection{A displacement-level residual bound}

The residual bound of the preceding section uses first moments of the perturbations $U_n$. However, variance reduction methods give sharper control of the perturbation increments instead. For instance, the recursive PAGE estimator naturally controls changes in error from one query to the next. The relevant quantity in this section is thus the scaled increment $\Delta_n = \beta_n U_n - \beta_{n-1}U_{n-1}$ rather than $U_n$ itself.

\begin{theorem}[Displacement-level inexact Halpern residual]\label{thm:displacement-level}
Suppose Assumptions~\ref{ass:nonexpansive} and \ref{ass:oracle-stable} hold. Let $(x_n)$ be generated by \eqref{eq:recursion-with-error} with $\beta_n = n/(n+1)$. Let
\begin{equation}
\Delta_n = \beta_n U_n - \beta_{n-1}U_{n-1}, \qquad n \ge 1,
\end{equation}
where $U_0 := 0$. If $\E\bigl[\norm{\Delta_n}\bigr] \le a_n$ and $\E\bigl[\norm{U_N}\bigr] \le r$, then
\begin{equation}
\E\bigl[\norm{x_N - T x_N}\bigr] \le \frac{\kappa_1(1+\log(N+1))}{N+1} + \frac{1}{N+1}\sum_{n=1}^N na_n + r.
\end{equation}
In particular, if $\E\bigl[\norm{\Delta_n}^2\bigr] \le q_n$,
\begin{equation}
\E\bigl[\norm{x_N - T x_N}\bigr] \le \frac{\kappa_1(1+\log(N+1))}{N+1} + \frac{1}{N+1}\sum_{n=1}^N n\sqrt{q_n} + r.
\end{equation}
Consequently, if $q_n \le Q^2/n^2$,
\begin{equation}
\E\bigl[\norm{x_N - T x_N}\bigr] \le \frac{\kappa_1(1+\log(N+1))}{N+1} + Q + r.
\end{equation}
\end{theorem}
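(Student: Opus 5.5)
We follow the proof of Proposition~\ref{prop:pert-inexact-halpern}, with one change: the perturbation increment $\Delta_n$ is kept as a single term instead of being split into $\beta_n U_n$ and $\beta_{n-1}U_{n-1}$. Subtracting consecutive instances of \eqref{eq:recursion-with-error} gives the exact identity
\begin{equation*}
x_n - x_{n-1} = (\beta_n - \beta_{n-1})(Tx_{n-1} - x_0) + \beta_{n-1}(Tx_{n-1} - Tx_{n-2}) + \Delta_n,
\end{equation*}
valid for $n \ge 1$ with $\beta_0 = 0$ and $U_0 = 0$. For $n=1$ this reads $x_1 - x_0 = \beta_1(Tx_0 - x_0) + \beta_1 U_1$, which is consistent because the middle term vanishes. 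Non-expansiveness then yields
\begin{equation*}
\norm{x_n - x_{n-1}} \le (\beta_n - \beta_{n-1}) r_{n-1} + \beta_{n-1}\norm{x_{n-1} - x_{n-2}} + \norm{\Delta_n}, \qquad r_n := \norm{Tx_n - x_0}.
\end{equation*}
Unrolling this recursion as before gives
\begin{equation*}
\norm{x_n - x_{n-1}} \le \sum_{i=1}^n B^{n-1}_i\bigl(r_{i-1}(\beta_i - \beta_{i-1}) + \norm{\Delta_i}\bigr).
\end{equation*}

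Next we bound the residual. From $x_N - Tx_N = (1-\beta_N)(x_0 - Tx_N) + \beta_N(Tx_{N-1} - Tx_N) + \beta_N U_N$ we get
\begin{equation*}
\norm{x_N - Tx_N} \le (1-\beta_N) r_N + \beta_N\norm{x_N - x_{N-1}} + \beta_N \norm{U_N}.
\end{equation*}
Substituting the unrolled displacement bound and using $\beta_N B^{N-1}_i = B^N_i$, we take expectations. Lemma~\ref{lem:stability-constants} gives $\E[r_n] \le \kappa_1$, and by hypothesis $\E\norm{\Delta_i} \le a_i$ and $\E\norm{U_N} \le r$. This produces
\begin{equation*}
\kappa_1(1-\beta_N) + \sum_{i=1}^N B^N_i\bigl(\kappa_1(\beta_i - \beta_{i-1}) + a_i\bigr) + \beta_N r.
\end{equation*}

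The exact-Halpern part is handled verbatim as in Lemma~\ref{lem:perturbation-level}, using $B^N_i = i/(N+1)$ and $\beta_i - \beta_{i-1} = 1/(i(i+1))$; it contributes $\kappa_1(1+\log(N+1))/(N+1)$. The perturbation part is $\sum_i \frac{i}{N+1} a_i$, and $\beta_N r \le r$, which gives the first bound. The constant is $1$ rather than $2$ because $\Delta_i$ is not split.

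The two remaining claims are immediate. By Jensen, $\E\norm{\Delta_n} \le \sqrt{\E\norm{\Delta_n}^2} \le \sqrt{q_n}$, so we may take $a_n = \sqrt{q_n}$. If $q_n \le Q^2/n^2$, then $n\sqrt{q_n} \le Q$, and summing over $n \le N$ and dividing by $N+1$ gives at most $Q$.

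The only delicate point is the bookkeeping at $n=1$, namely the conventions $\beta_0 = 0$, $U_0 = 0$, and $x_{-1}$ irrelevant. With these, the unrolled recursion starts correctly and the coefficient of $\Delta_1$ is $B^N_1$. No step is a real obstacle: the argument is the proof of Proposition~\ref{prop:pert-inexact-halpern} with the triangle inequality on $U_n$ and $U_{n-1}$ removed.
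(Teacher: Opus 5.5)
Your proposal is correct and is essentially the paper's argument. The paper writes the same identity for $d_n = x_n - x_{n-1}$, takes expectations, multiplies by $n$ and telescopes, which is exactly the closed form of your product unrolling since $B^{N-1}_i = i/N$ when $\beta_n = n/(n+1)$; the final residual step, the harmonic-sum bound and the Jensen consequences then coincide, and your version keeps the slightly sharper $\beta_N r$ in place of $r$.
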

\begin{proof}
Let $d_n := x_n - x_{n-1}$. By the Halpern recursion,
\begin{equation}
d_n = (\beta_n - \beta_{n-1})(Tx_{n-1}-x_0)+\beta_{n-1}(T x_{n-1} - Tx_{n-2}) + \Delta_n,
\end{equation}
(with the second term omitted when $n=1$.) By the step size schedule, we also have
\begin{equation}
\beta_n - \beta_{n-1} = \frac{1}{n(n+1)}, \qquad \beta_{n-1}=\frac{n-1}{n}.
\end{equation}
By Assumption~\ref{ass:oracle-stable} and using non-expansiveness,
\begin{equation}
\E\bigl[\norm{d_n}\bigr] \le \frac{\kappa_1}{n(n+1)} + \frac{n-1}{n}\E\bigl[\norm{d_{n-1}}\bigr] + a_n.
\end{equation}
Multiplying both sides by $n$ results in the telescoping recursion
\begin{equation}
n\E\bigl[\norm{d_n}\bigr] \le \frac{\kappa_1}{(n+1)} + (n-1)\E\bigl[\norm{d_{n-1}}\bigr] + na_n.
\end{equation}
Summing both sides from $1$ to $N$:
\begin{equation}\label{eq:displacement}
N \E\bigl[\norm{d_N}\bigr] \le \kappa_1 \sum_{n=1}^N \frac{1}{n+1} + \sum_{n=1}^N n a_n \le \kappa_1\log(N+1) + \sum_{n=1}^N n a_n.
\end{equation}
Now, the final iterate residual is
\begin{equation}
x_N - T x_N = (1-\beta_N)(x_0 - T x_N) + \beta_N(T x_{N-1} - Tx_N) + \beta_N U_N.
\end{equation}
By non-expansiveness and Assumption~\ref{ass:oracle-stable},
\begin{equation}
\E\bigl[\norm{x_N - T x_N}\bigr] \le \frac{\kappa_1}{N+1} + \frac{N}{N+1}\E\bigl[\norm{d_N}\bigr] + r.
\end{equation}
Using \eqref{eq:displacement} and increasing $1+\log(N+1)$ to absorb the extra $\kappa_1/(N+1)$ term yields the first claim. The remaining statements follow by Jensen's inequality.
\end{proof}

We have therefore reduced the analysis to two possible estimates. The perturbation-level bound of Lemma~\ref{lem:perturbation-level} will be used with Markovian block averages, whereas the displacement-level bound of Theorem~\ref{thm:displacement-level} will be used with PAGE.

\section{The Markovian block baseline}\label{sec:baseline}

It is now time to control the inexactness $U_n$ in \eqref{eq:recursion-with-error}. A mixing-time argument would first discard a burn-in segment to make the conditional bias small. Instead, we use the Poisson-equation decomposition of Markovian noise \citep{haque2025stochastic,blaser2026asymptotic}. The conditional bias is then a boundary term of order $k^{-1}$, while the conditional second moment is of order $k^{-1}$. Thus, a block can begin immediately at the current stopping time.

\begin{lemma}[Markovian block error]\label{lem:markov-block-error}
Suppose Assumption~\ref{ass:ergodic} holds. There exists a constant $C_{\mathrm{blk}}$, depending only on the finite-state chain and $\mu_{\norm{\cdot}}$, such that the following holds. Let $\tau$ be a stopping time and let $x$ be an $\cF_\tau$-measurable query point. Define
\begin{equation}
V_H(x) := \sum_{y \in \cY} \pi(y) \norm{H(x,y) - Tx}^2_2.
\end{equation}
Then, for every integer-valued $\cF_\tau$-measurable block length $k \ge 1$,
\begin{equation}
\E\Bigl[\Bigl\lVert\frac{1}{k}\sum_{j=1}^k H(x, Y_{\tau+j}) - Tx\Bigr\rVert \mid \cF_\tau\Bigr] \le \frac{C_{\mathrm{blk}} V_H(x)^{1/2}}{\sqrt{k}}.
\end{equation}
\end{lemma}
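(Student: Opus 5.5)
We prove the bound through the Poisson equation for the centered oracle, with the query point $x$ frozen. By the strong Markov property and $\cF_\tau$-measurability of $x$ and $k$, we may condition on $\cF_\tau$ and treat $x$, $k$ and the starting state $y_0 := Y_\tau$ as deterministic. The chain $(Y_{\tau+j})_{j\ge 0}$ is then a copy of the original chain started at $y_0$.

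\textbf{Poisson solution.} Set $f(y) := H(x,y) - Tx \in \R^d$, so that $\sum_y \pi(y) f(y) = 0$. Define
\begin{equation}
g(y) := \sum_{t \ge 0} P^t f(y).
\end{equation}
This series converges by Assumption~\ref{ass:ergodic}, and it solves $g - Pg = f$. We need a weighted bound on $g$. Write $P^t f(y) = \sum_{y'} (P^t(y,y') - \pi(y')) f(y')$. Since $\cY$ is finite, $\pi_{\min} := \min_y \pi(y) > 0$, so $\norm{f(y')}_2 \le \pi_{\min}^{-1/2} V_H(x)^{1/2}$. Combining this with the mixing bound gives
\begin{equation}
\max_y \norm{g(y)}_2 \le \frac{2 C_{\mathrm{mix}}}{1-\rho}\,\pi_{\min}^{-1/2} V_H(x)^{1/2} =: C_g V_H(x)^{1/2}.
\end{equation}
The same argument gives $\max_y\norm{f(y)}_2 \le \pi_{\min}^{-1/2}V_H(x)^{1/2}$.

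\textbf{Martingale plus boundary decomposition.} For $1\le j\le k$,
\begin{equation}
f(Y_{\tau+j}) = g(Y_{\tau+j}) - Pg(Y_{\tau+j}) = \bigl[g(Y_{\tau+j}) - Pg(Y_{\tau+j-1})\bigr] + \bigl[Pg(Y_{\tau+j-1}) - Pg(Y_{\tau+j})\bigr].
\end{equation}
Summing over $j$ gives
\begin{equation}
\sum_{j=1}^k f(Y_{\tau+j}) = M_k + Pg(Y_\tau) - Pg(Y_{\tau+k}),
\end{equation}
where $M_k = \sum_{j} \xi_j$ and $\xi_j := g(Y_{\tau+j}) - Pg(Y_{\tau+j-1})$ are martingale differences with respect to $\cF_{\tau+j}$.

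\textbf{Bounding the two parts.} By orthogonality of the increments, $\E[\norm{M_k}_2^2\mid\cF_\tau] = \sum_j \E[\norm{\xi_j}_2^2\mid\cF_\tau] \le 4 C_g^2 V_H(x) k$. Jensen's inequality then gives $\E[\norm{M_k}_2\mid \cF_\tau] \le 2C_g V_H(x)^{1/2}\sqrt{k}$. The boundary term satisfies $\norm{Pg(Y_\tau) - Pg(Y_{\tau+k})}_2 \le 2C_g V_H(x)^{1/2}$. Dividing by $k$ and using $1/k \le 1/\sqrt{k}$, the Euclidean error is at most $4C_g V_H(x)^{1/2}/\sqrt{k}$. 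Converting to the target norm through $\norm{\cdot}\le\mu_{\norm{\cdot}}\norm{\cdot}_2$ yields the claim with
\begin{equation}
C_{\mathrm{blk}} = \frac{8\mu_{\norm{\cdot}}C_{\mathrm{mix}}}{(1-\rho)\sqrt{\pi_{\min}}}.
\end{equation}
This constant depends only on the chain and $\mu_{\norm{\cdot}}$, as required.

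\textbf{Main obstacle.} The main obstacle is the weighted-to-uniform step. $V_H(x)$ controls $f$ only in $L^2(\pi)$, while the chain starts at an arbitrary $y_0$ and visits states under non-stationary laws. Finiteness of $\cY$ resolves this through the factor $\pi_{\min}^{-1/2}$. A secondary technical point is justifying the conditioning when $x$ and $k$ are random but $\cF_\tau$-measurable. This follows from the strong Markov property, applied pointwise in $(x,k,y_0)$, together with measurability of the resulting bound.
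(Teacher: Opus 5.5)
Your proof is correct and takes essentially the paper's route. The paper cites its Poisson-block lemma, which contains the same Poisson solution with a $\pi_{\min}^{-1/2}$ uniform bound, the martingale-plus-boundary telescoping, and conditional orthogonality, to get the conditional second-moment bound $C_{\mathrm V}V_H(x)/k$, and then applies conditional Jensen and $\norm{z}\le\mu_{\norm{\cdot}}\norm{z}_2$; you inline that argument, apply Jensen only to the martingale part, and bound the boundary term deterministically, which gives the same $k^{-1/2}$ bound with an explicit $C_{\mathrm{blk}}$.
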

\begin{proof}
Condition on $\cF_\tau$. The query point $x$ and the block length $k$ are then fixed. Let
\begin{equation}
f_x(y) := H(x, y) - Tx.
\end{equation}
By definition, $Tx = \sum_{y \in \cY} \pi(y) H(x, y)$, so $f_x$ has stationary mean zero and
\begin{equation}
\norm{f_x}_{2,\pi}^2 = \sum_{y \in \cY} \pi(y)\norm{f_x(y)}_2^2 = V_H(x).
\end{equation}
By the Poisson block estimate in Lemma~\ref{lem:poisson-block},
\begin{equation}
\E\Bigl[\Bigl\lVert\frac{1}{k}\sum_{j=1}^k f_x(Y_{\tau+j})\Bigr\rVert_2^2 \mid \cF_\tau\Bigr] \le \frac{C_{\mathrm V}V_H(x)}{k}.
\end{equation}
Conditional Jensen's inequality and $\norm{z} \le \mu_{\norm{\cdot}}\norm{z}_2$ give the result after absorbing $\mu_{\norm{\cdot}}\sqrt{C_{\mathrm V}}$ into $C_{\mathrm{blk}}$.
\end{proof}

Although Assumption~\ref{ass:oracle-variance} gives us a qualitative property of the oracle model, in that it requires finite variance at each fixed query point, establishing a finite-sample rate further requires a quantitative way to control the variances at the random query points generated by the algorithm. The affine growth condition that follows, together with trajectory stability, provides such a bound. 

\begin{condition}\label{cond:affine-var}
Every query point $x$ generated up to the target horizon satisfies
\begin{equation}
\sum_{y \in \cY} \pi(y) \norm{H(x, y) - Tx}^2_2 \le \sigma^2_0 + \sigma^2_1 \norm{Tx - x_0}^2.
\end{equation}
\end{condition}

Uniformly bounded variance is a special case that is recovered when $\sigma_1 = 0$. This distinction is the same one present in the i.i.d.\ stochastic Halpern analysis of \citet{bravo2026stochastic}. Their standing oracle assumption is pointwise finite variance, but their concrete non-expansive minibatch complexity results require uniformly bounded variance in order to turn the abstract perturbation condition into an explicit schedule for the batch sizes. Condition~\ref{cond:affine-var} relaxes that uniformly bounded variance specialization while still yielding an explicit
finite-sample constant through Assumption~\ref{ass:oracle-stable}.

The affine-growth condition is also natural in applications such as RL. In TD or distributional TD updates, the random target contains the current value or return-distribution estimate at a sampled next state. Consequently, the conditional second moment of the update noise typically scales with the size of the current estimate. This is the reason why standard stochastic approximation analyses of RL use affine-growth variance conditions. See, for instance, the discussion by \citet[Section 6.8, pp. 180-181]{bellemare2023distributional} and the stochastic approximation stability theory of \citet{borkar2008stochastic} and \citet{borkar2000ode}.

\begin{theorem}[Inexact Halpern residual with Markovian block minibatches]\label{thm:block-baseline} Suppose Assumptions~\ref{ass:nonexpansive}, \ref{ass:ergodic}, \ref{ass:oracle-variance}, \ref{ass:oracle-stable} and Condition~\ref{cond:affine-var} hold. Run \eqref{eq:recursion} with
\begin{equation}
\beta_n = \frac{n}{n+1}, \qquad k_n = n^4,
\end{equation}
and let $(b_n)$ be any nonnegative burn-in schedule satisfying $\sum_{n=1}^N b_n = O(N\log N)$. In particular, the no burn-in scenario with $b_n=0$ is also admissible. Let $D_0 := \dist(x_0, \Fix(T))$. Then, there exists a constant $C$, depending only on $R_1, R_2, D_0, C_{\mathrm{blk}}, \sigma_0, \sigma_1$ and $\rho$ such that
\begin{equation}
\E \bigl[ \norm{x_N - T x_N}\bigr] \le C \frac{1 + \log(N+1)}{N+1}.
\end{equation}
Consequently, to obtain $\E\bigl[\norm{x_N - T x_N}\bigr] \le \epsilon$, it suffices to take
\begin{equation}
N = O\left(\frac{C}{\epsilon}\log\frac{C}{\epsilon}\right),
\end{equation}
and the total number of Markov samples needed is
\begin{equation}
\sum_{n=1}^N (b_n + k_n) = \tilde O(\epsilon^{-5}).
\end{equation}
\end{theorem}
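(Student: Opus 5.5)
The plan is to reduce the claim to Lemma~\ref{lem:perturbation-level}. We bound $\sigma_n = \E[\norm{U_n}]$ by $C'/n^2$, so that $\sum_{n=1}^N n\sigma_n \le C'\sum_{n=1}^N 1/n \le C'(1+\log(N+1))$. Lemma~\ref{lem:perturbation-level} then gives the residual bound with $C = \kappa_1 + 2C'$, where $\kappa_1 = R_1 + 2D_0$ comes from Lemma~\ref{lem:stability-constants}.

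\textbf{Bounding $\sigma_n$.} Set $\tau = \tau_n + b_n$. This is a deterministic time, hence a stopping time. The query point $x_{n-1}$ is determined by the chain up to time $\tau_n \le \tau$, so it is $\cF_\tau$-measurable, and $k = k_n = n^4$. Burn-in only shifts the start time, and Lemma~\ref{lem:markov-block-error} holds for an arbitrary stopping time, so no mixing argument is needed.

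Lemma~\ref{lem:markov-block-error} gives
\begin{equation}
\E\bigl[\norm{U_n}\mid\cF_\tau\bigr] \le C_{\mathrm{blk}} V_H(x_{n-1})^{1/2} n^{-2}.
\end{equation}
Condition~\ref{cond:affine-var} and $\sqrt{a+b}\le\sqrt a+\sqrt b$ give
\begin{equation}
V_H(x_{n-1})^{1/2} \le \sigma_0 + \sigma_1\norm{Tx_{n-1}-x_0}.
\end{equation}
Taking full expectations and applying Lemma~\ref{lem:stability-constants} yields
\begin{equation}
\sigma_n \le C_{\mathrm{blk}}(\sigma_0+\sigma_1\kappa_1)\,n^{-2} =: C' n^{-2}.
\end{equation}
For $n=1$, $x_0$ is deterministic and the same bound holds. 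Assumption~\ref{ass:oracle-variance} ensures that $V_H$ is finite pointwise, so every conditional quantity is well defined. This gives $C = \kappa_1 + 2C_{\mathrm{blk}}(\sigma_0+\sigma_1\kappa_1)$, which depends only on the stated quantities. Any dependence on $\rho$ and $C_{\mathrm{mix}}$ enters only through $C_{\mathrm{blk}}$.

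\textbf{Complexity.} We need $C(1+\log(N+1))/(N+1) \le \epsilon$. The standard inversion of $\log N/N\le \epsilon/C$ shows that $N = O((C/\epsilon)\log(C/\epsilon))$ suffices. The sample count is
\begin{equation}
\sum_{n=1}^N (b_n+k_n) = O(N\log N) + \sum_{n=1}^N n^4 = O(N^5).
\end{equation}
Substituting $N$ gives $O\bigl(\epsilon^{-5}\log^5(1/\epsilon)\bigr) = \tilde O(\epsilon^{-5})$.

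\textbf{Main obstacle.} The substantive step is the measurability bookkeeping: the conditional bound must apply even though the query point and the chain's starting state are both random and coupled through the shared trajectory. This is handled because Lemma~\ref{lem:markov-block-error} is conditional on $\cF_\tau$. The affine growth term must then be integrated using only the first moment from Lemma~\ref{lem:stability-constants}. Since the $\sqrt{\cdot}$ is applied inside the conditional expectation, $\kappa_1$ suffices and $\kappa_2$ is not needed.
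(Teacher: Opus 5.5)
Your proposal is correct and follows the paper's proof. Both apply Lemma~\ref{lem:markov-block-error} at the start of the retained block, combine Condition~\ref{cond:affine-var} with Lemma~\ref{lem:stability-constants} to get $\sigma_n = O(n^{-2})$, feed this into Lemma~\ref{lem:perturbation-level}, and count $O(N^5)$ samples. The differences are minor. You bound $V_H(x_{n-1})^{1/2}$ pointwise via $\sqrt{a+b}\le\sqrt{a}+\sqrt{b}$, so you need only $\kappa_1$; the paper uses Jensen and $\kappa_2$, which is why its constant involves $R_2$. You also absorb the burn-in by shifting the stopping time to $\tau_n+b_n$, which is a cleaner way to invoke the lemma than the paper's phrasing with a burn-in parameter the lemma does not have.
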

\begin{proof}
Let $\cF_{\tau_n}$ denote the sigma-algebra at the start of the $n$th block. By construction, $x_{n-1}$, $b_n$ and $k_n$ are $\cF_{\tau_n}$-measurable, while the states used in $G_n$ are sampled after step $\tau_n$. Therefore, Lemma~\ref{lem:markov-block-error} applies after conditioning on $\cF_{\tau_n}$, with query point $x = x_{n-1}$, burn-in length $b = b_n$, and retained block length $k = k_n$. By Condition~\ref{cond:affine-var} and Assumption~\ref{ass:oracle-stable},
\begin{equation}
\E\bigl[ V_H(x_{n-1})\bigr] \le \sigma^2_0 + \sigma^2_1 \E \bigl[ \norm{Tx_{n-1} - x_0}^2\bigr] \le \bar \sigma^2_H,
\end{equation}
where $\bar \sigma^2_H := \sigma^2_0 + \sigma^2_1 \kappa^2_2$. By Lemma~\ref{lem:markov-block-error},
\begin{equation}
\E\bigl[\norm{U_n} \mid \cF_{\tau_n}\bigr] \le C_{\mathrm{blk}} V_H(x_{n-1})^{1/2}k_n^{-1/2}.
\end{equation}
Taking expectations using Jensen's inequality, and the choice $k_n = n^4$, we get
\begin{equation}
\sigma_n = \E\bigl[\norm{U_n}\bigr] \le C_{\mathrm{blk}}\bar\sigma_Hn^{-2} \le C_0 n^{-2}
\end{equation}
for some constant $C_0$. Therefore, the weighted perturbation sum required by Lemma~\ref{lem:perturbation-level} satisfies
\begin{equation}
\sum_{n=1}^N n\sigma_n \le C_0 \sum_{n=1}^N \frac1n \le C_0(1 + \log N).
\end{equation}
For the sample complexity, each iteration uses exactly $b_n + k_n$ Markovian samples. By the assumed bound on the burn-in schedule,
\begin{equation}
\sum_{n=1}^N (b_n + k_n) = O(N^5) + O(N\log N) = O(N^5).
\end{equation}
Substituting $N = O\bigl((C/\epsilon)\log(C/\epsilon)\bigr)$ gives a sample complexity of $\tilde O(\epsilon^{-5})$.
\end{proof}

Although the previous result recovers the \citet{bravo2026stochastic} rate of $O(\epsilon^{-5})$ in the i.i.d.\ non-expansive stochastic Halpern setting up to logarithmic terms, the authors prove an $\Omega(\epsilon^{-3})$ oracle lower bound for a broad class of linear-span algorithms with bounded-variance single-point oracle access, leaving a gap of order $\epsilon^{-2}$ to minimax optimality. Since i.i.d.\ sampling is a special case of Markovian sampling, the same lower bound also applies to the Markovian class considered here, up to logarithmic mixing-dependent constants. Closing this apparent gap is the main motivation for the variance-reduced Halpern construction in the next section.

\section{Markovian PAGE-Halpern in Hilbert spaces}\label{sec:hilbert}

The baseline method of the previous section treats the oracle errors at different query points as unrelated. Under only pointwise finite variance and affine variance growth, this is essentially all we can use, since a sample at $x$ gives no information about the oracle error at a nearby point $z$. Furthermore, the resulting residual bound is sensitive to the weighted sum of perturbations $\sum_{n=1}^N n \E\bigl[\norm{U_n}\bigr]$. To keep this sum $O(\log N)$, the error of the $n$th block average must be of order $n^{-2}$. Since ordinary averaging reduces standard deviation only as $k_n^{-1/2}$, this leads to block sizes $k_n$ of order $n^4$, and hence to the $\tilde O(\epsilon^{-5})$ sample complexity.

Variance reduction can make more efficient use of observed samples, albeit with the prerequisite of a stronger oracle model. The relevant additional regularity is that, using the same Markov state, two evaluations at nearby query points have a small mean-square difference after subtracting their stationary means. Under this condition, a recursive estimator can be updated from same-sample differences instead of being rebuilt from scratch at every iteration. The stochastic cost is then governed by the displacement of the Halpern iterates rather than the full variance of a fresh estimate. Concretely, the estimator alternates between two kinds of blocks. A refresh block averages $\widehat F(x, Y)$ at a single query point to estimate $F(x)$. A recursive block uses the same Markov states at two query points and averages $\widehat F(x, Y) - \widehat F(z, Y)$ to estimate the difference $F(x) - F(z)$.

In a Hilbert space, there is a direct way to combine a PAGE-like variance-reduced estimator with a residual bound. Indeed, if $T$ is non-expansive, then $F := I - T$ is monotone and $1/2$-cocoercive, since
\begin{equation}
\norm{Fx - Fz}^2_2 = \norm{(x-z) - (Tx - Tz)}^2_2 \le 2 \langle Fx - Fz, x-z\rangle.
\end{equation}
Thus, controlling the fixed-point residual $\norm{x - Tx}_2$ is exactly equivalent to controlling the operator norm $\norm{F(x)}_2$ \citep{cai2022stochastic}. 

\subsection{Markovian PAGE oracle}

The analysis of this section uses the stochastic residual oracle $\widehat F(x, y) := x - H(x, y)$, so that
\begin{equation}
F(x) = \E_\pi\bigl[\widehat F(x, Y)\bigr].
\end{equation}
The additional structure needed for recursive variance reduction is stated in the following condition.
\begin{condition}[PAGE-compatible Markovian oracle]\label{cond:page}
The stochastic residual oracle permits multi-point access, i.e., after observing a state $Y$, the algorithm may evaluate both $\widehat F(x, Y)$ and $\widehat F(z, Y)$. Furthermore, the oracle has stationary second moments that satisfy
\begin{equation}
\sum_{y \in \cY} \pi(y) \norm{\widehat F(x, y) - F(x)}^2_2 \le \sigma^2_F,
\end{equation}
and same-sample difference second moments that satisfy
\begin{equation}
\sum_{y \in \cY} \pi(y) \norm{\widehat F(x, y) - \widehat F(z, y) - (F(x) - F(z))}^2_2 \le L^2_F \norm{x -z}^2_2.
\end{equation}
\end{condition}

\subsection{The Markovian PAGE-Halpern scheme}

We now describe the Markovian, Hilbert-space PAGE-Halpern recursion. The scheme observes one continuing trajectory of the Markov chain $(Y_t)_{t \ge 0}$ with natural filtration $(\cF_t)$ enlarged by the scheme's past randomization. Each block starts at a stopping time $\tau$, and its length $S$ is an integer-valued $\cF_\tau$-measurable random variable, chosen before any state within the block is observed. Query points are also $\cF_\tau$-measurable. For a block starting after the current chain time $\tau$, we define the refresh and difference block averages as
\begin{equation}
R(x;\tau, S) := \frac{1}{S}\sum_{i=1}^S \widehat F(x, Y_{\tau+i}),
\end{equation}
and
\begin{equation}
D(x, z; \tau, S) = \frac{1}{S}\sum_{i=1}^S \bigl(\widehat F(x, Y_{\tau + i}) - \widehat F(z, Y_{\tau + i})\bigr).
\end{equation}

Letting $x_0 \in \R^d$, we choose $L^* \ge \max\{2, L_F\}$, and set $p_n = \min\{1, {2}/({n+1})\}$ and $\beta_n = n/(n+1)$. The choice of $L^*$ upper bounds both the cocoercivity scale of $F = I -T$ and the same-sample Lipschitz scale in Condition~\ref{cond:page}. We initialize the chain time at $\tau = 0$ and set $v_0 = R(x_0; \tau, S_{1, 0})$. Given $v_{n-1}$, we compute
\begin{equation}\label{eq:page-halpern-1}
x_n = (1- \beta_n)x_0 + \beta_n\left(x_{n-1} - \frac{1}{L^*}v_{n-1}\right),
\end{equation}
and then update
\begin{equation}\label{eq:page-halpern-2}
v_n = \begin{cases}
R(x_n ; \tau, S_{1,n}), &\text{with probability } p_n, \\
v_{n-1} + D(x_n, x_{n-1}; \tau, S_{2, n}), &\text{with probability } 1-p_n.
\end{cases}
\end{equation}
Thus $v_n$ is a recursive PAGE estimator of $F(x_n)$, updated either by a refresh block or by a same-state difference block. The PAGE coin at iteration $n$ is tossed before choosing the corresponding refresh or difference block length. The random quantities $S_{1, n}$ and $S_{2, n}$ may depend on the past, including the PAGE coin toss and the already computed displacement $\norm{x_n - x_{n-1}}$, but not on states within that block. Thus, each estimator used in the Halpern update is measurable before the start of the next Markovian block.

\subsection{Poisson decomposition of Markovian error}

The i.i.d.\ analyses of stochastic approximation methods leverage the martingale-difference property of the sequence of perturbations. This property is evidently not available when the samples are Markovian. However, an analogue is recovered through the Poisson-equation decomposition \citep{blaser2026asymptotic}. For any stationary-mean zero function $g: \cY \to \R^d$, finiteness and ergodicity of the chain guarantee a solution $\nu_g$ to
\begin{equation}
g(y) = \nu_g(y) - (P \nu_g)(y).
\end{equation}
Along a block beginning at $\tau$, this identity can be written as
\begin{equation}
g(Y_{\tau+i}) = M_i + (P\nu_g)(Y_{\tau+i-1}) - (P\nu_g)(Y_{\tau+i}),
\end{equation}
where
\begin{equation}
M_i = \nu_g(Y_{\tau + i}) - (P\nu_g)(Y_{\tau+i-1})
\end{equation}
is a martingale difference after conditioning on $\cF_\tau$. Summing over a block leaves a martingale sum and two boundary terms. This gives the usual $S^{-1}$ second-moment scaling for the random part of the block average, together with an additional $S^{-1}$ conditional bias from the boundary terms. Lemma~\ref{lem:markov-block-error} used the resulting first-moment estimate for the baseline method. We now state the full conditional bounds and specialize them to the PAGE refresh and difference estimators.

\begin{lemma}\label{lem:poisson-block}
Suppose Assumption~\ref{ass:ergodic} holds. Let $g:\cY\to\R^d$ satisfy $\sum_y\pi(y)g(y)=0$. There exist constants $C_{\mathrm B},C_{\mathrm V}$, depending only on the finite-state Markov chain, such that for any stopping time $\tau$ and any integer-valued $\cF_\tau$-measurable block length $S \ge 1$,
\begin{equation}
\Bigl\lVert\E\Bigl[\frac{1}{S}\sum_{i=1}^S g(Y_{\tau+i})\mid\cF_\tau\Bigr]\Bigr\rVert_2 \le \frac{C_{\mathrm B}\norm{g}_{2,\pi}}{S},
\end{equation}
and
\begin{equation}
\E\Bigl[\Bigl\lVert\frac{1}{S}\sum_{i=1}^S g(Y_{\tau+i})\Bigr\rVert_2^2\mid\cF_\tau\Bigr] \le \frac{C_{\mathrm V}\norm{g}_{2,\pi}^2}{S}.
\end{equation}
\end{lemma}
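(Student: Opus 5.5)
We argue through the Poisson equation. Since $\cY$ is finite and the chain is irreducible and aperiodic (Assumption~\ref{ass:ergodic}), for a stationary-mean-zero $g$ we take the solution
\begin{equation}
\nu_g := \sum_{t \ge 0} P^t g .
\end{equation}
The series converges geometrically: $\norm{(P^t g)(y)}_2 \le 2 C_{\mathrm{mix}}\rho^t \max_{y'}\norm{g(y')}_2$. Because $\min_y \pi(y) > 0$, we have $\max_{y}\norm{g(y)}_2 \le \pi_{\min}^{-1/2}\norm{g}_{2,\pi}$. It follows that $\norm{\nu_g}_\infty := \max_y\norm{\nu_g(y)}_2 \le C_\nu \norm{g}_{2,\pi}$ with $C_\nu := 2C_{\mathrm{mix}}\pi_{\min}^{-1/2}/(1-\rho)$. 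The same bound holds for $P\nu_g$, since $P$ is an averaging operator. Telescoping the identity $g = \nu_g - P\nu_g$ along the block gives
\begin{equation}
\sum_{i=1}^S g(Y_{\tau+i}) = \sum_{i=1}^S M_i + (P\nu_g)(Y_\tau) - (P\nu_g)(Y_{\tau+S}),
\end{equation}
where $M_i = \nu_g(Y_{\tau+i}) - (P\nu_g)(Y_{\tau+i-1})$.

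\textbf{Bias bound.} Conditionally on $\cF_\tau$, the strong Markov property at the stopping time $\tau$ gives $\E[M_i \mid \cF_{\tau+i-1}] = 0$. Since $S$ is $\cF_\tau$-measurable, we may condition on $\{S=s\}$ and treat $s$ as deterministic. Then $\E[\sum_{i \le S} M_i \mid \cF_\tau] = 0$, so the conditional mean of the block sum equals the conditional mean of the two boundary terms. Its norm is therefore at most $2C_\nu\norm{g}_{2,\pi}$. Dividing by $S$ yields the first claim with $C_{\mathrm B} = 2C_\nu$.

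\textbf{Second moment.} We use $\norm{a+b}_2^2 \le 2\norm{a}_2^2 + 2\norm{b}_2^2$ to separate the martingale sum from the boundary terms. Orthogonality of martingale increments in $\R^d$ gives
\begin{equation}
\E\Bigl[\bigl\lVert\textstyle\sum_{i \le S} M_i\bigr\rVert_2^2 \mid \cF_\tau\Bigr] = \sum_{i\le S}\E\bigl[\norm{M_i}_2^2 \mid \cF_\tau\bigr] \le S\,(2C_\nu\norm{g}_{2,\pi})^2 .
\end{equation}
The boundary contribution is at most $(2C_\nu\norm{g}_{2,\pi})^2$, which is bounded by $S$ times the same quantity because $S \ge 1$. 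Dividing by $S^2$ gives the second claim with $C_{\mathrm V} = 16 C_\nu^2$.

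The main obstacle is measurability when both $\tau$ and $S$ are random. The martingale property must hold relative to $\cF_\tau$. We handle this by decomposing on $\{\tau = t, S = s\}$, which is an $\cF_\tau$-event. On each such event we apply the ordinary Markov property at the deterministic time $t$, and then sum over the countable partition. No stationarity is needed at $Y_\tau$, since the bounds on $\nu_g$ and $P\nu_g$ are uniform over the starting state.
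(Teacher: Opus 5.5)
Your proposal is correct and follows the paper's proof essentially step for step: the same Poisson solution $\nu_g=\sum_{t\ge 0}P^t g$ with the uniform bound obtained through $\pi_{\min}^{-1/2}$, the same telescoping into a martingale sum plus two boundary terms, and the same constants $C_{\mathrm B}=2C_\nu$ and $C_{\mathrm V}=16C_\nu^2$. Your additions are refinements of the same argument rather than a different route: you give $C_\nu$ explicitly and justify the strong Markov step by partitioning on the $\cF_\tau$-events $\{\tau=t,\,S=s\}$, which the paper leaves implicit.
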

\begin{proof}
Condition on $\cF_\tau$.  Since $S$ is $\cF_\tau$-measurable, it is fixed before the block is sampled.  It is therefore enough to prove the displayed inequalities for deterministic $S$, with constants independent of $\tau$ and $S$. Since $g$ has stationary mean zero and the chain is finite and geometrically mixing, the Poisson equation
\begin{equation}
g(y)=\nu_g(y)-(P\nu_g)(y)
\end{equation}
has the solution
\begin{equation}
\nu_g(y)=\sum_{t=0}^\infty (P^tg)(y).
\end{equation}
Finiteness and ergodicity give a uniform Poisson-solution bound: using the displayed geometric bound from Assumption~\ref{ass:ergodic} and $\norm{g}_{\infty,2}\le \pi_{\min}^{-1/2}\norm{g}_{2,\pi}$, we have, for some constant $C_\nu$
\begin{equation}
\norm{\nu_g}_{\infty,2} \le C_\nu\norm{g}_{2,\pi}.
\end{equation}
The constant $C_\nu$ is uniform over all centered functions $g$, it depends only on the transition matrix and the stationary distribution.  Define
\begin{equation}
M_i = \nu_g(Y_{\tau+i})-(P\nu_g)(Y_{\tau+i-1}).
\end{equation}
By the strong Markov property at $\tau$, $(M_i)_{i=1}^S$ is a martingale difference sequence when conditioned on $\cF_\tau$, since
\begin{equation}
\E[\nu_g(Y_{\tau+i})\mid\cF_{\tau+i-1}] = (P\nu_g)(Y_{\tau+i-1}).
\end{equation}
Moreover,
\begin{equation}
g(Y_{\tau+i}) = M_i+(P\nu_g)(Y_{\tau+i-1})-(P\nu_g)(Y_{\tau+i}).
\end{equation}
Summing over $i$ telescopes the two boundary terms and yields
\begin{equation}
\sum_{i=1}^S g(Y_{\tau+i}) = \sum_{i=1}^S M_i + (P\nu_g)(Y_\tau) - (P\nu_g)(Y_{\tau+S}).
\end{equation}
Taking conditional expectation removes the martingale sum. The remaining boundary terms combined have norm at most $2C_\nu\norm{g}_{2,\pi}$. After division by $S$ this gives the bias bound with $C_{\mathrm B} = 2C_\nu$.

For the second moment, using the fact that the martingale difference terms are conditionally orthogonal in Euclidean norm,
\begin{equation}
\E\Bigl[\Bigl\lVert\sum_{i=1}^S M_i\Bigr\rVert_2^2 \mid \cF_\tau\Bigr] = \sum_{i=1}^S \E[\norm{M_i}_2^2\mid\cF_\tau] \le 4S C_\nu^2\norm{g}_{2,\pi}^2.
\end{equation}
The boundary terms contribute at most $4C_\nu^2\norm{g}_{2,\pi}^2$. Combining the martingale and boundary contributions with
$\norm{a + b}_2^2 \le 2\norm{a}_2^2 + 2\norm{b}_2^2$ and dividing by $S^2$, we have
\begin{equation}
\E\Bigl[\Bigl\lVert\frac1S \sum_{i=1}^S g(Y_{\tau+i})\Bigr\rVert_2^2 \mid \cF_\tau \Bigr] \le \frac{C_{\mathrm V}\norm{g}_{2,\pi}^2}{S}
\end{equation}
with $C_{\mathrm V} = 16 C^2_\nu$.
\end{proof}

\begin{lemma}\label{lem:page-blocks}
Suppose Assumption~\ref{ass:ergodic} and Condition~\ref{cond:page} hold, and consider the PAGE-Halpern recursion \eqref{eq:page-halpern-1}--\eqref{eq:page-halpern-2}. Let $\tau$ be the current chain time, $x,z \in \R^d$ be $\cF_\tau$-measurable query points, and let $S \ge 1$ be an integer-valued $\cF_\tau$-measurable block length. Then, conditioned on $\cF_\tau$,
\begin{equation}
\begin{gathered}
\bigl\lVert\E\bigl[R(x;\tau,S)-F(x)\mid\cF_\tau\bigr]\bigr\rVert_2 \le \frac{C_{\mathrm B}\sigma_F}{S}, \\
\E\bigl[\bigl\lVert R(x;\tau,S)-F(x)\bigr\rVert_2^2\mid\cF_\tau\bigr] \le \frac{C_{\rm V}\sigma_F^2}{S}, \\
\bigl\lVert \E\bigl[D(x,z;\tau,S)-(F(x)-F(z))\mid\cF_\tau\bigr]\bigr\rVert_2 \le \frac{C_{\mathrm B}L_F\norm{x-z}_2}{S}, \\
\E\bigl[\bigl\lVert D(x,z;\tau,S)-(F(x)-F(z))\bigr\rVert_2^2 \mid \cF_\tau \bigr] \le \frac{C_{\mathrm V}L_F^2\norm{x-z}_2^2}{S}.
\end{gathered}
\end{equation}
\end{lemma}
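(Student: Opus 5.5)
The plan is to reduce all four inequalities to Lemma~\ref{lem:poisson-block} by conditioning on $\cF_\tau$ and choosing the centered test function $g$ appropriately. Conditioning on $\cF_\tau$ fixes $x$, $z$ and $S$, since they are $\cF_\tau$-measurable and chosen before any state in the block is observed. We may therefore treat them as deterministic, and the constants $C_{\mathrm B}$, $C_{\mathrm V}$ from Lemma~\ref{lem:poisson-block} are uniform in the query points.

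For the refresh block, set $g_x(y) := \widehat F(x,y) - F(x)$. Since $F(x) = \E_\pi[\widehat F(x,Y)]$, this $g_x$ has stationary mean zero, and
\begin{equation}
R(x;\tau,S) - F(x) = \frac1S\sum_{i=1}^S g_x(Y_{\tau+i}).
\end{equation}
The first inequality of Condition~\ref{cond:page} gives $\norm{g_x}_{2,\pi}\le\sigma_F$. Applying the bias and second-moment bounds of Lemma~\ref{lem:poisson-block} to $g_x$ then yields the first two displayed inequalities directly.

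For the difference block, set
\begin{equation}
h_{x,z}(y) := \widehat F(x,y)-\widehat F(z,y)-(F(x)-F(z)).
\end{equation}
It is centered under $\pi$ by linearity of the stationary mean, and
\begin{equation}
D(x,z;\tau,S)-(F(x)-F(z)) = \frac1S\sum_{i=1}^S h_{x,z}(Y_{\tau+i}).
\end{equation}
The same-sample difference bound in Condition~\ref{cond:page} gives $\norm{h_{x,z}}_{2,\pi}\le L_F\norm{x-z}_2$. Lemma~\ref{lem:poisson-block} then yields the third and fourth inequalities, the latter after squaring the norm bound.

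The only point that needs care is the randomness of $x$, $z$ and $S$. Lemma~\ref{lem:poisson-block} is stated for a fixed $g$, whereas here $g_x$ and $h_{x,z}$ depend on $\cF_\tau$-measurable random points. I would handle this by noting that the Poisson solution $\nu_g=\sum_t P^t g$ depends linearly and measurably on $g$, with the bound $\norm{\nu_g}_{\infty,2}\le C_\nu\norm{g}_{2,\pi}$ holding uniformly over centered $g$. The martingale-difference property of $M_i$ under the strong Markov property at $\tau$ therefore persists when $g$ is $\cF_\tau$-measurable. Concretely, this amounts to applying the proof of Lemma~\ref{lem:poisson-block} conditionally with $g$ frozen by $\cF_\tau$, so no new constants arise.

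The construction of the PAGE scheme supplies this measurability: the coin is tossed and the block lengths are chosen before the block is sampled, so the hypothesis on $S$ is met. Beyond this check, I expect no genuine obstacle.
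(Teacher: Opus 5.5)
Your proposal is correct and matches the paper's proof: condition on $\cF_\tau$ to freeze $x$, $z$ and $S$, then apply Lemma~\ref{lem:poisson-block} to the centered functions $\widehat F(x,\cdot)-F(x)$ and $\widehat F(x,\cdot)-\widehat F(z,\cdot)-(F(x)-F(z))$, using the $L^2(\pi)$ bounds $\sigma_F$ and $L_F\norm{x-z}_2$ from Condition~\ref{cond:page}. Your added remark about running the Poisson argument with $g$ frozen by $\cF_\tau$ only spells out what the paper asserts in one line.
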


\begin{proof}
All query points and the block length are $\cF_\tau$-measurable. Hence, after conditioning on $\cF_\tau$, the functions below and the block length are fixed and Lemma~\ref{lem:poisson-block} applies directly. For the refresh estimator, apply Lemma~\ref{lem:poisson-block} to
\begin{equation}
g_x(y) := \widehat F(x,y) - F(x),
\end{equation}
whose stationary mean is zero and whose $L^2(\pi)$ norm is at most $\sigma_F$ by Condition~\ref{cond:page}. For the difference estimator, apply Lemma~\ref{lem:poisson-block} to
\begin{equation}
g_{x,z}(y) := \widehat F(x,y) - \widehat F(z,y) - (F(x) - F(z)).
\end{equation}
This function also has stationary mean zero, and by Condition~\ref{cond:page} its $L^2(\pi)$ norm is at most $L_F\norm{x-z}_2$. Substituting these two bounds into Lemma~\ref{lem:poisson-block} proves the statement. 
\end{proof}

\subsection{Estimator schedule}

\begin{lemma}\label{lem:page-schedule}
Consider the PAGE-Halpern recursion \eqref{eq:page-halpern-1}--\eqref{eq:page-halpern-2}. Fix a target estimator scale $\delta>0$. Choose
\begin{equation}\label{eq:refresh-block-size}
S_{1,0} \ge \frac{C_{\mathrm V}\sigma_F^2}{\delta^2},\qquad S_{1,n} \ge \frac{C_{\mathrm V}\sigma_F^2}{p_n\delta^2},
\end{equation}
and
\begin{equation}\label{eq:difference-block-size}
S_{2,n} \ge \max\left\{\frac{C_{\mathrm V}L_F^2\norm{x_n-x_{n-1}}_2^2}{p_n^2\delta^2}, \frac{C_{\mathrm B}L_F\norm{x_n-x_{n-1}}_2}{p_n^{3/2}\delta}, 1 \right\}.
\end{equation}
Then
\begin{equation}
\E\bigl[\norm{v_n-F(x_n)}_2^2\bigr] \le \frac{100\delta^2}{n+1}, \qquad n \ge 0.
\end{equation}
\end{lemma}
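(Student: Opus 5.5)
Our approach is to run a recursion on the mean-square error $e_n := \E\bigl[\norm{v_n - F(x_n)}_2^2\bigr]$. We condition on the PAGE coin at iteration $n$ and on $\cF_\tau$ at the start of the corresponding block. The base case $n=0$ follows directly from Lemma~\ref{lem:page-blocks}: the refresh block gives $e_0 \le C_{\mathrm V}\sigma_F^2/S_{1,0} \le \delta^2 \le 100\delta^2$.

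For $n \ge 1$ we split by the coin.

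\emph{Refresh branch (probability $p_n$).} Lemma~\ref{lem:page-blocks} and \eqref{eq:refresh-block-size} give a conditional second moment of at most $C_{\mathrm V}\sigma_F^2/S_{1,n} \le p_n\delta^2$. Weighted by $p_n$, this contributes at most $p_n^2\delta^2$.

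\emph{Difference branch (probability $1-p_n$).} Write $v_n - F(x_n) = E_{n-1} + \xi_n$, where $E_{n-1} := v_{n-1} - F(x_{n-1})$ and $\xi_n := D(x_n,x_{n-1};\tau,S_{2,n}) - (F(x_n)-F(x_{n-1}))$. Here $E_{n-1}$ and $x_n,x_{n-1}$ are $\cF_\tau$-measurable. Expanding the square,
\begin{equation}
\E\bigl[\norm{E_{n-1}+\xi_n}_2^2 \mid \cF_\tau\bigr] = \norm{E_{n-1}}_2^2 + 2\bigl\langle E_{n-1}, \E[\xi_n\mid\cF_\tau]\bigr\rangle + \E\bigl[\norm{\xi_n}_2^2\mid\cF_\tau\bigr].
\end{equation}
The last term is at most $C_{\mathrm V}L_F^2\norm{x_n-x_{n-1}}_2^2/S_{2,n} \le p_n^2\delta^2$ by the first entry in \eqref{eq:difference-block-size}.

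The main obstacle is the cross term. Unlike the i.i.d.\ case, it does not vanish, because of the Poisson boundary bias. Lemma~\ref{lem:page-blocks} bounds the bias by $C_{\mathrm B}L_F\norm{x_n-x_{n-1}}_2/S_{2,n} \le p_n^{3/2}\delta$, using the second entry in \eqref{eq:difference-block-size}. So the cross term is at most $2p_n^{3/2}\delta\norm{E_{n-1}}_2$. We absorb it by Young's inequality with weight $p_n/2$:
\begin{equation}
2p_n^{3/2}\delta\norm{E_{n-1}}_2 \le \tfrac{p_n}{2}\norm{E_{n-1}}_2^2 + 2p_n^2\delta^2 .
\end{equation}
The exponent $3/2$ in the schedule is exactly what makes this balance.

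Taking expectations and combining the two branches yields
\begin{equation}
e_n \le (1-p_n)\bigl(1+\tfrac{p_n}{2}\bigr)e_{n-1} + p_n^2\delta^2 + (1-p_n)3p_n^2\delta^2 \le \bigl(1-\tfrac{p_n}{2}\bigr)e_{n-1} + 4p_n^2\delta^2,
\end{equation}
using $(1-p)(1+p/2)\le 1-p/2$.

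We finish by induction with $p_n = 2/(n+1)$ for $n\ge1$ (so $p_1=1$). The induction hypothesis is $e_{n-1} \le 100\delta^2/n$. Then
\begin{equation}
e_n \le \frac{n}{n+1}\cdot\frac{100\delta^2}{n} + \frac{16\delta^2}{(n+1)^2} = \frac{100\delta^2}{n+1} + \frac{16\delta^2}{(n+1)^2}.
\end{equation}
This overshoots by the lower-order term, so we make the bookkeeping tighter. One option is to keep the sharper factor $1-p_n/2 - (p_n^2/2)(1-p_n)\cdot$ or, more simply, to use Young's inequality with weight $p_n/4$ instead of $p_n/2$. That gives contraction $1-3p_n/4$ and a source term of at most $6p_n^2\delta^2$. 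The induction then becomes
\begin{equation}
\Bigl(1-\frac{3}{2(n+1)}\Bigr)\frac{100}{n} + \frac{24}{(n+1)^2} \le \frac{100}{n+1}.
\end{equation}
Clearing denominators, this reduces to $100(n+1)(n-0.5) + 24n \le 100n(n+1)$, which is equivalent to $24n \le 50(n+1)$ and therefore holds. The case $n=1$ has $p_1=1$ (a pure refresh), which gives $e_1 \le \delta^2$ directly.

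The only delicate points are (i) ensuring the conditioning is legitimate, i.e.\ that $S_{2,n}$ depends only on the coin and on $\norm{x_n-x_{n-1}}_2$, both $\cF_\tau$-measurable as stipulated in the scheme, and (ii) the Young-inequality constant. The generous constant $100$ leaves room for either choice.
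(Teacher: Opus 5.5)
Your proof is correct and follows the paper's induction step for step: the same base case from the refresh bound, the same coin split, and both entries of the $S_{2,n}$ schedule fed through Lemma~\ref{lem:page-blocks}. The only difference is the bias cross term. The paper bounds it by Cauchy--Schwarz and the induction hypothesis, $2\sqrt{e_{n-1}}\,p_n^{3/2}\delta \le 20\,p_n^2\delta^2$ (using $p_n^{3/2}/\sqrt{n}\le p_n^2$), which keeps the full contraction factor $1-p_n$; you instead trade a quarter of the contraction for it via Young's inequality, and that also closes with the constant $100$. Discard the first fix you float (Young weight $p_n/2$ while retaining the extra $-p_n^2/2$ in the factor). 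It gains only $O(n^{-3})$ against an $O(n^{-2})$ overshoot, so it fails for large $n$; the weight-$p_n/4$ version is the one that actually works.
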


\begin{proof}
Let $e_n := v_n-F(x_n)$ and $a_n := \E\bigl[\norm{e_n}_2^2\bigr]$. The proof is an induction on the estimator error. The initialization uses a refresh block of size $S_{1,0}$, so Lemma~\ref{lem:page-blocks} directly gives
\begin{equation}
a_0 = \E\bigl[\norm{v_0-F(x_0)}_2^2\bigr] \le \delta^2.
\end{equation}
Now, fix $n \ge 1$ and assume the induction hypothesis
\begin{equation}
a_{n-1} \le \frac{100\delta^2}{n}.
\end{equation}
At the moment of the PAGE coin toss, $x_n$, $x_{n-1}$, $v_{n-1}$, and therefore $e_{n-1}$, are all measurable with respect to the past filtration. The next block is sampled only after the branch and block length have been chosen. By the refresh-block bound of Lemma~\ref{lem:page-blocks} and the refresh-block-size schedule in \eqref{eq:refresh-block-size},
\begin{equation}
\E\bigl[\norm{R(x_n;\tau,S_{1,n})-F(x_n)}_2^2\bigr] \le p_n\delta^2.
\end{equation}
This is the contribution to $a_n$ in case that a refresh block is used at iteration $n$, the probability of which is $p_n$. In the case a difference block is used, we write
\begin{equation}
\zeta_n := D(x_n, x_{n-1}; \tau, S_{2,n}) - (F(x_n) - F(x_{n-1})).
\end{equation}
By Lemma~\ref{lem:page-blocks} and the difference-block-size schedule in \eqref{eq:difference-block-size},
\begin{equation}
\E\bigl[\norm{\zeta_n}_2^2 \mid \cF_\tau] \le p_n^2\delta^2, \qquad \bigl\lVert\E\bigl[\zeta_n \mid \cF_\tau\bigr]\bigr\rVert_2 \le p_n^{3/2}\delta.
\end{equation}
In this case, we have by the update rule in \eqref{eq:page-halpern-1}--\eqref{eq:page-halpern-2} that
\begin{equation}
v_n = v_{n-1} + D(x_n, x_{n-1}; \tau, S_{2,n}),
\end{equation}
so the estimator error evolves as
\begin{equation}
e_n = v_n - F(x_n) = \bigl(v_{n-1} + D(x_n, x_{n-1}; \tau, S_{2,n})\bigr) - F(x_n) + \bigl(F(x_{n-1}) - F(x_{n-1})\bigr) = e_{n-1} + \zeta_n.
\end{equation}
Since $e_{n-1}$ is $\cF_\tau$-measurable, we have
\begin{equation}
\begin{aligned}
\E\bigl[\bigl\lVert e_{n-1}+\zeta_n\bigr\rVert_2^2\bigr] &= a_{n-1} + 2\E\bigl[\langle e_{n-1},\E[\zeta_n\mid\cF_\tau]\rangle\bigr] + \E\bigl[\lVert \zeta_n\rVert_2^2\bigr] \\
&\le a_{n-1} + 2\sqrt{a_{n-1}}\,p_n^{3/2}\delta + p_n^2\delta^2 \\
&\le a_{n-1} + 21p_n^2\delta^2,
\end{aligned}
\end{equation}
where the last inequality uses the induction hypothesis and $p_n^{3/2}/\sqrt n \le p_n^2$, which follows from $p_n = 2/(n+1)$ for $n \ge 1$. Thus the difference branch has conditional second moment at most $a_{n-1} + 21p_n^2\delta^2$. Combining the bounds for the refresh and difference branches gives
\begin{equation}
a_n \le p_n^2\delta^2 + (1 - p_n)\left(\frac{100\delta^2}{n}+21p_n^2\delta^2\right).
\end{equation}
For $n \ge 1$, $p_n=2/(n+1)$, hence
\begin{equation}\label{eq:absorber}
(1-p_n)\frac{100\delta^2}{n} = \frac{100(n-1)\delta^2}{n(n+1)} = \frac{100\delta^2}{n+1} - \frac{100\delta^2}{n(n+1)}.
\end{equation}
The remaining terms satisfy $p_n^2\delta^2 + (1-p_n)21p_n^2\delta^2 \le 22p_n^2\delta^2 \le
88\delta^2/(n(n+1))$, and this is absorbed by the negative term $-100\delta^2/(n(n+1))$ in \eqref{eq:absorber}. Therefore
\begin{equation}
a_n \le \frac{100\delta^2}{n+1}.
\end{equation}
This proves the induction.
\end{proof}

\subsection{Hilbert complexity bound}

\begin{proposition}[PAGE-Halpern residual in Hilbert spaces]
\label{prop:cocoercive-page}
Let $F:\R^d \to \R^d$ be monotone and $1/L^*$-cocoercive, and assume that $F^{-1}(0) \ne \emptyset$.  Let $R := \dist(x_0, F^{-1}(0))$, and generate $(x_n)$ by \eqref{eq:page-halpern-1}--\eqref{eq:page-halpern-2}. If the estimator satisfies
\begin{equation}
\E\bigl[\norm{v_n-F(x_n)}_2^2\bigr] \le \frac{\delta^2}{n+1}, \qquad n\ge 0,
\end{equation}
then there exist universal constants $C_0, C_1$ such that
\begin{equation}\label{eq:stoch-halpern-potential}
\E\norm{F(x_N)}_2 \leq \frac{C_0 L^* R}{N}+C_1\delta.
\end{equation}
Moreover, there exist universal constants $C_2, c_2 > 0$ such that, when $p_n = 2/(n+1)$ and $\delta N \le c_2 L^*R$,
\begin{equation}\label{eq:halpern-displacement}
\sum_{n=1}^N \frac{\E\bigl[\norm{x_n-x_{n-1}}_2^2\bigr]}{p_n^2} \le C_2 R^2 N.
\end{equation}
\end{proposition}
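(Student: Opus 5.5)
We run a potential-function argument for the stochastic Halpern recursion, following the residual potential of \citet{diakonikolas2020halpern} and the variance-reduced analysis of \citet{cai2022stochastic}. Rewrite \eqref{eq:page-halpern-1} as
\[
x_n = (1-\beta_n)x_0 + \beta_n\bigl(x_{n-1} - \tfrac{1}{L^*}F(x_{n-1})\bigr) - \tfrac{\beta_n}{L^*}e_{n-1},
\]
with $e_{n-1} := v_{n-1}-F(x_{n-1})$. The natural potential is
\[
\mathcal{C}_n := \frac{n(n+1)}{2L^*}\norm{F(x_n)}_2^2 + (n+1)\langle F(x_n), x_n - x_0\rangle .
\]
In the exact case $e\equiv 0$, cocoercivity $\langle F(x_n)-F(x_{n-1}),x_n-x_{n-1}\rangle \ge \frac{1}{L^*}\norm{F(x_n)-F(x_{n-1})}_2^2$ gives $\mathcal{C}_n \le \mathcal{C}_{n-1}$.

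With errors, the first step is to expand $\mathcal{C}_n-\mathcal{C}_{n-1}$ using the identity
\[
x_n - x_{n-1} = -\tfrac{1}{n+1}(x_{n-1}-x_0) - \tfrac{n}{(n+1)L^*}\bigl(F(x_{n-1})+e_{n-1}\bigr).
\]
Keep the cocoercivity gain $\frac{n(n+1)}{2L^*}\norm{F(x_n)-F(x_{n-1})}_2^2$, which is proportional to $\norm{F(x_n)-F(x_{n-1})}_2^2$, as an explicit negative term. This yields
\[
\mathcal{C}_n \le \mathcal{C}_{n-1} - c\,\frac{n^2}{L^*}\norm{F(x_n)-F(x_{n-1})}_2^2 + \frac{n}{L^*}\,\bigl|\langle e_{n-1},\,\cdot\,\rangle\bigr| .
\]
The error cross term is bounded by Young's inequality. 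Half of it is absorbed into the cocoercivity gain, and the rest is of size $\frac{n^2}{L^*}\norm{e_{n-1}}_2^2$.

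Summing and taking expectations gives
\[
\E[\mathcal{C}_N] \le \mathcal{C}_0 + C\sum_{n\le N} \frac{n^2}{L^*}\,\E\norm{e_{n-1}}_2^2 \le C\,\frac{\delta^2 N^2}{L^*}
\]
by the hypothesis $\E\norm{e_n}_2^2\le\delta^2/(n+1)$. To extract the residual, lower-bound $\mathcal{C}_N$ using monotonicity at $x^*\in F^{-1}(0)$ with $\norm{x^*-x_0}=R$:
\[
\langle F(x_N), x_N-x_0\rangle \ge \langle F(x_N), x^*-x_0\rangle \ge -R\norm{F(x_N)}_2 .
\]
This gives the quadratic inequality
\[
\frac{N^2}{2L^*}\E\norm{F(x_N)}_2^2 - (N+1)R\,\E\norm{F(x_N)}_2 \le C\,\frac{\delta^2N^2}{L^*}.
\]
Solving it and applying Jensen yields \eqref{eq:stoch-halpern-potential}. \textbf{The main obstacle} is the cross term $\langle e_{n-1}, x_{n-1}-x^*\rangle$, or equivalently $\langle e_{n-1}, x_{n-1}-x_0\rangle$. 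It is not absorbed by the $\norm{F}^2$ gain. Since $v$ is biased and $e_{n-1}$ is measurable before $x_n$, it has no martingale cancellation. I would try two things, in this order. First, bound it by Cauchy--Schwarz through the a priori stability $\norm{x_n-x_0}_2\le 2R+\dots$, which is itself proved inductively from the same recursion. Second, if that fails, absorb it into an enlarged potential including $\norm{x_n-x^*}^2$, at the cost of only universal constants.

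For \eqref{eq:halpern-displacement}, bound
\[
\norm{x_n-x_{n-1}}_2 \le \tfrac{1}{n+1}\norm{x_{n-1}-x_0}_2 + \tfrac{1}{L^*}\norm{F(x_{n-1})}_2 + \tfrac{1}{L^*}\norm{e_{n-1}}_2 .
\]
Square, multiply by $p_n^{-2}\asymp n^2/4$, and take expectations. The first term contributes $O(R^2)$ per step using the stability bound. The second contributes $O(R^2)$ per step by \eqref{eq:stoch-halpern-potential}: applied at every $n$ in its second-moment form, it gives $\E\norm{F(x_{n-1})}_2^2 \lesssim (L^*R/n)^2+\delta^2$. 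The third contributes $n^2\delta^2/(L^*)^2 \cdot n^{-1}$. Summing over $n\le N$ gives
\[
O(R^2N) + O\bigl(\delta^2N^2/(L^*)^2\bigr) + O\bigl(\delta^2N^3/(L^*)^2\bigr)
\]
in total, and the condition $\delta N\le c_2L^*R$ reduces every term to $O(R^2N)$.
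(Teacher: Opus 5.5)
\textbf{Gap in the error bookkeeping.} Your potential $\mathcal{C}_n$ is the right one, and the exact-case monotonicity and the closing quadratic-inequality step are both correct. The error analysis, however, misidentifies the term that matters and then drops it. Write $y_k := x_k - x_0$ and use $(n+1)y_n = n y_{n-1} - \frac{n}{L^*}(F(x_{n-1})+e_{n-1})$ together with cocoercivity. The expansion is then exactly
\[
\mathcal{C}_n + \frac{n(n+1)}{2L^*}\norm{F(x_n)-F(x_{n-1})}_2^2 \le \mathcal{C}_{n-1} - \frac{n(n+1)}{L^*}\bigl\langle F(x_n)-F(x_{n-1}), e_{n-1}\bigr\rangle - \frac{n}{L^*}\bigl\langle F(x_{n-1}), e_{n-1}\bigr\rangle .
\]
The error only ever pairs with values of $F$, so no $\langle e_{n-1}, x_{n-1}-x_0\rangle$ term appears. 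Young's inequality disposes of the first error term as you say. The second term, $-\frac{n}{L^*}\langle F(x_{n-1}),e_{n-1}\rangle$, is neither absorbed by the gain nor cancelled in expectation, so your bound $\E[\mathcal{C}_N]\le C\delta^2N^2/L^*$ is not justified.

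Your first remedy does not repair this. Bounding $\norm{F(x_{n-1})}_2\le L^*\norm{x_{n-1}-x^*}_2$ and using stability makes the summed cross term of order $R\delta N^{3/2}$. The quadratic inequality then yields only $\E\norm{F(x_N)}_2 \lesssim L^*R/N + \delta + (L^*R\delta)^{1/2}N^{-1/4}$. At $\delta = L^*R/N$ the last term equals $L^*RN^{-3/4}$, which is not $O(L^*R/N+\delta)$.

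\textbf{What closes it.} You need $\norm{F(x_{n-1})}_2$ to decay on average, not merely stay bounded. Your second remedy supplies this if you carry it out. Cocoercivity gives $\norm{x-x^*-F(x)/L^*}_2^2\le\norm{x-x^*}_2^2-\norm{F(x)}_2^2/(L^*)^2$. Multiplying the anchored distance recursion by $n+1$ and telescoping then gives $\sum_{n\le N}\frac{n}{(L^*)^2}\E\norm{F(x_{n-1})}_2^2\lesssim NR^2$ plus error terms. Apply Cauchy--Schwarz against $\sum_{n\le N} n\E\norm{e_{n-1}}_2^2\le N\delta^2$. This bounds the cross-term sum by $NR\delta$, up to pieces absorbed the same way, and $NR\delta\le\frac12(L^*R^2+N^2\delta^2/L^*)$ is exactly what the quadratic inequality tolerates. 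An induction on $\E\norm{F(x_n)}_2^2\le A(L^*R/(n+1))^2+B\delta^2$ works equally well.

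With either fix, part one holds in the second-moment form you need at every $n$, and your displacement argument then goes through. Its stability input, $(\E\norm{x_n-x^*}_2^2)^{1/2}\le R+2\delta\sqrt n/L^*$, comes from the same distance recursion and is $O(R)$ when $\delta N\le c_2L^*R$.

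For reference, the paper does none of this itself. It cites Theorem 3.1 and Lemma 3.3 of \citet{cai2022stochastic}, noting only that their unbiased-initialization assumption can be replaced by a Cauchy bound on the initial mixed error term.
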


\begin{proof}
We state a known result on cocoercive monotone inclusions in the notation of the present paper for completeness, see \citet{cai2022stochastic}. Under the stated estimator accuracy condition, the PAGE--Halpern iterates satisfy \citep[Theorem 3.1]{cai2022stochastic}
\begin{equation}
\E\bigl[\norm{F(x_N)}_2\bigr] \le \frac{C_0 L^* R}{N} + C_1\delta
\end{equation}
for universal constants $C_0, C_1$. Although the cited theorem states unbiasedness at initialization, its proof uses this property only to remove the initial mixed error term. Bounding that term by Cauchy's inequality under the displayed second-moment condition changes only the universal constant $C_1$. Furthermore, with $p_n=2/(n+1)$ and $\delta N \le c_2L^*R$, the weighted displacement accumulated by the PAGE recursion satisfies \citep[Lemma 3.3]{cai2022stochastic}
\begin{equation}
\sum_{n=1}^N \frac{\E\bigl[\norm{x_n-x_{n-1}}_2^2\bigr]}{p_n^2} \le C_2 R^2 N
\end{equation}
for a universal constant $C_2$.
\end{proof}

\begin{theorem}[Markovian PAGE-Halpern complexity in Hilbert spaces]\label{thm:page-halpern}
Suppose Assumption~\ref{ass:nonexpansive} holds on a Hilbert space, and let $F := I - T$. Suppose further that Assumption~\ref{ass:ergodic} and Condition~\ref{cond:page} hold. Consider the PAGE-Halpern recursion \eqref{eq:page-halpern-1}--\eqref{eq:page-halpern-2}, and let $R := \dist(x_0,\Fix(T))$. Choose $L^*\ge\max\{2,L_F\}$ and use the block-size schedules of Lemma~\ref{lem:page-schedule}. If $R=0$ or $\epsilon\ge 2R$, the anchor $x_0$ already has residual at most $\epsilon$. Otherwise, choosing
\begin{equation}
\delta = \frac{\epsilon}{20C_1}, \qquad N = \left\lceil \frac{2C_0L^* R}{\epsilon}\right\rceil
\end{equation}
ensures
\begin{equation}
\E\bigl[\norm{x_N-Tx_N}_2\bigr] = \E\bigl[\norm{F(x_N)}_2\bigr] \le \epsilon.
\end{equation}
Moreover, the expected number of Markovian samples used by the continuing trajectory is
\begin{equation}\label{eq:page-complexity}
O\left(\frac{C_{\mathrm V}\bigl(\sigma_F^2 L^* R + (L^* R)^3\bigr)}{\epsilon^3} + \frac{C_{\mathrm B} (L^*R)^{5/2}}{\epsilon^{5/2}}\right).
\end{equation}
In particular, the leading dependence on the target accuracy $\epsilon$ is $O(\epsilon^{-3})$.
\end{theorem}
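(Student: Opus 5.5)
The plan has three parts: verify the hypotheses of Proposition~\ref{prop:cocoercive-page}, derive the accuracy guarantee from its residual bound, and count samples block by block using the displacement bound~\eqref{eq:halpern-displacement}.

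\textbf{Hypotheses.} Since $T$ is non-expansive in a Hilbert space, $F=I-T$ is monotone and $1/2$-cocoercive, as noted above. Because $L^*\ge 2$, it is also $1/L^*$-cocoercive, and $F^{-1}(0)=\Fix(T)\ne\emptyset$.

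\textbf{Trivial case.} If $R=0$ or $\epsilon\ge 2R$, take $x^*\in\Fix(T)$ with $\norm{x_0-x^*}_2=R$. Then
\[
\norm{x_0-Tx_0}_2\le \norm{x_0-x^*}_2+\norm{Tx^*-Tx_0}_2\le 2R\le\epsilon,
\]
so the anchor already suffices.

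\textbf{Accuracy.} Lemma~\ref{lem:page-schedule} gives $\E\norm{v_n-F(x_n)}_2^2\le 100\delta^2/(n+1)$. This is the estimator hypothesis of Proposition~\ref{prop:cocoercive-page} with $\delta$ replaced by $10\delta$. By~\eqref{eq:stoch-halpern-potential},
\[
\E\norm{F(x_N)}_2\le \frac{C_0L^*R}{N}+10C_1\delta .
\]
With $N\ge 2C_0L^*R/\epsilon$ the first term is at most $\epsilon/2$, and with $\delta=\epsilon/(20C_1)$ the second is exactly $\epsilon/2$.

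\textbf{Displacement hypothesis.} Using~\eqref{eq:halpern-displacement} requires $10\delta N\le c_2L^*R$. Here $N\approx 2C_0L^*R/\epsilon$, so $10\delta N\approx C_0L^*R/C_1$. This is of the correct order, but the condition is a comparison of universal constants, so I expect this to be the main obstacle. If the constants do not align, I would shrink $\delta$ by a universal factor, say $\delta=c\,\epsilon$ with $c$ small enough that $10 c N\epsilon\le c_2L^*R$. This changes only constants in the complexity. Either way we obtain
\[
\sum_{n=1}^N \E\norm{x_n-x_{n-1}}_2^2/p_n^2\le C_2R^2N .
\]

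\textbf{Sample count.} Take each block size to be the ceiling of its lower bound.

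\emph{Refresh blocks.} The initial block costs $O(C_{\mathrm V}\sigma_F^2/\delta^2)$. At iteration $n$ a refresh block occurs with probability $p_n$ and has size $\lceil C_{\mathrm V}\sigma_F^2/(p_n\delta^2)\rceil$, so its expected cost is $O(C_{\mathrm V}\sigma_F^2/\delta^2+p_n)$. Summing over $n\le N$ gives
\[
O\!\left(\frac{N C_{\mathrm V}\sigma_F^2}{\delta^2}+\log N\right)=O\!\left(\frac{C_{\mathrm V}\sigma_F^2L^*R}{\epsilon^3}\right).
\]

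\emph{Difference blocks.} These occur with probability at most $1$, so I bound their cost by the sum of the three terms in~\eqref{eq:difference-block-size} plus one; the $+1$ contributes $N$ in total.
\begin{itemize}
\item The quadratic term sums to
\[
\frac{C_{\mathrm V}L_F^2}{\delta^2}\sum_n\frac{\E\norm{x_n-x_{n-1}}_2^2}{p_n^2}\le \frac{C_{\mathrm V}(L^*)^2C_2R^2N}{\delta^2}=O\!\left(\frac{C_{\mathrm V}(L^*R)^3}{\epsilon^3}\right).
\]
\item For the linear term, write $\norm{d_n}_2/p_n^{3/2}=p_n^{-1/2}\cdot\norm{d_n}_2/p_n$ and apply Cauchy--Schwarz:
\[
\sum_n \frac{\E\norm{d_n}_2}{p_n^{3/2}}\le\Bigl(\sum_n p_n^{-1}\Bigr)^{1/2}\Bigl(\sum_n \frac{\E\norm{d_n}_2^2}{p_n^{2}}\Bigr)^{1/2}\lesssim N\cdot R\sqrt N .
\]
Multiplying by $C_{\mathrm B}L_F/\delta$ gives $O(C_{\mathrm B}L^*R N^{3/2}/\epsilon)=O(C_{\mathrm B}(L^*R)^{5/2}/\epsilon^{5/2})$.
\end{itemize}

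\textbf{Conclusion.} The terms $N$ and $\log N$ are lower order, since $\epsilon<2R$ and $L^*\ge 2$ make $N\lesssim (L^*R)^3/\epsilon^3$. Adding the contributions gives~\eqref{eq:page-complexity}, whose leading $\epsilon$-dependence is $\epsilon^{-3}$.
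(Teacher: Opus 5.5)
Your proposal is correct and follows the paper's proof essentially step for step: you verify $1/L^*$-cocoercivity, invoke Proposition~\ref{prop:cocoercive-page} with estimator scale $10\delta$, and bound the refresh, variance-control and bias-control costs using \eqref{eq:halpern-displacement} and Cauchy--Schwarz. The step you left open, $10\delta N\le c_2L^*R$, is resolved in the paper as you suggest, but phrased so that the stated choice $\delta=\epsilon/(20C_1)$ is kept: it enlarges the universal constant to $C_1\ge(2C_0+1)/(2c_2)$ and uses $N\epsilon\le 2C_0L^*R+\epsilon$ together with $\epsilon<2R\le L^*R$ (this handles the ceiling in $N$) to get $10\delta N\le c_2L^*R$.
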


\begin{proof}
The claim is immediate in the two trivial cases identified in the theorem, so we assume that $R > 0$ and $\epsilon < 2R$. The operator $F = I - T$ is monotone and $1/2$-cocoercive, as shown at the start of this section.  Since $L^* \ge 2$, it is also $1/L^*$-cocoercive. Thus, the conditions required by Proposition~\ref{prop:cocoercive-page} are satisfied. Lemma~\ref{lem:page-schedule} guarantees $\E\bigl[\norm{v_n-F(x_n)}_2^2\bigr] \le 100\delta^2/(n+1)$, so the residual conclusion of Proposition~\ref{prop:cocoercive-page} applies with estimator scale $10\delta$. Hence
\begin{equation}
\E\bigl[\norm{F(x_N)}_2\bigr] \le \frac{C_0 L^* R}{N} + 10 C_1\delta \le \epsilon,
\end{equation}
since the choices of $N$ and $\delta$ make each of the two terms at most $\epsilon/2$. Because $F(x_N) = x_N - Tx_N$, this is exactly the fixed-point residual bound. For the sample-complexity calculation, we use the displacement conclusion of Proposition~\ref{prop:cocoercive-page}. We may enlarge the universal constant $C_1$ so that $C_1 \ge (2C_0+1)/(2c_2)$. Since $L^* \ge 2$ and $\epsilon<2R$, the choices of $N$ and $\delta$ then give
\begin{equation}
10\delta N \le \frac{2C_0L^*R+\epsilon}{2C_1} \le c_2L^*R.
\end{equation}
Thus, the displacement bound in Proposition~\ref{prop:cocoercive-page} applies with estimator scale $10\delta$. It remains to bound the sample complexity.  We count expected Markov transitions along the single continuing chain.  At time $n$, a refresh block is used with probability $p_n$, while its prescribed length is
\begin{equation}
S_{1,n}=O\left(\frac{C_{\mathrm V}\sigma_F^2}{p_n\delta^2}\right).
\end{equation}
Therefore the refresh cost satisfies
\begin{equation}
\sum_{n=1}^N p_n S_{1,n} = O\left(\frac{C_{\mathrm V}\sigma_F^2 N}{\delta^2}\right).
\end{equation}
The unit term in \eqref{eq:difference-block-size} contributes at most $O(N)$, which is lower order compared with the other two accuracy-dependent terms. For the first term in the maximum, the variance-control part of difference batches, the probability of occurrence is at most one, so it is enough to sum the prescribed variance-control lengths.  Using
\eqref{eq:halpern-displacement},
\begin{equation}
\sum_{n=1}^N \frac{C_{\mathrm V} L_F^2}{\delta^2} \frac{\E\bigl[\norm{x_n-x_{n-1}}_2^2\bigr]}{p_n^2} = O\left(\frac{C_{\mathrm V} L_F^2 R^2 N}{\delta^2}\right).
\end{equation}
The second term in the maximum, the bias-control part, in \eqref{eq:difference-block-size} contributes
\begin{equation}
O\left(\frac{C_{\mathrm B} L_F}{\delta} \sum_{n=1}^N \frac{\E\bigl[\norm{x_n-x_{n-1}}_2\bigr]}{p_n^{3/2}}\right).
\end{equation}
To bound the remaining sum, apply Cauchy's inequality:
\begin{equation}
\begin{aligned}
\sum_{n=1}^N \frac{\E\bigl[\norm{x_n-x_{n-1}}_2\bigr]}{p_n^{3/2}} &\leq \left(\sum_{n=1}^N \frac{\E\bigl[\norm{x_n-x_{n-1}}_2^2\bigr]}{p_n^2}\right)^{1/2} \left(\sum_{n=1}^N\frac{1}{p_n}\right)^{1/2} \\
&= O(R N^{3/2}).
\end{aligned}
\end{equation}
The initialization $S_{1,0} = O(C_{\mathrm V}\sigma_F^2/\delta^2)$ is dominated by the refresh sum when $N \ge 1$. Substituting $N = O(L^* R/\epsilon)$, $\delta = \Theta(\epsilon)$, and $L_F \le L^*$, the refresh term contributes
\begin{equation}
O\left(\frac{C_{\mathrm V} \sigma_F^2 L^* R}{\epsilon^3}\right),
\end{equation}
the variance-control difference term contributes
\begin{equation}
O\left(\frac{C_{\mathrm V} (L^*R)^3}{\epsilon^3}\right),
\end{equation}
and the bias-control difference term contributes
\begin{equation}
O\left(\frac{C_{\mathrm B} (L^*R)^{5/2}}{\epsilon^{5/2}}\right).
\end{equation}
Combining these contributions gives \eqref{eq:page-complexity}.
\end{proof}

\section{Markovian PAGE-Halpern in Banach spaces}\label{sec:banach}

The Hilbert-space result of the previous section depends on the fortunate geometric identity that if $T$ is non-expansive in a Hilbert norm, then $F := I - T$ is cocoercive. This identity lets us convert residual control into a monotone-inclusion estimate, and provides PAGE with a quadratic potential into which its estimator error can be inserted. However, norms that arise in many common applications, such as the sup norms, span seminorms and block-sup norms of RL and distributional RL, do not have an inner-product geometry. The aim of this section is to establish similar results in general Banach spaces. Rather than estimating the residual operator $F = I - T$, we estimate the fixed-point operator $T$ itself, and use the resulting estimator along with the displacement-level Halpern bound of Theorem~\ref{thm:displacement-level}. The anchoring in the Halpern iteration plays the role of the cocoercivity in the Hilbert analysis in converting control of the estimator increments into control of the fixed-point residual.

\subsection{Markovian PAGE oracle}

As alluded to in the beginning of this section, we will now use the recursive estimator to estimate $Tx$ rather than $F(x)$. Therefore, we need an analogue of Condition~\ref{cond:page} which is imposed instead on the original stochastic operator $H$.

\begin{condition}[PAGE-compatible Markovian oracle in Banach spaces]\label{cond:page-banach}
The stochastic operator permits multi-point access, i.e., after observing a state $Y$, the algorithm may evaluate both $H(x, Y)$ and $H(z, Y)$. Furthermore, every query point $x$ generated up to the target horizon satisfies
\begin{equation}
\sum_{y \in \cY} \pi(y) \norm{H(x, y) - Tx}^2_2 \le \sigma^2_T,
\end{equation}
and every query-point pair $x, z$ generated up to the target horizon satisfies
\begin{equation}
\sum_{y \in \cY} \pi(y) \norm{H(x, y) - H(z, y) - (Tx - Tz)}^2_2 \le L^2_T \norm{x - z}^2.
\end{equation}
\end{condition}
Note that the input displacement in the second bound is measured in $\norm{\cdot}$, the non-expansiveness norm of $T$.

\subsection{The Markovian PAGE-Halpern scheme}

The scheme observes the same continuing Markovian trajectory as in the previous section. For a block starting after the current chain time $\tau$, let
\begin{equation}
R_T(x; \tau, S) := \frac{1}{S} \sum_{i=1}^S H(x, Y_{\tau + i}),
\end{equation}
and
\begin{equation}
D_T(x, z; \tau, S) := \frac{1}{S} \sum_{i=1}^S \bigl( H(x, Y_{\tau + i}) - H(z, Y_{\tau + i})\bigr).
\end{equation}
Letting $x_0 \in \R^d$, we set $p_n = \min\{1, 2/(n+1)\}$ and $\beta_n = n/(n+1)$. We initialize the chain time at $\tau = 0$ and set $v_0 = R_T(x_0; \tau, S_{1, 0})$. Given $v_{n-1}$, we compute
\begin{equation}\label{eq:page-halpern-banach-1}
x_n = (1 - \beta_n) x_0 + \beta_n v_{n-1},
\end{equation}
and then update
\begin{equation}\label{eq:page-halpern-banach-2}
v_n = \begin{cases}
R_T(x_n; \tau, S_{1, n}), & \text{with probability } p_n, \\
v_{n-1} + D_T(x_n, x_{n-1}; \tau, S_{2, n}), & \text{with probability } 1-p_n.
\end{cases}
\end{equation}
Evidently, $v_n$ is a recursive PAGE estimator of $T x_n$.
\begin{lemma}\label{lem:page-blocks-banach}
Suppose Assumption~\ref{ass:ergodic} and Condition~\ref{cond:page-banach} hold, and consider the PAGE-Halpern recursion \eqref{eq:page-halpern-banach-1}--\eqref{eq:page-halpern-banach-2}. Let $\tau$ be the current chain time, $x, z \in \R^d$ be $\cF_\tau$-measurable query points, and let $S \ge 1$ be an integer-valued $\cF_\tau$-measurable block length. Then, conditioned on $\cF_\tau$,
\begin{equation}
\begin{gathered}
\bigl\lVert\E\bigl[R_T(x;\tau,S)-Tx\mid\cF_\tau\bigr]\bigr\rVert_2 \le \frac{C_{\mathrm B}\sigma_T}{S}, \\
\E\bigl[\bigl\lVert R_T(x;\tau,S)-Tx\bigr\rVert_2^2\mid\cF_\tau\bigr] \le \frac{C_{\rm V}\sigma_T^2}{S}, \\
\bigl\lVert \E\bigl[D_T(x,z;\tau,S)-(Tx-Tz)\mid\cF_\tau\bigr]\bigr\rVert_2 \le \frac{C_{\mathrm B}L_T\norm{x-z}}{S}, \\
\E\bigl[\bigl\lVert D_T(x,z;\tau,S)-(Tx-Tz)\bigr\rVert_2^2 \mid \cF_\tau \bigr] \le \frac{C_{\mathrm V}L_T^2\norm{x-z}^2}{S}.
\end{gathered}
\end{equation}
\end{lemma}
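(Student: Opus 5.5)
The plan is to mirror the proof of Lemma~\ref{lem:page-blocks}: reduce each of the four claims to a single application of Lemma~\ref{lem:poisson-block}, with a suitable centered function of the Markov state.

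First condition on $\cF_\tau$. Since $x$, $z$ and $S$ are $\cF_\tau$-measurable, they become fixed, and the block $Y_{\tau+1},\dots,Y_{\tau+S}$ is sampled afterwards. By the strong Markov property, this block is a fresh run of the chain started at $Y_\tau$. Lemma~\ref{lem:poisson-block} already holds for any stopping time and any $\cF_\tau$-measurable length, with constants $C_{\mathrm B}, C_{\mathrm V}$ that depend only on the chain. So it suffices to identify the right centered functions and bound their $L^2(\pi)$ norms.

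For the refresh estimator, set $g_x(y) := H(x,y) - Tx$. By the definition $Tx = \sum_y \pi(y)H(x,y)$, this function has stationary mean zero. Moreover,
\begin{equation}
R_T(x;\tau,S) - Tx = \frac{1}{S}\sum_{i=1}^S g_x(Y_{\tau+i}).
\end{equation}
The first inequality of Condition~\ref{cond:page-banach} gives $\norm{g_x}_{2,\pi} \le \sigma_T$. Both refresh claims then follow directly from the bias and second-moment bounds of Lemma~\ref{lem:poisson-block}.

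For the difference estimator, set
\begin{equation}
g_{x,z}(y) := H(x,y) - H(z,y) - (Tx - Tz).
\end{equation}
This function is centered under $\pi$ by linearity of the stationary mean, and $D_T(x,z;\tau,S) - (Tx - Tz)$ is its block average. The second inequality of Condition~\ref{cond:page-banach} gives $\norm{g_{x,z}}_{2,\pi} \le L_T\norm{x-z}$, where the displacement on the right is measured in the non-expansiveness norm. Lemma~\ref{lem:poisson-block} then yields the last two bounds. The Euclidean norms on the left-hand sides come from the Poisson lemma itself, while $\norm{x-z}$ on the right is inherited unchanged from the condition.

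The only point needing care is that Condition~\ref{cond:page-banach} holds only at query points generated by the algorithm, not for all of $\R^d$. Lemma~\ref{lem:poisson-block}, on the other hand, is applied after freezing $x$ and $z$ through conditioning. I will therefore check that the points fed in, namely $x_n$ and $x_{n-1}$ from \eqref{eq:page-halpern-banach-1}, are indeed generated query points that are $\cF_\tau$-measurable. I will also check that the $L^2(\pi)$ bound on $g$ is used pointwise on the conditioning event. This holds because the constants in Lemma~\ref{lem:poisson-block} do not depend on $g$ beyond $\norm{g}_{2,\pi}$. Beyond this, the argument is a direct substitution, and I expect no real obstacle.
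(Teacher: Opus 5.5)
Your proposal is correct and follows the paper's proof: condition on $\cF_\tau$, apply Lemma~\ref{lem:poisson-block} to the centered functions $g_x(y) = H(x,y) - Tx$ and $g_{x,z}(y) = H(x,y) - H(z,y) - (Tx - Tz)$, and use Condition~\ref{cond:page-banach} to bound their $L^2(\pi)$ norms by $\sigma_T$ and $L_T\norm{x-z}$. Your remark that the condition holds only at generated query points makes explicit something the paper leaves implicit, and it does not change the argument.
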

\begin{proof}
After conditioning on $\cF_\tau$, the query points and block length are deterministic. For the refresh estimator $R_T$, we apply Lemma~\ref{lem:poisson-block} to $g_x(y) := H(x, y) - Tx$. For the difference estimator $D_T$, we apply the same lemma to
\begin{equation}
g_{x, z}(y) := H(x, y) - H(z, y) - (Tx - Tz).
\end{equation}
Both functions have stationary-mean zero. Condition~\ref{cond:page-banach} bounds their $L^2(\pi)$ norms by $\sigma_T$ and $L_T\norm{x - z}$, respectively. Substituting these into Lemma~\ref{lem:poisson-block} gives the stated inequalities.
\end{proof}

\subsection{Estimator schedule}

\begin{lemma}\label{lem:page-schedule-banach}
Consider the PAGE-Halpern recursion \eqref{eq:page-halpern-banach-1}--\eqref{eq:page-halpern-banach-2}. Fix a target estimator scale $\delta >0$. Choose 
\begin{equation}\label{eq:refresh-block-size-banach}
S_{1,0} \ge \frac{C_{\mathrm V}\sigma_T^2}{\delta^2},\qquad S_{1,n} \ge \frac{C_{\mathrm V}\sigma_T^2}{p_n\delta^2},
\end{equation}
and
\begin{equation}\label{eq:difference-block-size-banach}
S_{2,n} \ge \max\left\{\frac{C_{\mathrm V}L_T^2\norm{x_n-x_{n-1}}^2}{p_n^2\delta^2}, \frac{C_{\mathrm B}L_T\norm{x_n-x_{n-1}}}{p_n^{3/2}\delta}, 1 \right\}.
\end{equation}
Then
\begin{equation}\label{eq:banach-estimator-level}
\E\bigl[\norm{v_n-Tx_n}_2^2\bigr] \le \frac{100\delta^2}{n+1}, \qquad n \ge 0.
\end{equation}
\end{lemma}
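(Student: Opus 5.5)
The plan is to mirror the induction in the proof of Lemma~\ref{lem:page-schedule}, with Lemma~\ref{lem:page-blocks-banach} in place of Lemma~\ref{lem:page-blocks}. Set $e_n := v_n - Tx_n$ and $a_n := \E\bigl[\norm{e_n}_2^2\bigr]$, and prove $a_n \le 100\delta^2/(n+1)$ by induction on $n$.

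\textbf{Base case.} Since $v_0 = R_T(x_0;0,S_{1,0})$, the refresh second-moment bound of Lemma~\ref{lem:page-blocks-banach} and the choice \eqref{eq:refresh-block-size-banach} give $a_0 \le C_{\mathrm V}\sigma_T^2/S_{1,0} \le \delta^2$.

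\textbf{Inductive step.} At iteration $n\ge1$, first note the measurability facts. The point $x_n = (1-\beta_n)x_0 + \beta_n v_{n-1}$ is determined by $v_{n-1}$, so $x_n$, $x_{n-1}$ and $e_{n-1}$ are all $\cF_\tau$-measurable at the start of the next block. The coin toss and the block lengths $S_{1,n}, S_{2,n}$ are also fixed before the block is sampled, so Lemma~\ref{lem:page-blocks-banach} applies conditionally. There are two branches.
\begin{itemize}
\item \emph{Refresh branch.} The refresh bound with \eqref{eq:refresh-block-size-banach} gives conditional second moment at most $p_n\delta^2$. Weighted by the branch probability $p_n$, this contributes $p_n^2\delta^2$.
\item \emph{Difference branch.} Write $\zeta_n := D_T(x_n,x_{n-1};\tau,S_{2,n}) - (Tx_n - Tx_{n-1})$. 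Then $e_n = e_{n-1} + \zeta_n$. By Lemma~\ref{lem:page-blocks-banach} and \eqref{eq:difference-block-size-banach},
\begin{equation*}
\E\bigl[\norm{\zeta_n}_2^2 \mid \cF_\tau\bigr] \le p_n^2\delta^2, \qquad \bigl\lVert \E[\zeta_n\mid\cF_\tau]\bigr\rVert_2 \le p_n^{3/2}\delta.
\end{equation*}
The key point is that the block-size denominators use the same norm $\norm{x_n-x_{n-1}}$ that appears in Condition~\ref{cond:page-banach}, so the factors cancel exactly. Expanding the Euclidean square, conditioning on $\cF_\tau$, and applying Cauchy--Schwarz and Jensen gives
\begin{equation*}
\E\bigl[\norm{e_{n-1}+\zeta_n}_2^2\bigr] \le a_{n-1} + 2\sqrt{a_{n-1}}\,p_n^{3/2}\delta + p_n^2\delta^2 \le a_{n-1} + 21p_n^2\delta^2 .
\end{equation*}
The last step uses the induction hypothesis $\sqrt{a_{n-1}} \le 10\delta/\sqrt n$ together with $p_n^{3/2}/\sqrt n \le p_n^2$, which holds because $p_n = 2/(n+1)$ for $n\ge1$.
\end{itemize}

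\textbf{Closing the induction.} Combining the two branches gives
\begin{equation*}
a_n \le p_n^2\delta^2 + (1-p_n)\Bigl(\frac{100\delta^2}{n} + 21p_n^2\delta^2\Bigr).
\end{equation*}
The identity \eqref{eq:absorber} gives $(1-p_n)\,100\delta^2/n = 100\delta^2/(n+1) - 100\delta^2/(n(n+1))$. The remaining terms are at most $22p_n^2\delta^2 \le 88\delta^2/(n(n+1))$, which the negative term absorbs. Hence $a_n \le 100\delta^2/(n+1)$.

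\textbf{Main obstacle.} The algebra is routine. The step needing care is checking that the Euclidean inner-product expansion is legitimate even though the ambient non-expansiveness norm is not Hilbert. It is, because the estimator error is measured in $\norm{\cdot}_2$ throughout; the Banach norm enters only through the query displacement in Condition~\ref{cond:page-banach}. I also need $e_{n-1}$ to be $\cF_\tau$-measurable even though $x_n$ now depends on $v_{n-1}$ rather than on $x_{n-1} - v_{n-1}/L^*$, and this holds by construction.
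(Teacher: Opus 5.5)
Your proposal is correct and follows the paper's proof essentially step for step. Both use the same induction on $a_n=\E\bigl[\norm{e_n}_2^2\bigr]$ and the same refresh/difference split via Lemma~\ref{lem:page-blocks-banach}. Both bound the difference branch by $a_{n-1}+21p_n^2\delta^2$ using $p_n^{3/2}/\sqrt n\le p_n^2$, and both close the induction through the absorption identity \eqref{eq:absorber}. Your observation that the Euclidean inner-product expansion is legitimate is also the point the paper relies on implicitly: the estimator error is measured in $\norm{\cdot}_2$, and the working norm enters only through the query displacement in Condition~\ref{cond:page-banach}.
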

\begin{proof}
Let $e_n := v_n - Tx_n$ and $a_n := \bE\bigl[\norm{e_n}^2_2\bigr]$. The proof is identical in structure to Lemma~\ref{lem:page-schedule}, with $T$ replacing $F$. $a_0 \le \delta^2$ follows from the refresh block bound in Lemma~\ref{lem:page-blocks-banach} and the block length choice in \eqref{eq:refresh-block-size-banach}. Now, assume $a_{n-1} \le 100 \delta^2/n$. In the case of using a refresh block at iteration $n$,
\begin{equation}
\E\bigl[\norm{R_T(x_n; \tau, S_{1, n}) - T x_n}^2_2] \le p_n \delta^2.
\end{equation}
In the case of using a difference block, write
\begin{equation}
\zeta_n := D_T(x_n, x_{n-1}; \tau, S_{2, n}) - (Tx_n-Tx_{n-1}).
\end{equation}
By Lemma~\ref{lem:page-blocks-banach} and \eqref{eq:difference-block-size-banach},
\begin{equation}
\E\bigl[\norm{\zeta_n}^2_2 \mid \cF_\tau\bigr] \le p^2_n \delta^2, \qquad \bigl\lVert\E[\zeta_n \mid \cF_\tau]\bigr\rVert_2 \le p_n^{3/2} \delta
\end{equation}
In this branch, $e_n = e_{n-1} + \zeta_n$. Since $e_{n-1}$ is $\cF_\tau$-measurable,
\begin{equation}
\begin{aligned}
\E\bigl[\norm{e_{n-1} + \zeta_n}^2_2\bigr] &= a_{n-1} + 2\E\bigl[\langle e_{n-1}, \E[\zeta_n \mid \cF_\tau]\rangle\bigr] + \E \bigl[\norm{\zeta_n}^2_2\bigr] \\ 
&\le a_{n-1} + 2\sqrt{a_{n-1}}p_n^{3/2}\delta + p_n^2 \delta^2 \\
&\le a_{n-1} + 21 p_n^2 \delta^2.
\end{aligned}
\end{equation}
Combining the bounds for the two blocks,
\begin{equation}
a_n \le p_n^2 \delta^2 + (1-p_n) \left(\frac{100\delta^2}{n} + 21 p_n^2\delta^2\right).
\end{equation}
Finally, the same algebra as in the proof of Lemma~\ref{lem:page-schedule} yields $a_n \le 100\delta^2 / (n+1)$.
\end{proof}

\begin{theorem}[Markovian PAGE-Halpern in Banach spaces]\label{thm:page-halpern-banach}
Suppose Assumptions~\ref{ass:nonexpansive}, \ref{ass:ergodic} and \ref{ass:oracle-stable} hold in the non-expansiveness norm $\norm{\cdot}$. Suppose further that Condition~\ref{cond:page-banach} holds. Let $\kappa := \kappa_2$ be the stability constant from Lemma~\ref{lem:stability-constants}. Consider the PAGE-Halpern recursion \eqref{eq:page-halpern-banach-1}--\eqref{eq:page-halpern-banach-2} and use the block-size schedules of Lemma~\ref{lem:page-schedule-banach}. There exists a universal constant $C$ such that for every $N \ge 1$,
\begin{equation}\label{eq:banach-page-residual}
\E\bigl[\norm{x_N - T x_N}\bigr] \le \frac{\kappa (1 + \log(N+1))}{N+1} + C \mu_{\norm{\cdot}}\delta.
\end{equation}
Consequently, choosing
\begin{equation}
\delta = \frac{\epsilon}{2 C \mu_{\norm{\cdot}}}, \qquad N = \Theta\left(\frac{\kappa}{\epsilon} \log \frac{\kappa}{\epsilon}\right)
\end{equation}
ensures 
\begin{equation}
\E\bigl[\norm{x_N - Tx_N}\bigr] \le \epsilon.
\end{equation}
Moreover, the expected number of Markovian samples used by the continuing trajectory is
\begin{equation}\label{eq:banach-page-complexity}
\tilde O\left( \frac{C_V\bigl(\mu^2_{\norm{\cdot}}\sigma^2_T \kappa + (1 + \mu^2_{\norm{\cdot}})L^2_T \kappa^3\bigr)}{\epsilon^3} + \frac{\mu_{\norm{\cdot}}C_B L_T \kappa^{5/2}}{\epsilon^{5/2}}\right).
\end{equation}
In particular, the leading dependence on the target accuracy is $\tilde O(\epsilon^{-3})$.
\end{theorem}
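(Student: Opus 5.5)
The proof applies Theorem~\ref{thm:displacement-level} to the PAGE recursion. Set $e_n := v_n - Tx_n$. Then \eqref{eq:page-halpern-banach-1} reads
\begin{equation*}
x_n = (1-\beta_n)x_0 + \beta_n\bigl(Tx_{n-1} + U_n\bigr), \qquad U_n := e_{n-1},
\end{equation*}
so the scheme is exactly \eqref{eq:recursion-with-error} with perturbation $U_n = e_{n-1}$. Assumptions~\ref{ass:nonexpansive} and \ref{ass:oracle-stable} are in force, and $\kappa_1 \le \kappa_2 = \kappa$ by Jensen. It therefore suffices to bound two quantities: $r := \E\bigl[\norm{U_N}\bigr]$ and $a_n \ge \E\bigl[\norm{\Delta_n}\bigr]$, where $\Delta_n = \beta_n e_{n-1} - \beta_{n-1} e_{n-2}$ and $e_{-1} := 0$. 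All Euclidean estimates are transferred to $\norm{\cdot}$ by $\norm{z}\le \mu_{\norm{\cdot}}\norm{z}_2$. Lemma~\ref{lem:page-schedule-banach} gives $\E\bigl[\norm{e_n}_2^2\bigr]\le 100\delta^2/(n+1)$. Hence
\begin{equation*}
r \le 10\mu_{\norm{\cdot}}\delta/\sqrt N \le 10\mu_{\norm{\cdot}}\delta .
\end{equation*}

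For the increments I split
\begin{equation*}
\Delta_n = \beta_n\,(e_{n-1}-e_{n-2}) + (\beta_n-\beta_{n-1})\,e_{n-2}.
\end{equation*}
The second piece has first moment at most $\frac{1}{n(n+1)}\cdot 10\mu_{\norm{\cdot}}\delta/\sqrt{n-1}$, which is summable against the weight $n$. For the first piece I condition on the PAGE coin at iteration $n-1$; the coin is independent of the past.
\begin{itemize}
\item In the difference branch, $e_{n-1}-e_{n-2} = \zeta_{n-1}$. Its conditional second moment is at most $p_{n-1}^2\delta^2$ by Lemma~\ref{lem:page-blocks-banach} and \eqref{eq:difference-block-size-banach}.
\item In the refresh branch, which has probability $p_{n-1}$, the increment is a fresh refresh error minus $e_{n-2}$. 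Its second moment is at most $2p_{n-1}\delta^2 + 200\delta^2/(n-1)$.
\end{itemize}
Weighting by the branch probabilities gives
\begin{equation*}
\E\bigl[\norm{e_{n-1}-e_{n-2}}_2^2\bigr] \lesssim \delta^2/n^2 ,
\end{equation*}
since $p_n\asymp 1/n$. Thus $q_n \le Q^2/n^2$ in the sense of Theorem~\ref{thm:displacement-level}, with $Q = C'\mu_{\norm{\cdot}}\delta$. Theorem~\ref{thm:displacement-level} then yields \eqref{eq:banach-page-residual} after collecting constants into a universal $C$. The choice $\delta=\epsilon/(2C\mu_{\norm{\cdot}})$ makes the estimator term equal to $\epsilon/2$. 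Taking $N=\Theta\bigl((\kappa/\epsilon)\log(\kappa/\epsilon)\bigr)$ makes the anchor term at most $\epsilon/2$.

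For the sample count I follow the proof of Theorem~\ref{thm:page-halpern}, with one change. The cocoercive displacement bound \eqref{eq:halpern-displacement} is unavailable, so I instead prove a second-moment version of the displacement recursion in the proof of Theorem~\ref{thm:displacement-level}. Write $d_n = x_n - x_{n-1}$ and use Minkowski in $L^2$:
\begin{equation*}
\bigl(\E\norm{d_n}^2\bigr)^{1/2} \le \frac{\kappa}{n(n+1)} + \frac{n-1}{n}\bigl(\E\norm{d_{n-1}}^2\bigr)^{1/2} + \bigl(\E\norm{\Delta_n}^2\bigr)^{1/2}.
\end{equation*}
Here the last term is at most $C'\mu_{\norm{\cdot}}\delta/n$ by the bound above. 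Multiplying by $n$ and telescoping gives
\begin{equation*}
n\bigl(\E\norm{d_n}^2\bigr)^{1/2} \le \kappa\log(n+1) + C'\mu_{\norm{\cdot}}\delta\, n .
\end{equation*}
Consequently
\begin{equation*}
\sum_{n\le N}\frac{\E\norm{d_n}^2}{p_n^2} \lesssim \kappa^2 N\log^2 N + \mu_{\norm{\cdot}}^2\delta^2 N^3 .
\end{equation*}
Now plug this into the three cost components.
\begin{itemize}
\item Refresh cost: $\sum_n p_nS_{1,n} = O(C_{\mathrm V}\sigma_T^2N/\delta^2)$. This gives the $\mu^2_{\norm{\cdot}}\sigma_T^2\kappa/\epsilon^3$ term up to logarithms.
\item Variance part of the difference blocks: $(C_{\mathrm V}L_T^2/\delta^2)\sum_n \E\norm{d_n}^2/p_n^2$. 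With $\delta\asymp\epsilon/\mu_{\norm{\cdot}}$ and $N\asymp\kappa/\epsilon$, the term $\mu_{\norm{\cdot}}^2\delta^2N^3$ gives order $L_T^2\kappa^3/\epsilon^3$, up to logarithms and $\mu$-dependent factors. The term $\kappa^2 N$ gives order $\mu^2_{\norm{\cdot}}L_T^2\kappa^3/\epsilon^3$, again up to logarithms. Together these produce the $(1+\mu^2_{\norm{\cdot}})L_T^2\kappa^3$ term.
\item Bias part of the difference blocks: apply Cauchy--Schwarz against $\sum_n 1/p_n = O(N^2)$, exactly as in Theorem~\ref{thm:page-halpern}. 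This gives the $\epsilon^{-5/2}$ term.
\end{itemize}

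The main obstacle is this last step, the $L^2$ displacement bound that replaces the cocoercive Lemma~3.3 of \citet{cai2022stochastic}. It requires $\E\norm{\Delta_n}^2 \lesssim \mu_{\norm{\cdot}}^2\delta^2/n^2$ and not merely a first-moment bound. That in turn relies on the refresh branch being entered only with probability $p_{n-1}$, so that the $O(\delta^2/n)$ jump it can cause is damped to $O(\delta^2/n^2)$. I would check this carefully at $n=1,2$, where $p_n=1$, and confirm that the block lengths, being $\cF_\tau$-measurable, leave the coin independent of the conditioning used above.
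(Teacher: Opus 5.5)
Your proposal is correct and follows essentially the same route as the paper. You identify $U_n = e_{n-1}$, bound the terminal error, and control $\E\bigl[\norm{\Delta_n}_2^2\bigr] \lesssim \delta^2/n^2$ by splitting on the PAGE coin at iteration $n-1$, with the refresh branch damped by $p_{n-1}$. You then invoke Theorem~\ref{thm:displacement-level} and redo the displacement recursion in $L^2$ via Minkowski in place of the cocoercive estimate. The only differences are cosmetic: you use Cauchy--Schwarz for the bias-control sum where the paper plugs in the pointwise bound on $D_n$, and you make $\kappa_1 \le \kappa_2$ explicit. Also note that both pieces of the variance-control cost carry a factor $\mu_{\norm{\cdot}}^2$, not just one; this is still dominated by the stated $(1+\mu_{\norm{\cdot}}^2)$.
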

\begin{proof}
The proof will follow through establishing bounds on the final perturbation $\E\bigl[\norm{U_N}\bigr]$ and on the scaled perturbation increments $\Delta_n$ in Theorem~\ref{thm:displacement-level}. Let $e_n:=v_n-Tx_n$. We start by writing
\begin{equation}
x_n = (1-\beta_n)x_0 + \beta_n v_{n-1} = (1-\beta_n)x_0 + \beta_n(Tx_{n-1} + e_{n-1}).
\end{equation}
Thus \eqref{eq:page-halpern-banach-1} is exactly \eqref{eq:recursion-with-error} with $U_n=e_{n-1}$. Lemma~\ref{lem:page-schedule-banach} provides the Euclidean estimator bound \eqref{eq:banach-estimator-level}. Converting this estimate to the non-expansiveness norm gives us a bound on the final perturbation norm:
\begin{equation}\label{eq:banach-terminal-error}
\E\bigl[\norm{U_N}\bigr] = \E\bigl[\norm{e_{N-1}}\bigr] \le \mu_{\norm{\cdot}}\E\bigl[\norm{e_{N-1}}_2\bigr] \le \mu_{\norm{\cdot}}\bigl(\E\bigl[\norm{e_{N-1}}_2^2\bigr]\bigr)^{1/2} \le C\mu_{\norm{\cdot}}\delta.
\end{equation}
It remains to control the scaled perturbation increments
\begin{equation}
\Delta_n = \beta_n U_n - \beta_{n-1}U_{n-1} = \beta_n e_{n-1} - \beta_{n-1} e_{n-2},
\end{equation}
with the convention $e_{-1}=0$. For $n=1$, we have $\Delta_1 = \beta_1 e_0$, and \eqref{eq:banach-estimator-level} implies $\E\bigl[\norm{\Delta_1}_2^2\bigr]\le C\delta^2$. Now, fix $n \ge 2$, and let $\tau$ be the current chain time immediately before the PAGE coin is tossed at iteration $n-1$. The variables $x_{n-1}$, $x_{n-2}$, $v_{n-2}$, and hence $e_{n-2}$, are $\cF_\tau$-measurable. Let $\cR_{n-1}$ and $\cD_{n-1}$ denote the events that the coin selects a refresh block and a difference block, respectively. Since the PAGE coin is independent of $\cF_\tau$,
\begin{equation}
\PP(\cR_{n-1}\mid\cF_\tau) = p_{n-1},\qquad \PP(\cD_{n-1}\mid\cF_\tau) = 1-p_{n-1}.
\end{equation}
On the event $\cD_{n-1}$, we define
\begin{equation}
\zeta_{n-1} := D_T(x_{n-1}, x_{n-2}; \tau, S_{2,n-1})-(Tx_{n-1} - Tx_{n-2}),
\end{equation}
so that $e_{n-1} = e_{n-2} + \zeta_{n-1}$. Therefore,
\begin{equation}
\Delta_n = (\beta_n - \beta_{n-1}) e_{n-2} + \beta_n \zeta_{n-1}.
\end{equation}
Using $\norm{a+b}_2^2 \le 2\norm{a}_2^2 + 2\norm{b}_2^2$, the estimator bound \eqref{eq:banach-estimator-level}, and the difference-block bound in Lemma~\ref{lem:page-schedule-banach}, the contribution of $\cD_{n-1}$ to the second moment of $\Delta_n$ satisfies
\begin{equation}
\begin{aligned}
\E[\norm{\Delta_n}_2^2\mathbf 1_{\cD_{n-1}}] &\le 2(\beta_n - \beta_{n-1})^2 \E[\norm{e_{n-2}}_2^2\mathbf 1_{\cD_{n-1}}] + 2\beta_n^2\E[\norm{\zeta_{n-1}}_2^2\mathbf 1_{\cD_{n-1}}] \\
&\le 2(\beta_n - \beta_{n-1})^2\E[\norm{e_{n-2}}_2^2] + 2\beta_n^2p_{n-1}^2\delta^2 \\
&\le C\left((\beta_n - \beta_{n-1})^2\frac{\delta^2}{n} + p_{n-1}^2\delta^2\right) \le \frac{C\delta^2}{n^2}.
\end{aligned}
\end{equation}
Here we use $\beta_n - \beta_{n-1} = 1/(n(n+1))$, $\beta_n \le 1$, and $p_{n-1} = O(1/n)$.
On the event $\cR_{n-1}$, by the refresh block schedule
\begin{equation}
\E\bigl[\norm{e_{n-1}}_2^2 \mid \cF_\tau, \cR_{n-1}\bigr] \le p_{n-1}\delta^2,
\end{equation}
while by \eqref{eq:banach-estimator-level},
\begin{equation}
\E\big[\norm{e_{n-2}}_2^2]\le \frac{C\delta^2}{n}.
\end{equation}
Using $\E\bigl[\mathbf 1_{\cR_{n-1}} \mid \cF_\tau\bigr] = p_{n-1}$ and $\norm{a - b}_2^2 \le 2\norm{a}_2^2 + 2\norm{b}_2^2$,
\begin{equation}
\begin{aligned}
\E\bigl[\norm{\Delta_n}_2^2\mathbf 1_{\cR_{n-1}}\bigr] &\le C\E\bigl[\norm{e_{n-1}}_2^2\mathbf 1_{\cR_{n-1}}\bigr] + C\E\bigl[\norm{e_{n-2}}_2^2\mathbf 1_{\cR_{n-1}}\bigr] \\
&\le Cp_{n-1}\left(p_{n-1}\delta^2+\frac{\delta^2}{n}\right) \\
&\le \frac{C\delta^2}{n^2},
\end{aligned}
\end{equation}
where we use $p_{n-1}=O(1/n)$ in the last inequality. Since $\cR_{n-1}$ and $\cD_{n-1}$ constitute a partition of the sample space, combining their contributions yields
\begin{equation}
\E\bigl[\norm{\Delta_n}_2^2\bigr] \le \frac{C\delta^2}{n^2},\qquad n \ge 1.
\end{equation}
By the norm comparison in Section~\ref{sec:prelim},
\begin{equation}\label{eq:banach-increment-bound}
\E\bigl[\norm{\Delta_n}^2\bigr] \le \mu_{\norm{\cdot}}^2 \E\bigl[\norm{\Delta_n}_2^2\bigr] \le \frac{C\mu_{\norm{\cdot}}^2\delta^2}{n^2}.
\end{equation}
Applying Theorem~\ref{thm:displacement-level} with \eqref{eq:banach-terminal-error} and \eqref{eq:banach-increment-bound} proves the residual bound statement \eqref{eq:banach-page-residual}. We now establish the sample complexity result. A refresh block is used with probability $p_n$ and has length
\begin{equation}
S_{1,n} = O\left(\frac{C_{\mathrm V}\sigma_T^2}{p_n\delta^2}\right),
\end{equation}
so the expected refresh cost through time $N$ is
\begin{equation}\label{eq:banach-refresh-cost}
O\left(\frac{C_{\mathrm V} \sigma_T^2N}{\delta^2}\right).
\end{equation}
For the difference blocks, we need a displacement estimate in $\norm{\cdot}$. Let
\begin{equation}
d_n := x_n-x_{n-1},\qquad D_n := \bigl(\E\bigl[\norm{d_n}^2\bigr]\bigr)^{1/2}.
\end{equation}
From the inexact Halpern recursion,
\begin{equation}
d_n = (\beta_n - \beta_{n-1})(Tx_{n-1} - x_0) + \beta_{n-1}(Tx_{n-1} - Tx_{n-2}) + \Delta_n.
\end{equation}
Taking $L^2$ norms, using Minkowski's inequality, non-expansiveness of $T$, Lemma~\ref{lem:stability-constants}, and \eqref{eq:banach-increment-bound}, we obtain
\begin{equation}
D_n \le \frac{\kappa}{n(n+1)} + \frac{n-1}{n} D_{n-1} + \frac{C\mu_{\norm{\cdot}}\delta}{n}.
\end{equation}
Multiplying by $n$,
\begin{equation}
n D_n \le \frac{\kappa}{n+1} + (n-1)D_{n-1} + C\mu_{\norm{\cdot}}\delta.
\end{equation}
Summing this telescoping recursion from $1$ through $n$, we get
\begin{equation}\label{eq:banach-displacement-estimate}
D_n \le \frac{C\kappa(1+\log n)}{n} + C\mu_{\norm{\cdot}}\delta.
\end{equation}
Since $p_n^{-2} = O(n^2)$, by \eqref{eq:banach-displacement-estimate},
\begin{equation}
\begin{aligned}
\sum_{n=1}^N \frac{\E\bigl[\norm{x_n - x_{n-1}}^2\bigr]}{p_n^2} \le C\sum_{n=1}^N n^2\left(\frac{\kappa^2(1+\log n)^2}{n^2} + \mu_{\norm{\cdot}}^2\delta^2\right) = \tilde O\bigl(\kappa^2N + \mu_{\norm{\cdot}}^2\delta^2 N^3\bigr).
\end{aligned}
\end{equation}
The variance-control part of the difference-block cost is therefore
\begin{equation}\label{eq:banach-var-cost}
\tilde O\left(\frac{C_{\mathrm V} L_T^2}{\delta^2} \bigl(\kappa^2 N + \mu_{\norm{\cdot}}^2\delta^2 N^3\bigr)\right).
\end{equation}
Similarly, using \eqref{eq:banach-displacement-estimate} and $p_n^{-3/2} = O(n^{3/2})$,
\begin{equation}
\sum_{n=1}^N \frac{\E\bigl[\norm{x_n-x_{n-1}}\bigr]}{p_n^{3/2}} = \tilde O\bigl(\kappa N^{3/2} + \mu_{\norm{\cdot}}\delta N^{5/2}\bigr).
\end{equation}
Thus the bias-control part of the difference-block cost is
\begin{equation}\label{eq:banach-bias-cost}
\tilde O\left(\frac{C_{\mathrm B} L_T}{\delta}\bigl(\kappa N^{3/2} + \mu_{\norm{\cdot}}\delta N^{5/2}\bigr)\right).
\end{equation}
Finally, substituting
\begin{equation}
N = \tilde \Theta(\kappa/\epsilon),\qquad \delta = \Theta(\epsilon/\mu_{\norm{\cdot}}),
\end{equation}
the refresh cost \eqref{eq:banach-refresh-cost} becomes
\begin{equation}
\tilde O\left(\frac{C_{\mathrm V} \mu_{\norm{\cdot}}^2 \sigma_T^2\kappa}{\epsilon^3}\right),
\end{equation}
the variance-control cost \eqref{eq:banach-var-cost} becomes
\begin{equation}
\tilde O\left(\frac{C_{\mathrm V}\mu_{\norm{\cdot}}^2 L_T^2 \kappa^3}{\epsilon^3} + \frac{C_{\mathrm V} L_T^2 \kappa^3}{\epsilon^3}\right),
\end{equation}
the first term in the bias-control cost \eqref{eq:banach-bias-cost} becomes
\begin{equation}
\tilde O\left(\frac{\mu_{\norm{\cdot}} C_{\mathrm B} L_T \kappa^{5/2}}{\epsilon^{5/2}}\right),
\end{equation}
and the second term becomes
\begin{equation}
\tilde O\left(\frac{\mu_{\norm{\cdot}} C_{\mathrm B} L_T \kappa^{5/2}}{\epsilon^{5/2}}\right).
\end{equation}
Combining all of the contributions yields the stated cost \eqref{eq:banach-page-complexity}.
\end{proof}
\begin{remark}[Norm-comparison constants]\label{rem:norm-comparison}
The norm-comparison cost factor $\mu_{\norm{\cdot}}$ appears only when Euclidean estimator estimates are converted into the working norm. In the proof of Theorem~\ref{thm:page-halpern-banach}, it appears in the terminal perturbation bound \eqref{eq:banach-terminal-error} and squared in the increment estimate \eqref{eq:banach-increment-bound}. For many geometries, this comparison factor is mild. For instance, for $\ell_\infty$ and block-sup norms, $\mu_{\norm{\cdot}} \le 1$. For the span seminorm, $\norm{z}_{\mathrm{sp}} \le 2\norm{z}_\infty \le 2\norm{z}_2$, so $\mu_{\norm{\cdot}}\le 2$ before quotient normalization.
\end{remark}

\section{High-probability guarantees}\label{sec:hp}

The results of the previous sections are statements entirely about expected residual bounds. This section will provide the corresponding high-probability statement in the generality of a Banach space. The general structure of the analysis will remain similar to Section~\ref{sec:banach}. The new ingredient is estimator concentration. Since martingale concentration in a general norm requires smoothness of said norm, our approach will be to measure the recursive estimator in an auxiliary smooth norm, only to compare that norm back to the working norm $\norm{\cdot}$ at the end.

\subsection{Smooth high-probability oracle}

\begin{condition}[Smooth high-probability Markovian oracle]\label{cond:hp-page}
There is a norm $\norm{\cdot}_{\mathrm{s}}$ on $\R^d$ and constants $\alpha_{\mathrm{s}},\kappa_{\mathrm{s}} \ge 1$ such that
\begin{equation}
\norm{z}\le \alpha_{\mathrm{s}}\norm{z}_{\mathrm{s}}, \qquad z \in \R^d,
\end{equation}
and $(\R^d,\norm{\cdot}_{\mathrm{s}})$ is $\kappa_{\mathrm{s}}$-smooth:
\begin{equation}
\norm{z+h}_{\mathrm{s}}^2 \le \norm{z}_{\mathrm{s}}^2 + \langle \nabla \norm{z}_{\mathrm{s}}^2, h\rangle + \kappa_{\mathrm{s}}\norm{h}_{\mathrm{s}}^2,\qquad z, h \in \R^d.
\end{equation}
The stochastic operator permits Markovian multi-point access. Moreover, every generated query point $x$ satisfies
\begin{equation}
\sum_{y \in \cY} \pi(y)\norm{H(x,y) - Tx}_{\mathrm{s}}^2 \le \sigma_{\mathrm{s}}^2,
\end{equation}
and every generated query-point pair $x,z$ satisfies
\begin{equation}\label{eq:hp-page-diff}
\sum_{y \in \cY} \pi(y) \norm{H(x,y) - H(z,y) - (Tx - Tz)}_{\mathrm{s}}^2 \le L_{\mathrm{s}}^2 \norm{x-z}^2.
\end{equation}
\end{condition}

When the working norm is Euclidean, one may take $\norm{\cdot}_{\mathrm{s}} = \norm{\cdot}_2$, $\alpha_{\mathrm{s}} = 1$, and $\kappa_{\mathrm{s}} = 1$. When the working norm is non-smooth, the estimator norm can be smoothed without changing the fixed-point geometry. For example, if $\norm{\cdot} = \norm{\cdot}_\infty$, take $\norm{\cdot}_{\mathrm{s}} = \norm{\cdot}_q$ with $q = \max\{2, \lceil\log(\max\{e,d\})\rceil\}$. Then
\begin{equation}
\norm{z}_\infty \le \norm{z}_q \le d^{1/q} \norm{z}_\infty \le e\norm{z}_\infty,
\end{equation}
and $\kappa_{\mathrm{s}} = q - 1 = O(\log d)$. The same construction applies to block-sup norms by smoothing the maximum over blocks. For an $\ell_1$ working norm, one may take the Euclidean estimator norm, which gives $\alpha_{\mathrm{s}}\le \sqrt d$ and $\kappa_{\mathrm{s}}=1$. Sharper comparisons can also follow from additional structure in a particular application.

\subsection{High-probability Markovian blocks}

We first prove the Markovian concentration estimate that replaces the second-moment block estimate used in Sections~\ref{sec:hilbert} and~\ref{sec:banach}.

\begin{lemma}[Smooth-norm Markovian block tail]\label{lem:hp-poisson-block}
Suppose Assumption~\ref{ass:ergodic} holds. Let $g: \cY \to \R^d$ satisfy
\begin{equation}
\sum_y \pi(y)g(y) = 0,\qquad \sup_{y \in \cY}\norm{g(y)}_{\mathrm{s}} \le M.
\end{equation}
There exists a constant $C_{\mathrm{hp}}$, depending only on the finite-state chain, such that for any stopping time $\tau$, any integer-valued $\cF_\tau$-measurable block length $S \ge 1$, and any $\eta \in (0, 1)$, conditioned on $\cF_\tau$,
\begin{equation}\label{eq:hp-poisson-block}
\Bigl\lVert \frac1S \sum_{i=1}^S g(Y_{\tau+i}) \Bigr\rVert_{\mathrm{s}} \le C_{\mathrm{hp}} M \left(\sqrt{\frac{\kappa_{\mathrm{s}} + \log(1/\eta)}{S}} +\frac1S\right)
\end{equation}
with probability at least $1-\eta$.
\end{lemma}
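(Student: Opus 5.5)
Condition on $\cF_\tau$ throughout, so that $S$ is deterministic. We follow the Poisson-equation route of Lemma~\ref{lem:poisson-block}, but measure everything in the smooth norm $\norm{\cdot}_{\mathrm{s}}$.

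\textbf{Poisson solution and decomposition.} Let $\nu_g=\sum_{t\ge0}P^tg$. By Assumption~\ref{ass:ergodic}, $P^tg(y)=\sum_{y'}(P^t(y,y')-\pi(y'))g(y')$ because $g$ has zero stationary mean. The triangle inequality in $\norm{\cdot}_{\mathrm{s}}$ then gives $\norm{P^tg(y)}_{\mathrm{s}}\le 2C_{\mathrm{mix}}\rho^tM$. Summing the geometric series yields $\sup_y\norm{\nu_g(y)}_{\mathrm{s}}\le C_\nu M$ with $C_\nu=2C_{\mathrm{mix}}/(1-\rho)$, and hence also $\norm{P\nu_g}_{\infty,\mathrm{s}}\le C_\nu M$, since $P\nu_g(y)$ is a convex combination of values of $\nu_g$. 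Exactly as in the proof of Lemma~\ref{lem:poisson-block}, write
\[
\sum_{i=1}^S g(Y_{\tau+i})=\sum_{i=1}^S M_i+(P\nu_g)(Y_\tau)-(P\nu_g)(Y_{\tau+S}),
\]
where $M_i=\nu_g(Y_{\tau+i})-(P\nu_g)(Y_{\tau+i-1})$. The strong Markov property makes $(M_i)$ a martingale difference sequence with respect to $(\cF_{\tau+i})$, with the almost-sure bound $\norm{M_i}_{\mathrm{s}}\le 2C_\nu M$. The boundary term has norm at most $2C_\nu M$ deterministically, and after dividing by $S$ it produces the $C_{\mathrm{hp}}M/S$ term in \eqref{eq:hp-poisson-block}.

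\textbf{Martingale concentration in the smooth norm.} The remaining task is to show that, with probability at least $1-\eta$,
\[
\Bigl\lVert\sum_{i=1}^S M_i\Bigr\rVert_{\mathrm{s}}\le C M\sqrt{S(\kappa_{\mathrm{s}}+\log(1/\eta))}.
\]
This is an Azuma--Hoeffding-type inequality for bounded martingale differences in a $\kappa_{\mathrm{s}}$-smooth space. We would invoke it directly from \citet{pinelis1994optimum} or \citet{juditsky2023largedeviationsvectorvaluedmartingales}. If a self-contained argument is preferred, the plan is as follows. Set $Z_k=\sum_{i\le k}M_i$ and $\Phi(z)=\norm{z}_{\mathrm{s}}^2$.
\begin{enumerate}
\item The smoothness inequality, together with the vanishing conditional mean of $M_k$, gives $\E[\Phi(Z_k)\mid\cF_{\tau+k-1}]\le\Phi(Z_{k-1})+\kappa_{\mathrm{s}}b^2$ with $b=2C_\nu M$. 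Iterating this already yields $\E\norm{Z_S}_{\mathrm{s}}\le b\sqrt{\kappa_{\mathrm{s}}S}$.
\item The deviation $\norm{Z_S}_{\mathrm{s}}-\E\norm{Z_S}_{\mathrm{s}}$ is controlled by the bounded-difference martingale obtained by taking conditional expectations of $\norm{Z_S}_{\mathrm{s}}$ given $\cF_{\tau+k}$. Each of its increments is bounded by $2b$, because $\norm{\cdot}_{\mathrm{s}}$ is $1$-Lipschitz and only $M_k$ changes at step $k$. Scalar Azuma then gives a deviation of order $b\sqrt{S\log(1/\eta)}$.
\end{enumerate}
Adding the two pieces and using $\sqrt a+\sqrt c\le\sqrt{2(a+c)}$ produces the $\sqrt{\kappa_{\mathrm{s}}+\log(1/\eta)}$ form in \eqref{eq:hp-poisson-block}.

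\textbf{Main obstacle.} The delicate point is step 2. The Doob martingale of $\norm{Z_S}_{\mathrm{s}}$ with respect to a Markov filtration does not automatically have increments bounded by $2b$, since future $M_j$ depend on the current state. A cleaner route is the exponential-supermartingale argument of \citet{pinelis1994optimum}, applied to a smoothed version of $\cosh\lambda\norm{Z_k}_{\mathrm{s}}$. That argument needs only the conditional-mean-zero property and the almost-sure bound on $\norm{M_k}_{\mathrm{s}}$, so it is insensitive to the Markov dependence. We would rely on that result, paying a universal constant factor.

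Finally, the failure probability $\eta$ is spent entirely on the martingale term, and all constants depend only on $C_{\mathrm{mix}}$ and $\rho$. Collecting terms gives \eqref{eq:hp-poisson-block} with $C_{\mathrm{hp}}$ proportional to $C_\nu$.
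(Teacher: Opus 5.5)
This is essentially the paper's proof. You use the same Poisson solution bounded in $\norm{\cdot}_{\mathrm{s}}$, the same martingale-plus-boundary decomposition with $\norm{M_i}_{\mathrm{s}}\le 2C_\nu M$ and an $O(M/S)$ boundary term, and then an off-the-shelf martingale tail bound in a $\kappa_{\mathrm{s}}$-smooth space. The paper invokes the vector Freedman inequality of \citet[Theorem~B.2]{luo2026unified}; you invoke \citet{juditsky2023largedeviationsvectorvaluedmartingales} or \citet{pinelis1994optimum}. With the Juditsky--Nemirovski bound your argument is complete.

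The one thing to fix is your final decision to rely on Pinelis while ``paying a universal constant factor.'' Adding the smoothness inequality at $h$ and $-h$ shows that $\norm{\cdot}_{\mathrm{s}}$ is $(2,D)$-smooth with $D^2=\kappa_{\mathrm{s}}$. Pinelis's bound $\PP\bigl(\norm{\sum_{i\le S}M_i}_{\mathrm{s}}\ge r\bigr)\le 2\exp\bigl(-r^2/(2D^2b^2S)\bigr)$ then gives a deviation of order $b\sqrt{S\kappa_{\mathrm{s}}\log(2/\eta)}$. This is multiplicative in $\kappa_{\mathrm{s}}$ and $\log(1/\eta)$. Its ratio to $\sqrt{\kappa_{\mathrm{s}}+\log(1/\eta)}$ grows like $\sqrt{\min\{\kappa_{\mathrm{s}},\log(1/\eta)\}}$, so it does not yield the lemma with $C_{\mathrm{hp}}$ depending only on the chain. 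The additive form needs a bound in which $\kappa_{\mathrm{s}}$ enters only through the mean, as in Juditsky--Nemirovski or the Freedman-type inequality the paper cites.

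Your self-contained plan can also be repaired. Apply the bounded-difference argument to the path functional $\norm{\sum_{i=1}^S g(Y_{\tau+i})}_{\mathrm{s}}$ rather than to $\norm{Z_S}_{\mathrm{s}}$. Bound each Doob increment by maximally coupling the two continuations of the chain started from different values of $Y_{\tau+k}$. The expected number of disagreeing states is at most $1+2C_{\mathrm{mix}}\rho/(1-\rho)$, so the increments are $O(M)$ with a chain-dependent constant. Scalar Azuma together with your step~1 then gives exactly the additive form.
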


\begin{proof}
Condition on $\cF_\tau$. Since $S$ is $\cF_\tau$-measurable, the block length is fixed before the block is sampled. It is therefore enough to prove the estimate for deterministic $S$.

Let $\nu_g = \sum_{t \ge 0}P^t g$ be the Poisson solution. The finite-state geometric-ergodicity bound gives
\begin{equation}
\sup_{y \in \cY} \norm{\nu_g(y)}_{\mathrm{s}}\le C_\nu M,
\end{equation}
where $C_\nu$ depends only on the chain. Let
\begin{equation}
M_i := \nu_g(Y_{\tau+i}) - (P\nu_g)(Y_{\tau+i-1}), \qquad 1 \le i\le S.
\end{equation}
Then $(M_i)_{i=1}^S$ is a martingale-difference sequence conditioned on $\cF_\tau$, and by the Poisson equation,
\begin{equation}
\sum_{i=1}^S g(Y_{\tau+i}) = \sum_{i=1}^S M_i + (P\nu_g)(Y_\tau) - (P\nu_g)(Y_{\tau+S}).
\end{equation}
The boundary term has $\norm{\cdot}_{\mathrm{s}}$-norm at most $2C_\nu M$. Moreover, $\norm{M_i}_{\mathrm{s}} \le 2C_\nu M$ almost surely. Hence the conditional light-tail hypothesis of the vector-valued Freedman inequality in $\kappa_{\mathrm{s}}$-smooth spaces is satisfied with variance proxy of order $M^2$ \citep[Theorem~B.2]{luo2026unified}. Thus, with conditional probability at least $1-\eta$,
\begin{equation}
\Bigl\lVert\sum_{i=1}^S M_i\Bigr\rVert_{\mathrm{s}} \le C M\sqrt{S(\kappa_{\mathrm{s}} + \log(1/\eta))}.
\end{equation}
Dividing the martingale and boundary bounds by $S$ proves \eqref{eq:hp-poisson-block}.
\end{proof}

\begin{lemma}[High-probability block estimates]\label{lem:hp-page-blocks}
Suppose Assumption~\ref{ass:ergodic} and Condition~\ref{cond:hp-page} hold. Let $\tau$ be the current chain time, let $x, z$ be $\cF_\tau$-measurable query points, and let $S \ge 1$ be an integer-valued $\cF_\tau$-measurable block length. Then, conditioned on $\cF_\tau$, the refresh block satisfies
\begin{equation}
\norm{R_T(x; \tau, S) - Tx}_{\mathrm{s}} \le C_{\mathrm{hp}} \sigma_{\mathrm{s}} \left(\sqrt{\frac{\kappa_{\mathrm{s}} + \log(1/\eta)}{S}} + \frac1S\right)
\end{equation}
with probability at least $1 - \eta$, and the difference block satisfies
\begin{equation}
\norm{D_T(x, z; \tau,S) - (Tx - Tz)}_{\mathrm{s}} \le C_{\mathrm{hp}} L_{\mathrm{s}} \norm{x-z} \left(
\sqrt{\frac{\kappa_{\mathrm{s}} + \log(1/\eta)}{S}} +\frac1S \right)
\end{equation}
with probability at least $1 - \eta$.
\end{lemma}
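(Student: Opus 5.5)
The plan is to reduce both estimates to the smooth-norm Markovian block tail of Lemma~\ref{lem:hp-poisson-block}, in the same way that Lemma~\ref{lem:page-blocks-banach} was reduced to Lemma~\ref{lem:poisson-block}. First condition on $\cF_\tau$. Since $x$, $z$ and $S$ are $\cF_\tau$-measurable, the centered functions
\begin{equation}
g_x(y) := H(x,y) - Tx, \qquad g_{x,z}(y) := H(x,y) - H(z,y) - (Tx - Tz)
\end{equation}
are deterministic maps $\cY \to \R^d$, and $S$ is a fixed integer. By the definition $Tx = \sum_y \pi(y) H(x,y)$, both functions have stationary mean zero. The two block errors are exactly their block averages:
\begin{equation}
R_T(x;\tau,S) - Tx = \frac1S\sum_{i=1}^S g_x(Y_{\tau+i}), \qquad D_T(x,z;\tau,S) - (Tx-Tz) = \frac1S\sum_{i=1}^S g_{x,z}(Y_{\tau+i}).
\end{equation}

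The one point that needs care is a mismatch of hypotheses. Lemma~\ref{lem:hp-poisson-block} asks for a uniform bound $\sup_y \norm{g(y)}_{\mathrm{s}} \le M$, whereas Condition~\ref{cond:hp-page} only controls $L^2(\pi)$ moments in $\norm{\cdot}_{\mathrm{s}}$. On a finite irreducible chain this gap is harmless. We have $\pi_{\min} := \min_y \pi(y) > 0$, and for every $y'$,
\begin{equation}
\pi_{\min}\norm{g(y')}_{\mathrm{s}}^2 \le \sum_{y}\pi(y)\norm{g(y)}_{\mathrm{s}}^2 .
\end{equation}
Combining this with Condition~\ref{cond:hp-page} gives
\begin{equation}
\sup_y \norm{g_x(y)}_{\mathrm{s}} \le \pi_{\min}^{-1/2}\sigma_{\mathrm{s}}, \qquad \sup_y \norm{g_{x,z}(y)}_{\mathrm{s}} \le \pi_{\min}^{-1/2}L_{\mathrm{s}}\norm{x-z}.
\end{equation}
The difference bound uses \eqref{eq:hp-page-diff} evaluated at the fixed pair $(x,z)$. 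This is the same device already used in Lemma~\ref{lem:poisson-block} to pass from $\norm{g}_{2,\pi}$ to $\norm{g}_{\infty,2}$.

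Next, apply Lemma~\ref{lem:hp-poisson-block} separately to $g_x$ with $M = \pi_{\min}^{-1/2}\sigma_{\mathrm{s}}$ and to $g_{x,z}$ with $M = \pi_{\min}^{-1/2}L_{\mathrm{s}}\norm{x-z}$. Each application yields the displayed bound with conditional probability at least $1-\eta$. The factor $\pi_{\min}^{-1/2}$ depends only on the chain, so it can be absorbed into $C_{\mathrm{hp}}$. Redefining $C_{\mathrm{hp}}$ this way keeps it a chain-only constant, consistent with how constants are enlarged elsewhere in the paper. In the degenerate case $x = z$, the function $g_{x,z}$ is identically zero and the difference statement holds trivially.

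The main obstacle is the sup-versus-$L^2$ issue treated above; everything else is a direct substitution. The resulting dependence on $\pi_{\min}$ is crude but legitimate, because Lemma~\ref{lem:hp-poisson-block} already allows its constant to depend on the chain.
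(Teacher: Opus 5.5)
Your proposal is correct and matches the paper's proof. Both apply Lemma~\ref{lem:hp-poisson-block} to the centered functions $g_x$ and $g_{x,z}$, convert the $L^2(\pi)$ bounds of Condition~\ref{cond:hp-page} into sup bounds with the factor $\pi_{\min}^{-1/2}$, and absorb that factor into the chain-dependent constant $C_{\mathrm{hp}}$; your remarks on conditioning (so that $M$ is fixed given $\cF_\tau$) and on the degenerate case $x=z$ are harmless additions.
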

\begin{proof}
We simply apply Lemma~\ref{lem:hp-poisson-block} to
\begin{equation}
g_x(y) := H(x,y) - Tx
\end{equation}
for the refresh block, and to
\begin{equation}
g_{x,z}(y) := H(x,y) - H(z,y) - (Tx - Tz)
\end{equation}
for the difference block. Both functions have stationary mean zero. Writing
$\pi_{\min} := \min_{y\in\cY} \pi(y) > 0$, we have by Condition~\ref{cond:hp-page},
\begin{equation}
\sup_{y \in \cY} \norm{g_x(y)}_{\mathrm{s}} \le \pi_{\min}^{-1/2} \sigma_{\mathrm{s}}, \qquad \sup_{y \in \cY} \norm{g_{x,z}(y)}_{\mathrm{s}} \le \pi_{\min}^{-1/2} L_{\mathrm{s}}\norm{x-z}.
\end{equation}
Lemma~\ref{lem:hp-poisson-block} therefore provides  the stated estimates after absorbing $\pi_{\min}^{-1/2}$ into the chain-dependent constant $C_{\mathrm{hp}}$.
\end{proof}

\subsection{High-probability estimator schedule}
We now revisit the Banach-space PAGE-Halpern recursion of \eqref{eq:page-halpern-banach-1}--\eqref{eq:page-halpern-banach-2}. In the nomenclature of \citet{luo2026unified}, this is the first-order differential specialization of the unified recursive estimator, where the refresh blocks rebuild an estimate of $T x_n$, while difference blocks correct the previous estimate using the same sampled Markovian states at successive iterates $x_n$ and $x_{n-1}$. Let us define
\begin{equation}
\ell_N := 1 + \log(N+1), \qquad \Lambda_{N, \eta} := \kappa_s + \log(32(N+1)/\eta),
\end{equation}
and set $p_0 := 1$.

\begin{lemma}\label{lem:hp-page-schedule}
Let $N \ge 1$, $\eta \in (0, 1)$, and $\delta > 0$. Choose
\begin{equation}\label{eq:hp-refresh-size}
S_{1,0} \ge C_{\mathrm{hp}}\left(\frac{\Lambda_{N,\eta}\sigma_{\mathrm{s}}^2}{\delta^2} + \frac{\sigma_{\mathrm{s}}}{\delta}\right),
\end{equation}
\begin{equation}\label{eq:hp-refresh-size-n}
S_{1,n} \ge C_{\mathrm{hp}}\left(\frac{\Lambda_{N,\eta}\sigma_{\mathrm{s}}^2}{p_n\delta^2} + \frac{\sigma_{\mathrm{s}}}{\sqrt{p_n}\delta}\right)
\end{equation}
and
\begin{equation}\label{eq:hp-difference-size}
S_{2,n} \ge C_{\mathrm{hp}}\left(\frac{\Lambda_{N,\eta}L_{\mathrm{s}}^2\norm{x_n - x_{n-1}}^2\ell_N^2}{p_n^2\delta^2} + \frac{L_{\mathrm{s}}\norm{x_n - x_{n-1}}\ell_N}{p_n\delta} + 1\right).
\end{equation}
Let $e_n := v_n - Tx_n$, with the convention $e_{-1} := 0$, and let $\cR_n$ denote the event that iteration $n$ uses a refresh block. Then, with probability at least $1 - \eta$,
\begin{equation}\label{eq:hp-estimator-uniform}
\max_{0 \le n \le N}\norm{e_n}_{\mathrm{s}} \le \delta,
\end{equation}
\begin{equation}\label{eq:hp-increment-certificate}
\sum_{n=1}^N n\norm{\beta_n e_{n-1} - \beta_{n-1}e_{n-2}}_{\mathrm{s}} \le C\delta N(1 + \log(1/\eta)),
\end{equation}
and
\begin{equation}\label{eq:hp-refresh-count}
\sum_{n=1}^N \frac{\mathbf 1_{\cR_n}}{p_n} \le C N(1 + \log(1/\eta)).
\end{equation}
\end{lemma}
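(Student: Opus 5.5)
The plan is to intersect three good events and bound the failure probability of each with a union bound: (i) every Markovian block satisfies the tail bound of Lemma~\ref{lem:hp-page-blocks}; (ii) the PAGE coins satisfy a Bernstein-type count bound; (iii) the $\ell_N$ factors and the logarithm in $\Lambda_{N,\eta}$ absorb the union over at most $N+1$ blocks. No martingale argument over iterations is needed. The $\ell_N$ factor in \eqref{eq:hp-difference-size} is designed so that the difference-block errors can be summed by the triangle inequality over an entire epoch between refreshes.

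\emph{Step 1 (block events).} At iteration $n$ exactly one block is drawn, starting at a stopping time $\tau$. Its query points and its length $S_{1,n}$ or $S_{2,n}$ are $\cF_\tau$-measurable, so Lemma~\ref{lem:hp-page-blocks} applies conditionally with confidence level $\eta' := \eta/(32(N+1))$. Then $\kappa_{\mathrm s}+\log(1/\eta') = \Lambda_{N,\eta}$. Taking expectations by the tower property and summing over $0\le n\le N$ gives a failure probability of at most $\eta/32$.

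On the good event, the refresh schedule \eqref{eq:hp-refresh-size}--\eqref{eq:hp-refresh-size-n} gives $\norm{R_T(x_n)-Tx_n}_{\mathrm s}\le C\sqrt{p_n}\,\delta \le C\delta$, after enlarging $C_{\mathrm{hp}}$ in the schedule if needed. The difference schedule \eqref{eq:hp-difference-size} makes both $L_{\mathrm s}\norm{d_n}\sqrt{\Lambda/S}$ and $L_{\mathrm s}\norm{d_n}/S$ at most $p_n\delta/\ell_N$ up to constants. Hence
\[
\norm{\zeta_n}_{\mathrm s}\le C p_n\delta/\ell_N, \qquad \zeta_n:=D_T(x_n,x_{n-1})-(Tx_n-Tx_{n-1}).
\]

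\emph{Step 2 (uniform estimator bound).} Fix $n$ and let $r\le n$ be the last refresh index, with $r=0$ allowed. Then $e_n = e_r + \sum_{m=r+1}^n \zeta_m$, so
\[
\norm{e_n}_{\mathrm s}\le C\delta + \frac{C\delta}{\ell_N}\sum_{m=1}^N \frac{2}{m+1}\le C\delta .
\]
Rescaling $\delta$ by a constant, absorbed into $C_{\mathrm{hp}}$, yields \eqref{eq:hp-estimator-uniform}.

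\emph{Step 3 (refresh count and increment certificate).} The coins are independent Bernoulli$(p_n)$ variables, independent of the chain. Set $X_n:=\mathbf 1_{\cR_n}/p_n$. Then $\E[X_n]=1$, $0\le X_n\le (n+1)/2\le N$, and $\sum_n\operatorname{Var}(X_n)\le \sum_n 1/p_n\le N^2$. Bernstein's inequality at level $\eta/4$ gives \eqref{eq:hp-refresh-count} in the form $N + C(N\sqrt{\log(1/\eta)}+N\log(1/\eta))\le CN(1+\log(1/\eta))$.

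For \eqref{eq:hp-increment-certificate}, write $\Delta_n=\beta_ne_{n-1}-\beta_{n-1}e_{n-2}$ and split according to the type of block at iteration $n-1$.
\begin{itemize}
\item On a difference block, $\Delta_n=(\beta_n-\beta_{n-1})e_{n-2}+\beta_n\zeta_{n-1}$. This gives $n\norm{\Delta_n}_{\mathrm s}\le \delta/(n+1) + Cnp_{n-1}\delta/\ell_N \le C\delta$, and summing over $n$ contributes at most $C\delta N$.
\item On a refresh block, Step 2 gives $n\norm{\Delta_n}_{\mathrm s}\le 2n\delta\le C\delta\,\mathbf 1_{\cR_{n-1}}/p_{n-1}$. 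Summing these reduces to the refresh-count bound just proved.
\end{itemize}
A final union bound over the block events and the coin event, with total failure probability below $\eta$, completes the proof.

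The main obstacle is the conditioning bookkeeping in Step 1. Block lengths depend on $\norm{x_n-x_{n-1}}$ and on the coins, so one must check that each conditional tail statement holds at a stopping time given $\cF_\tau$ (including the coin). Only after that can the per-block failure probabilities be integrated and union-bounded. I would handle this by defining each bad event $E_n$ relative to $\cF_{\tau_n}$ and using $\PP(E_n)=\E[\PP(E_n\mid\cF_{\tau_n})]\le\eta'$.
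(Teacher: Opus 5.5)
Your proposal is correct and follows essentially the same route as the paper. Both arguments take a union bound over the $N+1$ conditionally controlled blocks, where your choice $\eta'=\eta/(32(N+1))$ makes the level match $\Lambda_{N,\eta}$. Both then sum the difference-block errors between refreshes using $\sum_n p_n\le 2\ell_N$, split $\Delta_n$ according to the block type at iteration $n-1$, and finish with a Bernstein bound on the weighted refresh count. The only difference is cosmetic: you apply the classical Bernstein inequality to the independent variables $\mathbf 1_{\cR_n}/p_n$, while the paper uses Bernstein--Freedman on the martingale differences $(m+1)(\mathbf 1_{\cR_m}-p_m)$, and these are equivalent here because the coins are independent with deterministic $p_n$.
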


\begin{proof}
By Lemma~\ref{lem:hp-page-blocks}, the block-size schedules and a union bound over the blocks used through time $N$ imply that, with probability at least $1 - \eta/2$, every refresh block satisfies
\begin{equation}\label{eq:hp-refresh-good}
\norm{R_T(x_n; \tau, S_{1,n}) - Tx_n}_{\mathrm{s}} \le \frac{\sqrt{p_n}\delta}{8},
\end{equation}
and every difference block satisfies
\begin{equation}\label{eq:hp-diff-good}
\norm{D_T(x_n, x_{n-1}; \tau, S_{2,n}) - (Tx_n - Tx_{n-1})}_{\mathrm{s}} \le \frac{p_n\delta}{8\ell_N}.
\end{equation}
The probability bounds in Lemma~\ref{lem:hp-page-blocks} are conditional on the sigma-algebra at the corresponding block start. We obtain the simultaneous event above by applying the tower property successively over the blocks. On this event, the estimator error remains uniformly bounded. At a refresh iteration, we have $\norm{e_n}_{\mathrm{s}} \le \delta/8$. At a difference iteration, we write
\begin{equation}
\zeta_n := D_T(x_n, x_{n-1}; \tau, S_{2,n}) - (Tx_n - Tx_{n-1}).
\end{equation}
The recursive update then implies
\begin{equation}
e_n = e_{n-1} + \zeta_n, \qquad \norm{\zeta_n}_{\mathrm{s}} \le \frac{p_n\delta}{8\ell_N}.
\end{equation}
Since $\sum_{n=1}^N p_n \le 2\ell_N$, the total error accumulated during any sequence of difference iterations is at most $\delta/4$. Together with the refresh-block bound, this proves \eqref{eq:hp-estimator-uniform}.

We next control the scaled increments. For $n \ge 2$, we define
\begin{equation}
\Delta_n := \beta_n e_{n-1} - \beta_{n-1}e_{n-2}.
\end{equation}
If iteration $n - 1$ uses a difference block, then
\begin{equation}
\Delta_n = (\beta_n - \beta_{n-1})e_{n-2} + \beta_n\zeta_{n-1}.
\end{equation}
By \eqref{eq:hp-estimator-uniform}, \eqref{eq:hp-diff-good}, and $\beta_n - \beta_{n-1} = 1/(n(n + 1))$, we have
\begin{equation}
n\norm{\Delta_n}_{\mathrm{s}} \le \frac{C\delta}{n} + \frac{C\delta}{\ell_N}.
\end{equation}
Summing over the recursive iterations gives a contribution of at most $C\delta N$. If iteration $n - 1$ instead uses a refresh block, then by \eqref{eq:hp-estimator-uniform},
\begin{equation}
n\norm{\Delta_n}_{\mathrm{s}} \le Cn\delta.
\end{equation}
We let $\tau_m$ denote the chain time immediately before the PAGE coin is tossed at iteration $m$. We have
\begin{equation}
\E\bigl[\mathbf 1_{\cR_m} \mid \cF_{\tau_m}\bigr] = p_m = \frac{2}{m + 1}, \qquad m \ge 1.
\end{equation}
Therefore, by the scalar Bernstein--Freedman inequality applied to the martingale differences $(m + 1)(\mathbf 1_{\cR_m} - p_m)$, with probability at least $1 - \eta/2$,
\begin{equation}
\sum_{m=1}^{N} (m + 1)\mathbf 1_{\cR_m} \le C N(1 + \log(1/\eta)).
\end{equation}
Combining the difference and refresh contributions proves \eqref{eq:hp-increment-certificate}. Since $p_m = 2/(m + 1)$, the same estimate also proves \eqref{eq:hp-refresh-count}.
\end{proof}

\subsection{High-probability residual bound}

\begin{lemma}[Pathwise displacement-level residual]\label{lem:pathwise-displacement}
Let $\beta_n = n/(n + 1)$, and consider the recursion
\begin{equation}
x_n = (1 - \beta_n)x_0 + \beta_n(Tx_{n-1} + U_n), \qquad n \ge 1.
\end{equation}
Let $U_0 := 0$ and $\Delta_n := \beta_nU_n - \beta_{n-1}U_{n-1}$. Suppose that, up to horizon $N$,
\begin{equation}
\sup_{0 \le n \le N}\norm{Tx_n - x_0} \le K, \qquad \norm{U_N} \le r, \qquad \sum_{n=1}^N n\norm{\Delta_n} \le A.
\end{equation}
Then,
\begin{equation}
\norm{x_N - Tx_N} \le \frac{K(1 + \log(N + 1))}{N + 1} + \frac{A}{N + 1} + r.
\end{equation}
\end{lemma}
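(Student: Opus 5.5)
The plan is to repeat the proof of Theorem~\ref{thm:displacement-level} pathwise, without taking expectations. This is possible because every step there was a deterministic triangle inequality followed by expectation. The only inputs it used were $\E\norm{Tx_n-x_0}\le\kappa_1$, the bounds $a_n$ on the increments, and $r$ on the final perturbation. These are now replaced by the deterministic quantities $K$, $\norm{\Delta_n}$ and $r$ along the realized path.

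First, set $d_n := x_n - x_{n-1}$ and subtract consecutive instances of the recursion to obtain
\begin{equation*}
d_n = (\beta_n-\beta_{n-1})(Tx_{n-1}-x_0) + \beta_{n-1}(Tx_{n-1}-Tx_{n-2}) + \Delta_n .
\end{equation*}
For $n=1$ the middle term is absent, since $\beta_0=0$ and $U_0=0$. Using $\beta_n-\beta_{n-1}=1/(n(n+1))$, the hypothesis $\norm{Tx_{n-1}-x_0}\le K$ for $n-1\le N$, and non-expansiveness of $T$, this gives
\begin{equation*}
\norm{d_n}\le \frac{K}{n(n+1)}+\frac{n-1}{n}\norm{d_{n-1}}+\norm{\Delta_n}.
\end{equation*}
Multiplying by $n$ turns this into the telescoping form
\begin{equation*}
n\norm{d_n}\le \frac{K}{n+1}+(n-1)\norm{d_{n-1}}+n\norm{\Delta_n}.
\end{equation*}
Summing from $1$ to $N$ then yields $N\norm{d_N}\le K\sum_{n=1}^N \frac{1}{n+1} + A \le K\log(N+1)+A$.

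Next, write the final residual as
\begin{equation*}
x_N-Tx_N=(1-\beta_N)(x_0-Tx_N)+\beta_N(Tx_{N-1}-Tx_N)+\beta_N U_N .
\end{equation*}
Bounding each term gives $\norm{x_N-Tx_N}\le K/(N+1)+\frac{N}{N+1}\norm{d_N}+r$, where we use $\beta_N\le1$ on the last term. Substituting the bound on $N\norm{d_N}$, we get $\frac{N}{N+1}\norm{d_N}\le (K\log(N+1)+A)/(N+1)$. Combining this with the $K/(N+1)$ term produces exactly $K(1+\log(N+1))/(N+1)+A/(N+1)+r$.

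There is no real obstacle here. The only points needing care are bookkeeping ones. The $n=1$ case must be handled by the convention $U_0=0$, $\beta_0=0$. The stability bound $K$ is used only at indices $0,\dots,N$, which the hypothesis covers. The harmonic sum $\sum_{n=1}^N 1/(n+1)$ must be bounded by $\log(N+1)$, which follows from the integral comparison $\sum_{n=2}^{N+1}1/n\le\int_1^{N+1}dt/t$.
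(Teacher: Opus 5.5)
Your proposal is correct and matches the paper's proof: it uses the same recursion for $d_n = x_n - x_{n-1}$, the same multiplication by $n$ and telescoping to get $N\norm{d_N} \le K\log(N+1) + A$, and the same three-term split of $x_N - Tx_N$. Your explicit treatment of the $n=1$ case via $\beta_0 = 0$ and of the harmonic-sum bound $\sum_{n=1}^N 1/(n+1) \le \log(N+1)$ only fills in details the paper leaves implicit.
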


\begin{proof}
Let $d_n := x_n - x_{n-1}$. Subtracting two consecutive inexact Halpern updates gives
\begin{equation}
d_n = (\beta_n - \beta_{n-1})(Tx_{n-1} - x_0) + \beta_{n-1}(Tx_{n-1} - Tx_{n-2}) + \Delta_n.
\end{equation}
By non-expansiveness of $T$ and the assumed pathwise bound,
\begin{equation}
\norm{d_n} \le \frac{K}{n(n + 1)} + \frac{n - 1}{n}\norm{d_{n-1}} + \norm{\Delta_n}.
\end{equation}
Multiplying by $n$ and summing from $1$ to $N$ yields
\begin{equation}
N\norm{d_N} \le K\sum_{n=1}^N \frac{1}{n + 1} + \sum_{n=1}^N n\norm{\Delta_n} \le K\log(N + 1) + A.
\end{equation}
The final residual satisfies
\begin{equation}
\norm{x_N - Tx_N} \le \frac{K}{N + 1} + \frac{N}{N + 1}\norm{d_N} + r.
\end{equation}
Substituting the displacement bound proves the result.
\end{proof}

\begin{condition}[Pathwise stability]\label{cond:hp-stability}
For the generated iterates up to the target horizon $N$, there is a deterministic constant $K < \infty$ such that
\begin{equation}
\sup_{0 \le n \le N}\norm{Tx_n - x_0} \le K
\end{equation}
almost surely.
\end{condition}

\begin{theorem}[High-probability Markovian PAGE-Halpern]\label{thm:hp-page}
Suppose Assumptions~\ref{ass:nonexpansive} and \ref{ass:ergodic}, and Conditions~\ref{cond:hp-page} and \ref{cond:hp-stability} hold. Run the Banach PAGE-Halpern recursion \eqref{eq:page-halpern-banach-1}--\eqref{eq:page-halpern-banach-2} with the block-size schedules of Lemma~\ref{lem:hp-page-schedule}. Then, with probability at least $1 - \eta$,
\begin{equation}\label{eq:hp-residual-bound}
\norm{x_N - Tx_N} \le \frac{K(1 + \log(N + 1))}{N + 1} + C\alpha_{\mathrm{s}}\delta(1 + \log(1/\eta)).
\end{equation}
Consequently, the choices
\begin{equation}
N = \Theta\left(\frac{K}{\epsilon}\log\frac{K}{\epsilon}\right), \qquad \delta = \Theta\left(\frac{\epsilon}{\alpha_{\mathrm{s}}(1 + \log(1/\eta))}\right)
\end{equation}
ensure $\norm{x_N - Tx_N} \le \epsilon$ with probability at least $1 - \eta$. On the same event, the number of Markovian samples used through time $N$ is
\begin{equation}\label{eq:hp-complexity}
\tilde O\left(\frac{\alpha_{\mathrm{s}}^2(1 + \log(1/\eta))^3(\kappa_{\mathrm{s}} + \log(1/\eta))\bigl(\sigma_{\mathrm{s}}^2K + L_{\mathrm{s}}^2K^3\bigr)}{\epsilon^3}\right).
\end{equation}
\end{theorem}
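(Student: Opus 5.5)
The plan is to combine the pathwise displacement bound of Lemma~\ref{lem:pathwise-displacement} with the high-probability estimator certificate of Lemma~\ref{lem:hp-page-schedule}, and then convert the certificate from the smooth norm to the working norm.

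\textbf{Step 1: identify the perturbations.} As in the proof of Theorem~\ref{thm:page-halpern-banach}, I write $x_n=(1-\beta_n)x_0+\beta_n(Tx_{n-1}+e_{n-1})$. This is the recursion of Lemma~\ref{lem:pathwise-displacement} with $U_n=e_{n-1}$, and the scaled increments are $\Delta_n=\beta_n e_{n-1}-\beta_{n-1}e_{n-2}$ with $e_{-1}=0$. Condition~\ref{cond:hp-stability} supplies the deterministic bound $K$ required by Lemma~\ref{lem:pathwise-displacement}.

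\textbf{Step 2: residual bound on the good event.} Let $\mathcal{E}$ be the probability-$(1-\eta)$ event of Lemma~\ref{lem:hp-page-schedule}. On $\mathcal{E}$, the comparison $\norm{z}\le\alpha_{\mathrm{s}}\norm{z}_{\mathrm{s}}$ together with \eqref{eq:hp-estimator-uniform} gives $r=\alpha_{\mathrm{s}}\delta$. Likewise \eqref{eq:hp-increment-certificate} gives $A=C\alpha_{\mathrm{s}}\delta N(1+\log(1/\eta))$. Substituting into Lemma~\ref{lem:pathwise-displacement}, the term $A/(N+1)$ is at most $C\alpha_{\mathrm{s}}\delta(1+\log(1/\eta))$, and $r$ is absorbed into the same term. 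This yields \eqref{eq:hp-residual-bound}. The stated choices of $N$ and $\delta$ then make each of the two terms at most $\epsilon/2$.

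\textbf{Step 3: sample count on $\mathcal{E}$.} I bound the block lengths pathwise.
\begin{itemize}
\item \emph{Refresh blocks.} By \eqref{eq:hp-refresh-count}, the refresh cost is $\sum_n \mathbf 1_{\cR_n}S_{1,n}\lesssim (\Lambda_{N,\eta}\sigma_{\mathrm{s}}^2/\delta^2)\,N(1+\log(1/\eta))$, plus a lower-order $\sigma_{\mathrm{s}}/\delta$ term. The initialization block $S_{1,0}$ is dominated by this sum.
\item \emph{Difference blocks.} I need a pathwise bound on $\norm{d_n}$. The proof of Lemma~\ref{lem:pathwise-displacement}, run up to each $n\le N$, gives $n\norm{d_n}\le K\log(n+1)+\sum_{m\le n} m\norm{\Delta_m}$. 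Bounding the sum by $A$ from Step~2 gives $\norm{d_n}\lesssim (K\ell_N + C\alpha_{\mathrm{s}}\delta N(1+\log(1/\eta)))/n$. Since $p_n^{-2}\asymp n^2$, the variance part of the difference cost is at most
\begin{equation*}
\frac{\Lambda_{N,\eta}L_{\mathrm{s}}^2\ell_N^2}{\delta^2}\sum_{n\le N} n^2\norm{d_n}^2 \lesssim \frac{\Lambda_{N,\eta}L_{\mathrm{s}}^2\ell_N^2}{\delta^2}\,N\bigl(K^2\ell_N^2+\alpha_{\mathrm{s}}^2\delta^2N^2(1+\log(1/\eta))^2\bigr).
\end{equation*}
The linear term $L_{\mathrm{s}}\norm{d_n}\ell_N/(p_n\delta)$ and the unit term are lower order.
\end{itemize}

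\textbf{Step 4: substitute.} With $N=\tilde\Theta(K/\epsilon)$ and $\delta=\Theta(\epsilon/(\alpha_{\mathrm{s}}(1+\log(1/\eta))))$ we have $\delta N\alpha_{\mathrm{s}}(1+\log(1/\eta))=\tilde O(K)$. So both parts of the difference cost are $\tilde O(\Lambda L_{\mathrm{s}}^2K^3\alpha_{\mathrm{s}}^2(1+\log(1/\eta))^2/\epsilon^3)$. The refresh cost is $\tilde O(\Lambda\sigma_{\mathrm{s}}^2K\alpha_{\mathrm{s}}^2(1+\log(1/\eta))^3/\epsilon^3)$. Using $\Lambda_{N,\eta}=\tilde O(\kappa_{\mathrm{s}}+\log(1/\eta))$ and bounding the powers of $(1+\log(1/\eta))$ by the cube gives \eqref{eq:hp-complexity}.

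\textbf{Main obstacle.} The hard part is the pathwise displacement control in Step~3: the bound on $\norm{d_n}$ must hold simultaneously for all $n\le N$ on the same event $\mathcal{E}$. Otherwise the adaptive block lengths, which depend on $\norm{x_n-x_{n-1}}$, could not be summed pathwise. I would handle this by observing that \eqref{eq:hp-increment-certificate} bounds the partial sums uniformly, because every term in the sum is nonnegative, and that Condition~\ref{cond:hp-stability} is uniform in $n$.
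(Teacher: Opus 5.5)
Your proposal is correct and follows essentially the same route as the paper. You set $U_n = e_{n-1}$, convert the smooth-norm certificates of Lemma~\ref{lem:hp-page-schedule} to the working norm with the factor $\alpha_{\mathrm{s}}$, and apply Lemma~\ref{lem:pathwise-displacement} under Condition~\ref{cond:hp-stability}; you then count samples on the same event using the pathwise bound $n\norm{x_n - x_{n-1}} = \tilde O(K)$, which holds uniformly in $n \le N$. Your explicit tracking of the $\ell_N$ and $\Lambda_{N,\eta}$ factors, and your remark that nonnegativity makes the partial-sum bound uniform in $n$, only spell out steps the paper states directly.
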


\begin{proof}
The recursion is an inexact Halpern recursion with
\begin{equation}
U_n = e_{n-1} = v_{n-1} - Tx_{n-1}.
\end{equation}
By Lemma~\ref{lem:hp-page-schedule}, with probability at least $1 - \eta$, we have
\begin{equation}
\norm{U_N}_{\mathrm{s}} \le \delta
\end{equation}
and
\begin{equation}
\sum_{n=1}^N n\norm{\beta_nU_n - \beta_{n-1}U_{n-1}}_{\mathrm{s}} \le C\delta N(1 + \log(1/\eta)).
\end{equation}
By Condition~\ref{cond:hp-page}, we may convert both bounds to the working norm with the factor $\alpha_{\mathrm{s}}$. Applying Lemma~\ref{lem:pathwise-displacement} and Condition~\ref{cond:hp-stability} proves \eqref{eq:hp-residual-bound}. It remains to count the samples used on the same event. We let $d_n := x_n - x_{n-1}$. The proof of Lemma~\ref{lem:pathwise-displacement} also shows that, for every $n \le N$,
\begin{equation}
n\norm{d_n} \le K(1 + \log(N + 1)) + C\alpha_{\mathrm{s}}\delta N(1 + \log(1/\eta)).
\end{equation}
Under the stated choices of $\delta$ and $N$, the right-hand side is $\tilde O(K)$. Therefore,
\begin{equation}\label{eq:hp-displacement-sum}
\sum_{n=1}^N \frac{\norm{x_n - x_{n-1}}^2}{p_n^2} = \tilde O(K^2N), \qquad \sum_{n=1}^N \frac{\norm{x_n - x_{n-1}}}{p_n} = \tilde O(KN).
\end{equation}
By \eqref{eq:hp-refresh-count}, the refresh cost satisfies
\begin{equation}
\sum_{n=1}^N \mathbf 1_{\cR_n}S_{1,n} \le \tilde O\left(\frac{\Lambda_{N,\eta}\sigma_{\mathrm{s}}^2N(1 + \log(1/\eta))}{\delta^2}\right).
\end{equation}
The terms in \eqref{eq:hp-refresh-size-n} that are proportional to $\delta^{-1}$ contribute at most
\begin{equation}
\tilde O\left(\frac{\sigma_{\mathrm{s}}N(1 + \log(1/\eta))}{\delta}\right).
\end{equation}
The initialization cost in \eqref{eq:hp-refresh-size} is bounded by the same expression when $N \ge 1$. For the difference blocks, by \eqref{eq:hp-difference-size} and \eqref{eq:hp-displacement-sum}, we have
\begin{equation}
\sum_{n=1}^N S_{2,n} \le \tilde O\left(\frac{\Lambda_{N,\eta}L_{\mathrm{s}}^2K^2N}{\delta^2} + \frac{L_{\mathrm{s}}KN}{\delta} + N\right).
\end{equation}
Substituting $N = \tilde O(K/\epsilon)$ and $\delta = \Theta\bigl(\epsilon/(\alpha_{\mathrm{s}}(1 + \log(1/\eta)))\bigr)$ proves \eqref{eq:hp-complexity}. The terms proportional to $\delta^{-1}$ and $N$ are lower order in $\epsilon$.
\end{proof}

\section{Experiments}\label{sec:experiments}

We use two controlled examples built from the same eight-state Markov chain and the same transition noise. We first prescribe a differential value function $\phi$ and construct the reward so that $\phi$ is an exact solution of the policy-evaluation problem. We then add two actions and a maximization while leaving the state process unchanged. The first Bellman operator is linear and non-expansive in the Euclidean norm. The second is nonlinear and is non-expansive in the sup norm, but not in any inner-product norm. We can therefore study the Hilbert and Banach methods under the same sampling process, and we can compute the fixed-point residuals without introducing a separate numerical approximation of the solutions.

We compare the block method of Section~\ref{sec:baseline} with the corresponding variance-reduced constructions of Sections~\ref{sec:hilbert} and~\ref{sec:banach} within each problem. First, we fix the number of observed transitions and compare the residuals. Second, we fix a residual target and compare the number of transitions required to reach it.

\subsection{Hilbert-space policy evaluation}

We arrange the states $\{0,\ldots,7\}$ on a cycle. The resulting process is a lazy nearest-neighbor random walk. At each step, the state remains unchanged with probability $1 - 2q$ and moves to either neighboring state with probability $q$:
\begin{equation}
P(s,s) = 1 - 2q, \qquad P(s,s - 1 \bmod 8) = P(s,s + 1 \bmod 8) = q.
\end{equation}
Smaller values of $q$ make the trajectory remain near its current state for longer. Since the transition matrix is symmetric, its stationary distribution is uniform. We choose the alternating differential value function $\phi(s) = (-1)^s$. The transition reward is
\begin{equation}
r(s,s') = \phi(s) - \phi(s') + 0.2\bigl(\mathbf 1\{s' = s + 1 \bmod 8\} - \mathbf 1\{s' = s - 1 \bmod 8\}\bigr).
\end{equation}
The first term is a potential difference. Along any realized trajectory, these differences telescope according to
\begin{equation}
\sum_{t=0}^{m-1}\bigl(\phi(S_t) - \phi(S_{t+1})\bigr) = \phi(S_0) - \phi(S_m).
\end{equation}
Thus, the relative return associated with the initial state is described by $\phi$. The second term does not alter this mean behavior. It is $0.2$ for a clockwise transition, $-0.2$ for a counterclockwise transition, and zero for a self-transition. The clockwise and counterclockwise moves have equal probability, so this term has conditional mean zero. We include it to ensure that individual transition samples remain noisy even at the exact solution. More precisely, the mean reward satisfies
\begin{equation}
\bar r(s) := \E\bigl[r(S_t,S_{t+1}) \mid S_t = s\bigr] = \phi(s) - (P\phi)(s).
\end{equation}
Let $\pi$ denote the uniform stationary distribution. Since $\pi^\top P = \pi^\top$, we have
\begin{equation}
\sum_s \pi(s)\bar r(s) = \pi^\top(\phi - P\phi) = \pi^\top\phi - (\pi^\top P)\phi = 0.
\end{equation}
Equivalently, under stationarity, $S_t$ and $S_{t+1}$ have the same marginal distribution, so the potential difference has mean zero. Thus, the average reward is zero. The differential Bellman equation is \citep{tsitsiklis1999average,wan2021learning}
\begin{equation}
h = \bar r + Ph.
\end{equation}
Substitution shows directly that $h^* = \phi$ is a solution, since $\bar r + P\phi = \phi$. We define the Bellman operator by $Bh := \bar r + Ph$. The matrix $P$ averages the values at the current state and its two neighbors. Since this averaging matrix is symmetric, all of its eigenvalues lie in $[-1,1]$. Consequently,
\begin{equation}
\norm{Bh - Bz}_2 = \norm{P(h-z)}_2 \le \norm{h-z}_2,
\end{equation}
so this problem falls within the Hilbert-space theory of Section~\ref{sec:hilbert}. The algorithm observes the continuing transition process $Y_t = (S_t,S_{t+1})$. For a candidate value vector $h$, the temporal-difference error associated with one transition is
\begin{equation}
\delta_h(s,s') := r(s,s') + h(s') - h(s).
\end{equation}
For $e_s$ denoting the $s$th coordinate vector, we define the single-transition operator evaluation
\begin{equation}\label{eq:experimental-oracle}
H_{\mathrm{eval}}(h,(s,s')) = h + 8e_s\delta_h(s,s').
\end{equation}
This operator samples the Bellman equation only at the observed state $s$. The multiplier eight is an importance weight rather than an algorithmic stepsize. Each state appears with stationary probability $1/8$, so the multiplier converts the randomly selected coordinate correction into the full Bellman correction in expectation. Indeed,
\begin{equation}
\E_\pi\left[8e_{S_t}\delta_h(S_t,S_{t+1})\right] = \bar r + Ph - h,
\end{equation}
and hence $\E_\pi[H_{\mathrm{eval}}(h,Y)] = Bh$. Once a transition $(s,s')$ has been observed, the expression in \eqref{eq:experimental-oracle} can be evaluated at two query points. The same-state difference oracle required by PAGE is therefore available without observing an additional transition.

\subsection{Banach-space average-reward control}

The second example retains the state process and transition noise from the policy-evaluation problem. We introduce two actions $\cA = \{0,1\}$, but the selected action does not affect the transition matrix. The behavior policy selects each action with probability $1/2$, so the stationary state-action distribution is uniform. Action one incurs a penalty $\Delta = 0.5$, and the reward is
\begin{equation}
r(s,a,s') = \phi(s) - \phi(s') - \Delta a + 0.2\bigl(\mathbf 1\{s' = s + 1 \bmod 8\} - \mathbf 1\{s' = s - 1 \bmod 8\}\bigr).
\end{equation}
Action zero is therefore optimal. The control decision is deliberately simple. The purpose of introducing the actions is to place the maximization operator inside the Bellman map while keeping the Markovian sampling problem unchanged. The mean reward is $\bar r(s,a) = \phi(s) - (P\phi)(s) - \Delta a$, the average-reward optimality operator is
\begin{equation}\label{eq:control-bellman}
(\mathcal TQ)(s,a) = \bar r(s,a) + \sum_{s'}P(s,s')\max_{b \in \cA}Q(s',b),
\end{equation}
and the optimal average reward is zero. We have the fixed point
\begin{equation}
Q^*(s,a) = \phi(s) - \Delta a,
\end{equation}
since $\max_b Q^*(s',b) = Q^*(s',0) = \phi(s')$. Therefore,
\begin{equation}
(\mathcal TQ^*)(s,a) = \phi(s) - (P\phi)(s) - \Delta a + (P\phi)(s) = Q^*(s,a).
\end{equation}
Although the optimal action is simple, the operator $\mathcal T$ is globally nonlinear because the maximizing action changes with the input $Q$. For any $Q,Z \in \R^{16}$, we have
\begin{equation}
\begin{aligned}
\left\lvert\max_b Q(s,b) - \max_b Z(s,b)\right\rvert
&\le \max_b\left\lvert Q(s,b) - Z(s,b)\right\rvert \le \norm{Q - Z}_\infty,\\
\norm{\mathcal TQ - \mathcal TZ}_\infty
&\le \max_s \sum_{s'}P(s,s')\norm{Q - Z}_\infty = \norm{Q - Z}_\infty.
\end{aligned}
\end{equation}
Thus, $\mathcal T$ is non-expansive in the sup norm, whereas no norm induced by an inner product makes $\mathcal T$ non-expansive. The observed process is now $Y_t = (S_t,A_t,S_{t+1})$. For a candidate action-value vector $Q$, we define the sampled optimality error by
\begin{equation}
\delta_Q(s,a,s') := r(s,a,s') + \max_{b \in \cA}Q(s',b) - Q(s,a).
\end{equation}
For $e_{(s,a)}$ denoting the corresponding coordinate vector in $\R^{16}$, the single-transition evaluation is
\begin{equation}\label{eq:experimental-control-oracle}
H_{\mathrm{ctl}}(Q,(s,a,s')) = Q + 16e_{(s,a)}\delta_Q(s,a,s').
\end{equation}
The multiplier sixteen is again an importance weight. Each of the sixteen state-action pairs has stationary probability $1/16$, and hence $\E_\pi[H_{\mathrm{ctl}}(Q,Y)] = \mathcal TQ$. Once $(s,a,s')$ has been observed, the sampled optimality error can be evaluated at two different vectors $Q$ and $Z$ without another environment transition. The PAGE difference evaluation is therefore available in the same online trajectory.

\subsection{Methods and evaluation}

We use $q = 0.24$ for the fast-mixing chain and $q = 0.04$ for the slow-mixing chain. The corresponding second-largest eigenvalue moduli are $0.859$ and $0.977$. We initialize the policy-evaluation iterate at $h_0 = 0$ and the control iterate at $Q_0 = 0$. We generate twenty independent trajectories for each problem and each chain. Within every repetition, the block and PAGE methods use the same realized trajectory. Each Markov chain is initialized at $S_0 = 0$ rather than from its stationary distribution.

For the block method, we use $k_n = \lceil 0.01n^4\rceil$ and the burn-in schedule $b_n = \lceil 2\log(n+1)/\log(1/\rho)\rceil$ with the exact eigenvalue modulus in place of $\rho$. This optional burn-in reduces finite-horizon sensitivity to the initial state and block boundaries. For both PAGE methods, we use $p_n = \min\{1,2/(n+1)\}$ and the fixed estimator scale $\delta = 0.02$. The problem-dependent constants in the PAGE block schedules are replaced by a common multiplier $c = 0.15$. Thus,

\begin{equation}
S_{1,0} = \left\lceil \frac{c}{\delta^2}\right\rceil, \qquad S_{1,n} = \left\lceil \frac{c}{p_n\delta^2}\right\rceil,
\end{equation}
and
\begin{equation}
S_{2,n} = \left\lceil \max\left\{\frac{cd_n^2}{p_n^2\delta^2},\frac{cd_n}{p_n^{3/2}\delta},1\right\}\right\rceil.
\end{equation}
For policy evaluation, $d_n = \norm{h_n - h_{n-1}}_2$, and the Hilbert method uses $L^* = 2$. For control, $d_n = \norm{Q_n - Q_{n-1}}_\infty$, and the Banach method applies the estimator of $\mathcal TQ_n$ directly. The same parameters are fixed for both chains and all transition budgets.

We evaluate all methods at the common budgets
\begin{equation}
B_j = 500 \cdot 2^j, \qquad j = 0,\ldots,12.
\end{equation}
At budget $B_j$, we report the last iterate available after at most $B_j$ transitions. If the method is partway through a block at that time, the incomplete block is not used and the preceding iterate is reported. Each method consumes one uninterrupted trajectory. Every transition is included in the budget, including the transitions discarded during the optional burn-in periods of the baseline block method. A PAGE difference block evaluates the oracle at two query points for each observed transition. We use the number of environment transitions as the primary cost. For policy evaluation, we report
\begin{equation}
\frac{\norm{h - \bar r - Ph}_2}{\norm{\bar r}_2}.
\end{equation}
For control, we report
\begin{equation}
\frac{\norm{Q - \mathcal TQ}_\infty}{\norm{\bar r}_\infty}.
\end{equation}
Because a small number of trajectories can produce unusually large residuals, Figure~\ref{fig:average-reward} reports the median residual at each common budget and shades the interquartile range from the 25th to the 75th percentile. We also record the first completed iterate whose residual is at most $\epsilon$ for $\epsilon \in \{0.20,0.10,0.05\}$. These transition counts are reported in Table~\ref{tab:average-reward-hitting}. If a run does not reach a target within $B_{12} = 2{,}048{,}000$ transitions, we record it as unsuccessful at that target.

\begin{figure}[t]
\centering
\includegraphics[width=\textwidth]{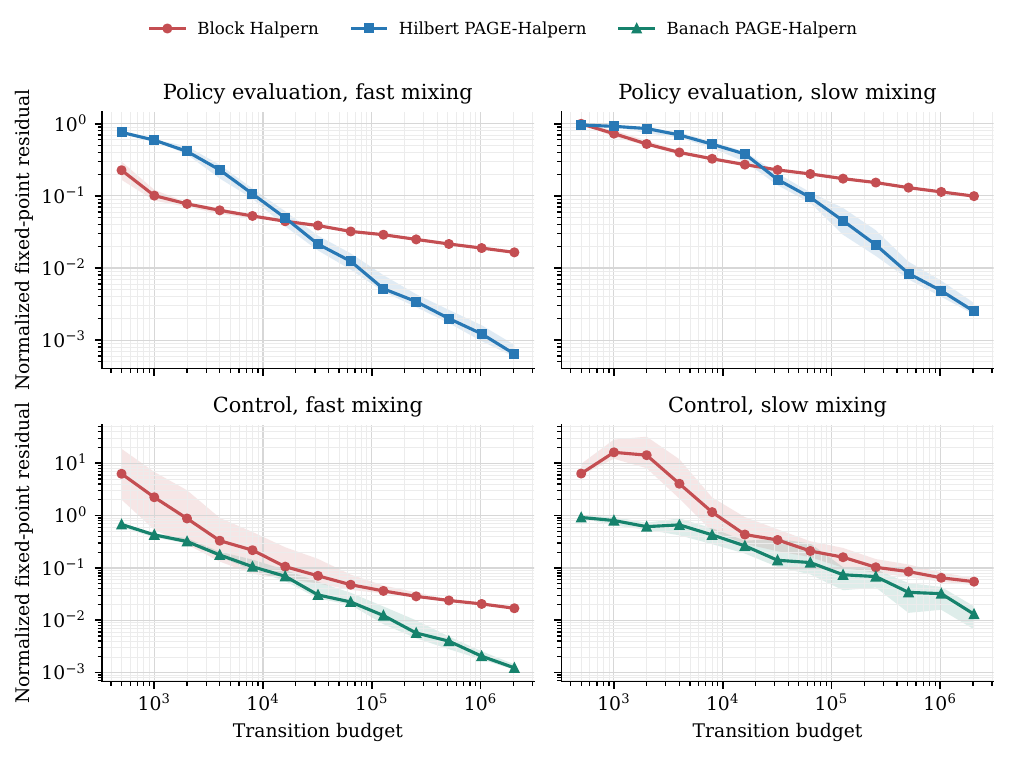}
\caption{Normalized fixed-point residual at common transition budgets. The top row evaluates the Hilbert method on policy evaluation, and the bottom row evaluates the Banach method on nonlinear average-reward control. Each PAGE method is compared with the block method on the same realized trajectories. Lines show medians over twenty trajectories and shaded regions show interquartile ranges.}
\label{fig:average-reward}
\end{figure}

\begin{table}[t]
\centering
\small
\caption{Median number of observed transitions required to reach a common residual target. The counts in parentheses give the number of repetitions that reach the target when fewer than twenty do so. The notation $>2{,}048{,}000$ means that the median transition count exceeds the available budget.}
\label{tab:average-reward-hitting}
\begin{tabular}{lllrrr}
\toprule
Problem & Mixing & Method & $\epsilon = 0.20$ & $\epsilon = 0.10$ & $\epsilon = 0.05$ \\
\midrule
\multirow{4}{*}{Evaluation}
& \multirow{2}{*}{Fast}
& Block Halpern & $585$ & $1{,}238$ & $8{,}839$ \\
& & Hilbert PAGE-Halpern & $4{,}567$ & $9{,}001$ & $14{,}684$ \\
\cmidrule(lr){2-6}
& \multirow{2}{*}{Slow}
& Block Halpern & $68{,}885$ & $1{,}923{,}959$ & $>2{,}048{,}000\ (0/20)$ \\
& & Hilbert PAGE-Halpern & $25{,}705$ & $62{,}611$ & $107{,}374$ \\
\midrule
\multirow{4}{*}{Control}
& \multirow{2}{*}{Fast}
& Block Halpern & $6{,}205$ & $21{,}021$ & $63{,}081$ \\
& & Banach PAGE-Halpern & $3{,}302$ & $8{,}606$ & $20{,}749$ \\
\cmidrule(lr){2-6}
& \multirow{2}{*}{Slow}
& Block Halpern & $51{,}481\ (19/20)$ & $289{,}018\ (17/20)$ & $>2{,}048{,}000\ (8/20)$ \\
& & Banach PAGE-Halpern & $13{,}430$ & $46{,}415$ & $80{,}000$ \\
\bottomrule
\end{tabular}
\end{table}

\subsection{Results}

On the policy-evaluation problem, the short initial blocks of block Halpern reach the three displayed targets sooner on the faster chain. The recursive estimator becomes more effective after its initialization cost is amortized. At $B = 128{,}000$, the median Hilbert PAGE residual is $0.00515$, compared with $0.0290$ for block Halpern. On the slower chain, Hilbert PAGE reaches residuals $0.20$, $0.10$, and $0.05$ after median budgets of $25{,}705$, $62{,}611$, and $107{,}374$. The corresponding block method requires $68{,}885$ transitions for residual $0.20$, nearly the entire maximum budget for residual $0.10$, and never reaches residual $0.05$.

The Banach experiment gives the same qualitative separation on an operator for which the Hilbert theory is unavailable. On the faster control chain, Banach PAGE reaches the three targets using between one half and one third of the transitions required by the block method. At $B = 128{,}000$, their median residuals are $0.0123$ and $0.0364$. On the slower control chain, Banach PAGE reaches residual $0.05$ after a median of $80{,}000$ transitions and does so in every repetition. Block Halpern reaches this target in only eight of twenty repetitions within the maximum budget.

Together, these experiments show that both PAGE constructions operate on one continuing trajectory and improve transition efficiency in the geometries covered by their respective theorems. As the target becomes more demanding or the chain mixes more slowly, same-state differences become increasingly useful because repeatedly reconstructing each operator value is substantially more expensive.

\section{Conclusion}

We developed a finite-sample theory for stochastic Halpern iteration driven by one continuing Markovian trajectory. A direct block method gives an expected $O(\log N/N)$ last-iterate residual with $\tilde O(\epsilon^{-5})$ samples. Under same-state mean-square regularity, recursive variance estimation lowers the leading accuracy dependence to $O(\epsilon^{-3})$ in Hilbert space and $\tilde O(\epsilon^{-3})$ in a general finite-dimensional Banach space. The Banach proof uses the increments of the anchored Halpern displacement in the norm where $T$ is non-expansive instead of invoking a hidden inner-product geometry. Smooth auxiliary norms further give high-probability guarantees for nonsmooth sup and block-sup applications.

The results also make the required information structure explicit. Stability controls the region visited by the recursion, the Poisson equation handles dependence and conditional Markovian bias, and the PAGE improvement requires that one sampled Markov state can be reused at two query points. The average-reward experiments show that these requirements describe implementable online procedures rather than a stationary-oracle abstraction. Within this model, Halpern anchoring and same-state variance reduction provide last-iterate guarantees in the Banach geometries natural to reinforcement learning.

\newpage
\bibliography{refs}
\end{document}

%% file: packages.tex
\usepackage[T1]{fontenc}
\usepackage[utf8]{inputenc}
\usepackage{lmodern}
\usepackage{amsmath,amssymb,amsfonts,amsthm,mathtools}
\usepackage{bm,bbm,mathrsfs}
\usepackage{microtype}
\usepackage{xcolor}
\usepackage{graphicx}
\usepackage{booktabs,tabularx,array,makecell,multirow}
\usepackage{enumitem}
\usepackage{algorithm,algorithmic}
\usepackage{caption,subcaption}
\usepackage{wrapfig}
\usepackage{adjustbox}
\usepackage{nicefrac}
\usepackage{siunitx}
\usepackage{tikz}
\usetikzlibrary{positioning,arrows.meta,calc,matrix}
\usepackage{pgfplots}
\usepackage{pgfplotstable}
\pgfplotsset{compat=1.18}

\usepackage{hyperref}
\hypersetup{colorlinks=true,linkcolor=blue!50!black,citecolor=blue!50!black,urlcolor=blue!50!black}
\usepackage{cleveref}

\setlist[itemize]{topsep=3pt,itemsep=2pt,parsep=0pt,leftmargin=1.5em}
\setlist[enumerate]{topsep=3pt,itemsep=2pt,parsep=0pt,leftmargin=1.7em}

\newcolumntype{P}[1]{>{\raggedright\arraybackslash}p{#1}}
\newcolumntype{Y}{>{\raggedright\arraybackslash}X}

\makeatletter
\@ifundefined{theorem}{%
  \theoremstyle{plain}
  \newtheorem{theorem}{Theorem}[section]
}{}
\@ifundefined{proposition}{\newtheorem{proposition}[theorem]{Proposition}}{}
\@ifundefined{lemma}{\newtheorem{lemma}[theorem]{Lemma}}{}
\@ifundefined{corollary}{}{}
\@ifundefined{assumption}{%
  \theoremstyle{definition}
  \newtheorem{assumption}[theorem]{Assumption}
}{}
\@ifundefined{definition}{%
  \theoremstyle{definition}
  
}{}
\@ifundefined{example}{%
  \theoremstyle{definition}
  
}{}
\@ifundefined{remark}{%
  \theoremstyle{remark}
  \newtheorem{remark}[theorem]{Remark}
}{}
\@ifundefined{condition}{%
  \theoremstyle{definition}
  \newtheorem{condition}[theorem]{Condition}
}{}
\makeatother

\crefname{assumption}{Assumption}{Assumptions}
\Crefname{assumption}{Assumption}{Assumptions}

\providecommand{\R}{\mathbb{R}}
\providecommand{\bR}{\mathbb{R}}

\providecommand{\N}{\mathbb{N}}

\providecommand{\E}{\mathbb{E}}
\providecommand{\bE}{\mathbb{E}}

\providecommand{\PP}{\mathbb{P}}

\providecommand{\cA}{\mathcal{A}}

\providecommand{\cD}{\mathcal{D}}

\providecommand{\cF}{\mathcal{F}}

\providecommand{\cR}{\mathcal{R}}

\providecommand{\cY}{\mathcal{Y}}

\providecommand{\dist}{\operatorname{dist}}

\providecommand{\Fix}{\operatorname{Fix}}

\makeatletter
\@ifundefined{ceil}{}{}
\@ifundefined{floor}{}{}
\@ifundefined{paren}{}{}
\@ifundefined{bracket}{}{}
\@ifundefined{set}{}{}
\@ifundefined{abs}{}{}
\@ifundefined{norm}{\DeclarePairedDelimiter{\norm}{\lVert}{\rVert}}{}
\@ifundefined{ip}{\DeclarePairedDelimiterX{\ip}[2]{\langle}{\rangle}{#1,\,#2}}{}
\@ifundefined{gendivx}{\DeclarePairedDelimiterX{\gendivx}[2]{(}{)}{#1\;\delimsize\|\;#2}}{}
\makeatother

